\documentclass[11pt]{article}
\usepackage[margin=1in]{geometry}
\usepackage{amsmath,amssymb,amsthm,mathtools}
\usepackage{booktabs,tabularx,array,multirow}
\usepackage{graphicx}
\usepackage{enumitem}
\usepackage{float}
\usepackage{placeins}
\usepackage{microtype}
\usepackage{xcolor}
\usepackage{algorithm}
\usepackage{algpseudocode}
\usepackage{authblk}
\usepackage[numbers,compress]{natbib}
\usepackage[hidelinks]{hyperref}
\usepackage{xr-hyper}
\usepackage{doi}

\hypersetup{
  pdftitle={Shared Physics Responses Recover Hidden Rankings in Neural Operator Libraries},
  pdfauthor={Hanbing Liang; Fujun Liu},
  pdfsubject={Physics-based ranking and selection in finite neural-operator libraries},
  pdfkeywords={neural operators, partial differential equations, model selection, ranking, certification}
}


\newtheorem{theorem}{Theorem}
\newtheorem{proposition}{Proposition}

\theoremstyle{definition}

\theoremstyle{remark}

\newcommand{\Hcal}{\mathcal H}
\newcommand{\Dcal}{\mathcal D}
\newcommand{\Acal}{\mathcal A}

\newcommand{\Rcal}{\mathcal R}
\newcommand{\norm}[1]{\left\lVert #1\right\rVert}

\newcommand{\metricnorm}[1]{\left\lVert #1\right\rVert_M}
\newcommand{\metricip}[2]{\left\langle #1,#2\right\rangle_M}

\title{Shared Physics Responses Recover Hidden Rankings in Neural Operator Libraries}
\author[1]{Hanbing Liang}
\author[1,*]{Fujun Liu}
\affil[1]{Nanophotonics and Biophotonics Key Laboratory of Jilin Province,
School of Physics, Changchun University of Science and Technology,
Changchun 130022, P.R. China}
\affil[*]{Correspondence: \href{mailto:fjliu@cust.edu.cn}{fjliu@cust.edu.cn}}
\date{}

\begin{document}
\maketitle

\begin{abstract}
Selecting the optimal neural-operator prediction during deployment is challenging when high-fidelity reference solutions are unavailable. We demonstrate that under a squared Hilbert-space loss, ranking a finite model library depends strictly on the low-dimensional span of candidate differences, allowing us to score all models simultaneously using a single anchor-based linearized response of the governing equation. This shared physical diagnostic accurately recovered over 99.6\% of pairwise preferences and 99.0\% of optimal checkpoints across diverse Fourier and convolutional operator libraries for fluid, reaction-diffusion, and wave dynamics. Furthermore, the corrected physical proxy frequently outperformed the best individual candidates, and we establish computable sufficient conditions that rigorously certify exact decisions for strongly monotone discretizations. By exploiting the local dynamical response rather than raw defect magnitude, this framework enables the reliable and highly efficient deployment of scientific surrogates without requiring ground-truth data.
\end{abstract}

% --- Begin inlined file: sections/01_introduction.tex ---
\section{Introduction}
\label{sec:introduction}

Partial differential equations (PDEs) govern fundamental physical phenomena, yet relying exclusively on high-fidelity numerical solvers imposes severe computational bottlenecks. Machine-learning surrogates mitigate this burden by learning mappings from problem inputs to solution fields across diverse families of physical systems~\citep{LiEtAl2021FNO,LuEtAl2021DeepONet,KovachkiEtAl2023,RaonicEtAl2023CNO}. While different architectures are continuously evaluated across public benchmarks~\citep{TakamotoEtAl2022PDEBench,GuptaBrandstetter2023,OhanaEtAl2024}, a practical scientific workflow often generates multiple plausible predictions from various models, training runs, or checkpoints for the exact same input. Because the optimal candidate frequently changes depending on the specific physical input and the scientific quantity of interest, practitioners face a fundamental deployment dilemma.

During benchmarking, an available high-fidelity solution effortlessly ranks these competing candidates. During active deployment, however, this reference solution is inherently unavailable, and computing it merely to select a surrogate would entirely forfeit the computational advantage of using neural operators. The central question is therefore whether the governing physics can reliably recover the hidden ranking of a finite candidate library without first computing the missing reference.

Addressing this question requires recognizing how a finite candidate library alters the necessary information landscape. Under a fixed squared Hilbert-space task loss, subtracting the losses of two candidates eliminates every component of the unknown solution that cannot distinguish between them. All pairwise preferences subsequently depend exclusively on the span of candidate differences, whose dimension is at most the library size minus one. This mathematical reduction establishes that choosing among a finite set requires significantly less information than reconstructing an arbitrary physical solution, meaning a physical diagnostic only needs to recover this low-dimensional decision coordinate.

Existing diagnostic tools target related but fundamentally distinct objectives. Ground-truth-free model selection and validation errors typically rely on historical examples or ensemble statistics rather than reconstructing the specific ordering induced by the governing physics for the current input~\citep{WeiEtAl2023PINNSelection,GargEtAl2022,GuilloryEtAl2021}. Furthermore, while the raw equation residual measures a physical defect, its magnitude alone fails to capture how that defect propagates into the requested output. Physics-informed correction methods use the governing equations to improve individual predictions~\citep{RaissiEtAl2019,KarniadakisEtAl2021,LiEtAl2024PINO,CaoEtAl2023,Jha2024,HuangPerdikaris2026PhysicsCorrect}, while a posteriori estimators translate residuals into global error bounds~\citep{PrudhommeOden1999,OdenPrudhomme2001,BeckerRannacher2001,GilesSuli2002,HesthavenEtAl2016,FanaskovEtAl2024}, and complementary verification methods place rigorous bounds over the continuous domain~\citep{EirasEtAl2024}. In contrast, our objective is the relative ordering of a fixed finite library. By leveraging the candidate-difference quotient, our framework allows a single anchor-derived response to be reused across the entire library, conditioning the scores directly on the current physical input and evaluating all contrasts simultaneously.

We formulate this paradigm as reference-free finite-library ranking, where ``reference-free'' specifically denotes that the scoring process utilizes no deployment reference solution or reference-derived labels. If the hidden high-fidelity solution were accessible, the chosen scientific loss would cleanly induce pairwise preferences and identify the best candidate. Our goal is to recover those precise decisions relying solely on the input, the candidate predictions, and the governing PDE system. This operational challenge spans two distinct deployment scales: query-time routing to select an optimal candidate for a single input, and panel-level aggregation to freeze a specific checkpoint for repeated future use.

To solve this ranking problem, our approach distills these decisions into a single shared physics calculation. The candidate library first defines a shared library anchor, at which we evaluate the complete physical defect before propagating it once through a linearized physics solve. Mapping this corrected state to the scientific task yields a shared task proxy, denoted by $q$, which subsequently estimates the hidden reference ordering by measuring its distance to every candidate. Although the numerical implementation computes a full-state correction, our theory explicitly identifies the smaller projection of this response that dictates library decisions. Unlike candidate-wise correction strategies, this shared framework avoids solving a separate physics problem for every prediction while remaining deeply task-aware.

We theoretically prove that this shared comparison coordinate is both sufficient and minimal among bounded linear observations for globally exact ordering, linking it to the physics response through an exact conditional error identity. Because ranking reliability intrinsically depends on the candidate margin, we demonstrate that a decision is trustworthy when the remaining physical uncertainty is strictly bounded by the separation between candidates. For a strongly monotone reaction-diffusion operator, this geometric principle translates into computable sufficient conditions that rigorously certify same-grid pairwise or top-1 decisions. We validate this framework comprehensively across separately trained Fourier and convolutional neural-operator libraries for Burgers and reaction-diffusion equations under controlled input shifts. Notably, in an operator-mismatched two-dimensional compressible-flow probe, the shared proxy recovered all 10/10 candidate comparisons although it outperformed the best neural-operator prediction in only 4/10 cases, showing that accurate ranking can persist without superior reconstruction or exact operator--data alignment. By extending our analysis to Sine-Gordon dynamics and public PDEBench models, we systematically distinguish ranking from direct proxy use, confirming that governing-physics responses can successfully replace missing ground truth in scientific surrogate deployment.
% --- End inlined file: sections/01_introduction.tex ---
\section{Results}
\label{sec:results}

When several surrogate predictions disagree, deployment provides the candidate
fields and governing equation but not the high-fidelity solution needed to rank
them. A finite library changes what must be recovered from that missing
solution. Under a fixed squared Hilbert-space task loss, only target directions spanned by
candidate differences can change their relative errors. This comparison space
has dimension at most the library size minus one, even when the physical state
is very high-dimensional. Ranking can therefore require much less information
than reconstructing the hidden solution.

We exploit this reduction by constructing a library anchor $w$ from the
candidates and applying one linearized physics correction. Mapping the
corrected state to the scientific task gives a proxy $q$, which is compared
with every candidate output (Fig.~\ref{fig:decision-framework}a--c). These
shared scores select a prediction for the current input and can also be
averaged over an unlabeled panel to rank reusable models.

\begin{figure}[t]
  \centering
  \includegraphics[width=\linewidth]{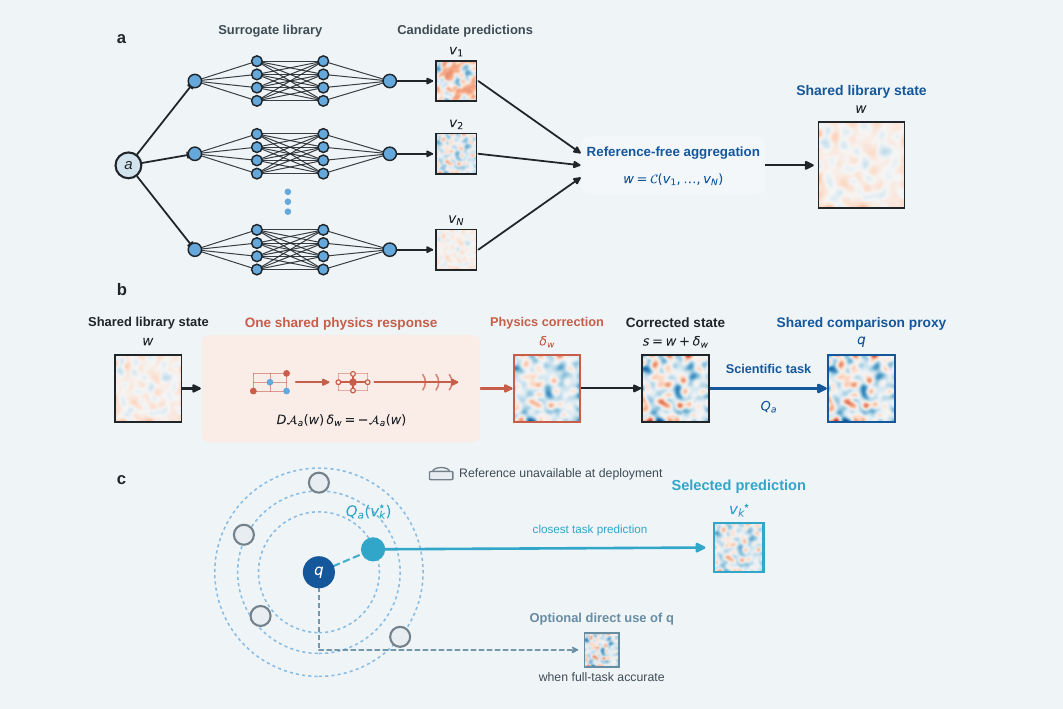}
  \caption{\textbf{One shared physics response turns a finite surrogate
  library into a reference-free decision.}
  \textbf{a}, Candidate states $v_1,\ldots,v_N$ are aggregated by a
  reference-free rule $\mathcal C$ into the library anchor $w$.
  \textbf{b}, One linearized physics response gives the corrected state
  $s=w+\delta_w$, which the task map $Q_a$ converts to the proxy $q$.
  \textbf{c}, Distances
  $S_i(q)=\lVert q-Q_a(v_i)\rVert_M^2$ rank the candidates and select
  $v_{k^\star}$; the dashed branch denotes optional direct use of $q$.}
  \label{fig:decision-framework}
\end{figure}

Theorem~\ref{thm:decision-quotient} below makes the information reduction
exact. The remaining questions are whether one physical response can recover
the required coordinate in learned surrogate libraries, which features of the
library make recovery reliable, and when the resulting choice can be certified
without revealing the reference.

% --- Begin inlined file: sections/02_results_decision_geometry.tex ---
\subsection{Ranking a finite library requires less information than solution recovery}
\label{sec:decision-geometry}

The framework treats ranking as its primary output rather than as a by-product
of reconstructing the hidden solution. This distinction is exact for a fixed
library: only the parts of the target that change the candidates' relative
errors can affect their ordering. We now formalize that smaller information
requirement.

Let $(\Hcal,\metricip{\cdot}{\cdot})$ be a real Hilbert task space,
let $y_1,\ldots,y_N\in\Hcal$ be fixed candidate outputs and let
$t\in\Hcal$ denote the unavailable task truth. Candidate $i$ has squared loss
\begin{equation}
  R_i(t)=\metricnorm{t-y_i}^{2}.
  \label{eq:decision-loss}
\end{equation}
For an ordered pair, define $G_{ij}(t)=R_j(t)-R_i(t)$, so that
$G_{ij}(t)>0$ favours candidate $i$. Expanding the two losses gives
\begin{equation}
  G_{ij}(t)
  =\metricnorm{y_j}^{2}-\metricnorm{y_i}^{2}
  +2\metricip{t}{y_i-y_j}.
  \label{eq:finite-library-gap}
\end{equation}
Thus the hidden target enters every pairwise decision only through inner
products with candidate differences. Define
\begin{equation}
  \Dcal=\operatorname{span}\{y_i-y_1:2\leq i\leq N\}.
  \label{eq:comparison-subspace}
\end{equation}
Targets that differ by an element of $\Dcal^{\perp_M}$ belong to the same
decision class; the quotient $\Hcal/\Dcal^{\perp_M}$ is canonically
represented by $P_{\Dcal}^{M}t\in\Dcal$.
The same collapse extends to general bounded linear observations.

\begin{theorem}[Finite-library decision quotient and minimal bounded-linear observation]
\label{thm:decision-quotient}
The coordinate $P_{\Dcal}^{M}t$ determines every exact pair gap, every
three-way pair label (candidate $i$, tie or candidate $j$) and the complete weak ordering, and
$\dim\Dcal\leq N-1$. It also determines top-1 after fixing a candidate-index
tie rule that depends only on the minimizer set.

Let $W$ be a real normed space and let $T:\Hcal\to W$ be bounded and linear.
The exact gap vector, the three-way
pair-label vector and the complete weak ordering are determined by $Tt$ for
every $t\in\Hcal$ if and only if
\begin{equation}
  \ker T\subseteq\Dcal^{\perp_M}.
  \label{eq:decision-kernel-condition}
\end{equation}
Let $\tau$ choose an element $\tau(S)\in S$ from every nonempty candidate subset
$S$ and define $\iota_\tau(t)=\tau(\arg\min_i R_i(t))$. For this fixed minimizer-set tie
rule, the same equivalence holds for globally exact recovery of
$\iota_\tau(t)$ when targets range over all of $\Hcal$. If $W$ is
finite-dimensional, any such observation satisfies
\begin{equation}
  \dim W\geq\dim\Dcal.
  \label{eq:decision-dimension-lower-bound}
\end{equation}
\end{theorem}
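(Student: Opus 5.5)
The first claim follows from \eqref{eq:finite-library-gap}. For every pair $(i,j)$ we have $y_i-y_j=(y_i-y_1)-(y_j-y_1)\in\Dcal$, so $\metricip{t}{y_i-y_j}=\metricip{P_{\Dcal}^{M}t}{y_i-y_j}$ and $G_{ij}(t)$ is a function of $P_{\Dcal}^{M}t$ alone. The three-way label is $\operatorname{sign}G_{ij}(t)$, and the weak ordering is the relation $R_i\le R_j \iff G_{ji}\ge 0$. The minimizer set is $\{i: G_{ji}(t)\ge 0\ \forall j\}$, so all of these are functions of the gap vector. Composing with $\tau$ gives top-1. The bound $\dim\Dcal\le N-1$ holds because $\Dcal$ is spanned by $N-1$ vectors. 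Since $\Dcal$ is finite-dimensional, it is closed, so the projection $P_{\Dcal}^{M}$ is well defined.

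For sufficiency in the equivalence, suppose $\ker T\subseteq\Dcal^{\perp_M}$ and $Tt=Tt'$. Then $t-t'\in\ker T\subseteq\Dcal^{\perp_M}$, hence $P_{\Dcal}^{M}t=P_{\Dcal}^{M}t'$. By the first part, both targets have the same gaps, labels, ordering and $\iota_\tau$.

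For necessity, I will prove the contrapositive and treat the weakest object, top-1, carefully, since it carries the least information. The stronger objects (gap vector, label vector, ordering) each determine top-1, so the top-1 case implies theirs; the gap vector case is also immediate. Suppose some $h\in\ker T$ satisfies $h\notin\Dcal^{\perp_M}$. Then there is a pair $(i,j)$ with $c:=\metricip{h}{y_i-y_j}\ne0$, and we may choose $i,j$ with $y_i\neq y_j$. I will build targets $t$ and $t+\lambda h$, which have the same observation $Tt$, but on which $\iota_\tau$ differs. The plan is as follows.
\begin{itemize}
\item Take $t$ far along $m+\mu(y_i-y_j)$, with $m=(y_i+y_j)/2$. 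For large $\mu$, candidate $i$ becomes closest among $\{i,j\}$. The real difficulty is that other candidates may beat both $i$ and $j$.
\item To avoid this, choose the pair differently: pick $u\in\Dcal$ with $\metricip{h}{u}\neq0$ and use the linear functional $\ell(t)=\metricip{t}{\cdot}$ restricted to the finite set of candidates. For a generic direction $d\in\Dcal$ close to $u$, the maximizer of $\metricip{d}{y_k}$ over $k$ defines a face of the convex hull of the candidates.
\item Instead of a generic direction, I would take $t=Rd$ with $R\to\infty$. Then $R_k(t)=R^2\|d\|^2-2R\metricip{d}{y_k}+\|y_k\|^2$, so the argmin is the set of maximizers of $\metricip{d}{y_k}$, refined by $\|y_k\|$.
\item Varying $d$ through $\Dcal$ moves the argmin between distinct vertices. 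Adding $\lambda h$ to $t$ shifts each $\metricip{t}{y_k}$ by $\lambda\metricip{h}{y_k}$, and these shifts are not all equal.
\end{itemize}
Concretely, I choose $t_0$ that makes candidates $a$ and $b$ exactly tied and strictly better than all others, with $\metricip{h}{y_a-y_b}\neq0$. Then $t_0\pm\varepsilon h$ produce the singleton minimizers $\{a\}$ and $\{b\}$ respectively, so $\iota_\tau$ changes whatever $\tau$ is, while $T$ is unchanged.

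Producing such a tie point is the main obstacle. I would take $d$ with two maximizing distinct outputs $y_a\neq y_b$ of $\metricip{d}{\cdot}$ over a Voronoi-type arrangement. The cleanest route is to use the Voronoi cells of the distinct outputs $\{y_k\}$ in $\Hcal$. These cells cover $\Hcal$, and a facet between adjacent cells $a,b$ consists of points where exactly $a$ and $b$ tie. Facets between distinct outputs whose differences span $\Dcal$ exist, because $\Hcal$ is connected and the cells are convex. If $h$ were orthogonal to all adjacent differences $y_a-y_b$, it would be orthogonal to their span, which is $\Dcal$, because the adjacency graph is connected. So some adjacent pair has $\metricip{h}{y_a-y_b}\neq0$. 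Duplicate outputs $y_a=y_b$ never separate and are handled by the minimizer-set rule. Finally, if $W$ is finite-dimensional, $\ker T\subseteq\Dcal^{\perp_M}$ gives $\Dcal\cap\ker T\subseteq\Dcal\cap\Dcal^{\perp_M}=\{0\}$. So $T|_{\Dcal}$ is injective and $\dim W\ge\dim\Dcal$.
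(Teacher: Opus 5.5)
Your sufficiency argument, the dimension bound and the reduction of the gap, label and ordering cases to top-1 are fine. That reduction is valid because each of those objects determines $\iota_\tau$. The paper instead handles the sign cases directly, perturbing the midpoint $(y_i+y_j)/2$ by $\pm\varepsilon h$. You should fix one sign slip. Since $G_{ij}=R_j-R_i$, you need $R_i\le R_j\iff G_{ij}\ge 0$ and minimizer set $\{i:G_{ij}(t)\ge 0\ \text{for all } j\}$. As written, you describe the reversed order and the maximizer set.

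The step that does not hold as justified is the claim that facet-adjacent Voronoi cells form a connected graph ``because $\Hcal$ is connected and the cells are convex.'' Here facet-adjacent means sharing points where \emph{exactly} $a$ and $b$ tie. Connectedness plus a finite closed cover only shows that the \emph{intersection} graph, $a\sim b$ iff $V_a\cap V_b\neq\emptyset$, is connected. A common point may be a multi-way tie. For the points $(\pm1,\pm1)$ in $\mathbb{R}^2$, diagonally opposite cells meet only at the origin, where all four tie. Facet connectivity is true, but it needs an extra general-position argument in the finite-dimensional affine hull $y_1+\Dcal$.

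The simpler repair is to drop ``exactly.'' Your span argument, applied to the intersection graph, gives $a,b$ with $V_a\cap V_b\neq\emptyset$ and $\metricip{h}{y_a-y_b}\neq 0$. Take $t_0\in V_a\cap V_b$ and let $S\ni a,b$ be its tied set. Expanding $\metricnorm{t_0\pm\varepsilon h-y_k}^2$ shows that, for small $\varepsilon>0$, the minimizer set at $t_0+\varepsilon h$ is the set of maximizers of $\metricip{h}{y_k}$ over $S$. At $t_0-\varepsilon h$ it is the set of minimizers. These two sets are disjoint.

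You can avoid the Voronoi detour entirely, and your third bullet nearly contains the paper's argument. Take $d=h$ itself rather than $d\in\Dcal$, and use both signs. Every target $\alpha h$ has observation $T(\alpha h)=0$, and
\[
R_k(\alpha h)=\alpha^2\metricnorm{h}^2-2\alpha\metricip{h}{y_k}+\metricnorm{y_k}^2 .
\]
For large positive $\alpha$, all minimizers lie in the set maximizing $\metricip{h}{y_k}$; for large negative $\alpha$, they lie in the set minimizing it. These sets are disjoint because $h\notin\Dcal^{\perp_M}$ makes the values non-constant, so $\iota_\tau$ differs for every $\tau$. This route needs no tie point and no finite-dimensional reduction. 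What your repaired local version buys in exchange is that the two indistinguishable targets can be taken at distance $2\varepsilon\metricnorm{h}$, arbitrarily close together, rather than only far out along $\pm h$.
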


\begin{proof}
Equation~\eqref{eq:finite-library-gap} proves sufficiency. For necessity,
suppose that $h\in\ker T$ but $h\notin\Dcal^{\perp_M}$. At least one
candidate difference has nonzero inner product with $h$. Perturbing the
midpoint of that pair by $\pm\varepsilon h$ produces opposite strict signs
with the same observation, which rules out exact sign recovery and hence exact
gap or complete-order recovery. For top-1, set
$a_i=\metricip{h}{y_i}$. The $a_i$ are not all equal. For sufficiently large
positive $\alpha$, every minimizer at $t=\alpha h$ lies in the
maximum-projection candidate set; for sufficiently large negative $\alpha$,
every minimizer lies in the disjoint minimum-projection set. Both targets have
the same observation. Therefore $T|_{\Dcal}$ must be injective, which also
gives Eq.~\eqref{eq:decision-dimension-lower-bound}.
\end{proof}

Theorem~\ref{thm:decision-quotient} establishes minimality among bounded linear
observations for globally exact decisions over all targets. On a restricted physical solution
set, some candidate boundaries may never be exposed, and exact gaps, pair signs
and top-1 can require different amounts of information. The corresponding
exposure conditions, duplicate-candidate cases and a continuous-statistic
dimension result are given in Supplementary Section S1.

The theorem isolates the target information that a library decision can use.
A physical solver may still return a full-state response, but the library reads
only its comparison projection. The difficulty of a choice is then governed by
the margin between competing candidates. The empirical libraries contained
both ambiguous and well-separated decisions: among 1,920 input--library
combinations, 336 had a best--runner-up gap no larger than $10^{-6}$, whereas
the remaining gaps spanned a broad range (Supplementary Section S9).

Equation~\eqref{eq:finite-library-gap} also identifies the relevant notion of
error. A large state error orthogonal to $\Dcal$ is invisible to every
squared-loss decision, whereas a small error aligned with a near-tie candidate
difference can reverse a ranking. The next question is therefore whether one
reference-free physical response can recover this decision coordinate relative
to the margins found in actual libraries.
% --- End inlined file: sections/02_results_decision_geometry.tex ---
\FloatBarrier
% --- Begin inlined file: sections/04_results_decision_recovery.tex ---
\subsection{A shared physics response recovers hidden neural-operator choices}
\label{sec:decision-recovery}

Theorem~\ref{thm:decision-quotient} identifies what must be known to rank a
finite library. We next construct a shared physical response and show how its
error enters precisely that comparison coordinate. For a physical input $a$,
let $v_1,\ldots,v_N$ be the candidate states. A
reference-free library rule $\mathcal C$ defines an
anchor
\begin{equation}
  w=\mathcal C(v_1,\ldots,v_N).
  \label{eq:anchor}
\end{equation}
Let $\Acal_a(v)=0$ denote the assembled discrete physical problem, including
its initial and boundary conditions. With
$L_w=D\Acal_a(w)$, we compute one inexact shared response
\begin{equation}
  L_w\delta_w=-\Acal_a(w)+\ell,
  \label{eq:shared-response}
\end{equation}
where $\ell$ records the algebraic solve defect. The corrected state is
$s=w+\delta_w$. For the principal linear tasks, $q=Q_a(s)$ and
$y_i=Q_a(v_i)$; every candidate is scored against the same task proxy,
\begin{equation}
  S_i(q)=\metricnorm{q-y_i}^{2}.
  \label{eq:proxy-score}
\end{equation}
No numerical reference enters Eqs.~\eqref{eq:anchor}--\eqref{eq:proxy-score}.

The role of the shared solve is given by an exact conditional representation.
Let $u$ solve the governing equation, set $e=u-w$, and define
\begin{equation}
  \Rcal_A(e)=\Acal_a(w+e)-\Acal_a(w)-L_we.
  \label{eq:physics-remainder}
\end{equation}

\begin{proposition}[Conditional physics-to-decision error representation]
\label{prop:physics-decision-representation}
Let $X$, $Y$ and $\Hcal$ be real Hilbert spaces, let
$\Acal_a:\Omega\subset X\to Y$ and $Q_a:\Omega\to\Hcal$, and suppose
$\Acal_a(u)=0$. Assume that $\Acal_a$ and $Q_a$ are Fr\'echet differentiable
where used, that $L_w:X\to Y$ is a bounded isomorphism, and that the required
line segments lie in $\Omega$. Then
\begin{equation}
  \delta_w-e=L_w^{-1}\bigl[\ell+\Rcal_A(e)\bigr].
  \label{eq:shared-correction-error}
\end{equation}
If $Q_a$ is nonlinear, define
\begin{equation}
  q_{\rm lin}=Q_a(w)+DQ_a(w)\delta_w,
  \qquad
  q_{\rm eval}=Q_a(w+\delta_w),
  \label{eq:linear-evaluated-proxies}
\end{equation}
and
$\Rcal_Q(h)=Q_a(w+h)-Q_a(w)-DQ_a(w)h$. Then
\begin{align}
  P_{\Dcal}^{M}(q_{\rm lin}-t)
  &=P_{\Dcal}^{M}DQ_a(w)L_w^{-1}
    \bigl[\ell+\Rcal_A(e)\bigr]
    -P_{\Dcal}^{M}\Rcal_Q(e),
  \label{eq:q-linear-error}\\
  P_{\Dcal}^{M}(q_{\rm eval}-t)
  &=P_{\Dcal}^{M}DQ_a(w)L_w^{-1}
    \bigl[\ell+\Rcal_A(e)\bigr]
    +P_{\Dcal}^{M}\bigl[\Rcal_Q(\delta_w)-\Rcal_Q(e)\bigr],
  \label{eq:q-evaluated-error}
\end{align}
where $t=Q_a(u)$. If $Q_a$ is bounded and linear, the two proxies coincide and
\begin{equation}
  P_{\Dcal}^{M}(q-t)
  =P_{\Dcal}^{M}Q_aL_w^{-1}\bigl[\ell+\Rcal_A(e)\bigr].
  \label{eq:linear-task-projected-error}
\end{equation}
If both derivatives are locally H\"older continuous with the same exponent
$0<\alpha\leq1$ along the required segments, these identities yield local
bounds in $\|\ell\|$, $\|u-w\|^{1+\alpha}$ and, for an evaluated nonlinear task,
$\|\delta_w\|^{1+\alpha}$. For any resulting task proxy $q$,
\begin{equation}
  G_{ij}(t)-G_{ij}(q)
  =2\metricip{t-q}{y_i-y_j}
  \label{eq:task-gap-transfer}
\end{equation}
remains exact.
\end{proposition}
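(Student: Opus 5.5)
The plan is to derive every claim from the definition of $\Rcal_A$, the shared-response equation~\eqref{eq:shared-response} and $\Acal_a(u)=0$, using only algebra plus linearity of the projection $P_{\Dcal}^{M}$.

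\emph{Correction error.} I start from Eq.~\eqref{eq:physics-remainder} with $w+e=u$. Since $\Acal_a(u)=0$, this gives $0=\Acal_a(w)+L_we+\Rcal_A(e)$, that is, $L_we=-\Acal_a(w)-\Rcal_A(e)$. Subtracting this from Eq.~\eqref{eq:shared-response} gives
\[
L_w(\delta_w-e)=\ell+\Rcal_A(e).
\]
Because $L_w$ is a bounded isomorphism, applying $L_w^{-1}$ yields Eq.~\eqref{eq:shared-correction-error}. The line-segment assumption is needed only so that $\Rcal_A(e)$ is defined.

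\emph{Task errors.} For the linearized proxy I write $t=Q_a(w+e)=Q_a(w)+DQ_a(w)e+\Rcal_Q(e)$. Then
\[
q_{\rm lin}-t=DQ_a(w)(\delta_w-e)-\Rcal_Q(e).
\]
Substituting Eq.~\eqref{eq:shared-correction-error} and applying the linear map $P_{\Dcal}^{M}$ gives Eq.~\eqref{eq:q-linear-error}. For the evaluated proxy, $q_{\rm eval}=Q_a(w)+DQ_a(w)\delta_w+\Rcal_Q(\delta_w)$, so $q_{\rm eval}-t$ equals the same expression plus $\Rcal_Q(\delta_w)$. This gives Eq.~\eqref{eq:q-evaluated-error}. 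When $Q_a$ is bounded and linear, $DQ_a(w)=Q_a$ and $\Rcal_Q\equiv0$, so the two proxies coincide and both identities reduce to Eq.~\eqref{eq:linear-task-projected-error}.

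\emph{Local bounds.} I would use the integral form of the remainder,
\[
\Rcal_A(e)=\int_0^1\bigl[D\Acal_a(w+\theta e)-D\Acal_a(w)\bigr]e\,d\theta .
\]
Hölder continuity of $D\Acal_a$ with constant $C_A$ and exponent $\alpha$ then gives $\|\Rcal_A(e)\|\le C_A\|e\|^{1+\alpha}/(1+\alpha)$. The same argument bounds $\Rcal_Q(e)$ and $\Rcal_Q(\delta_w)$. The projection $P_{\Dcal}^{M}$ is an $M$-orthogonal projection and so has norm at most one. Combining this with $\|DQ_a(w)\|$ and $\|L_w^{-1}\|$ gives bounds linear in $\|\ell\|$ and in $\|u-w\|^{1+\alpha}$, plus $\|\delta_w\|^{1+\alpha}$ in the evaluated case.

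The main subtlety is this step: the Bochner/Riemann integral form requires the derivative to be continuous along the segment, which Hölder continuity supplies, and the segment must lie in $\Omega$, which is assumed.

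\emph{Gap transfer.} I apply Eq.~\eqref{eq:finite-library-gap} to both $t$ and $q$ and subtract. The terms $\metricnorm{y_j}^2-\metricnorm{y_i}^2$ cancel, leaving Eq.~\eqref{eq:task-gap-transfer} exactly. This holds for any $q$ and needs no approximation.
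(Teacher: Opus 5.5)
Your proof is correct and follows the paper's route: expand $\Acal_a$ at $w$ using $\Acal_a(u)=0$, subtract the shared-response equation, expand $Q_a$ at $w$ (exactly as in the paper's supplementary identity proposition), get the H\"older bounds by integrating $\theta^\alpha$ in the integral remainder, and obtain the gap transfer by subtracting the squared-gap expansion at $t$ and $q$. One small correction: $\Rcal_A(e)$ is already defined once $w,u\in\Omega$ and $\Acal_a$ is differentiable at $w$, so the segment hypothesis is needed only for the integral-remainder bounds, not for the exact identities.
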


\begin{proof}
Substitute $\Acal_a(u)=0$ into the first-order expansion at $w$ and compare it
with Eq.~\eqref{eq:shared-response}; this gives
Eq.~\eqref{eq:shared-correction-error}. Expanding $Q_a$ at $w$ gives
Eqs.~\eqref{eq:q-linear-error}--\eqref{eq:q-evaluated-error}. The last identity
is Eq.~\eqref{eq:finite-library-gap} evaluated at $t$ and $q$.
\end{proof}

The unprojected identities and their complete algebraic derivation are given
in Supplementary Proposition 7.

These identities explain how the physical correction reaches the decision
coordinate. A computable reference-free certificate additionally requires a
way to bound the anchor-to-solution distance. Nonlinear $Q_a$ does not alter the
final squared-Hilbert comparison geometry; it adds a physics-to-task remainder.
A non-Hilbert or truth-normalized pair loss changes the comparison geometry
itself.

For linear tasks, the same structure has an adjoint interpretation. Define
\begin{equation}
  J_{ij}(x)=\metricnorm{Q_ax-y_j}^{2}-\metricnorm{Q_ax-y_i}^{2}.
\end{equation}
The quadratic terms in $Q_ax$ cancel. If
$L_w^*z_{ij}=2Q_a^*(y_i-y_j)$ and $\ell=0$, then the shared solve satisfies
\begin{equation}
  J_{ij}(w)-\langle z_{ij},\Acal_a(w)\rangle_Y
  =J_{ij}(w+\delta_w).
  \label{eq:one-primal-all-pair}
\end{equation}
One corrected primal state therefore realizes the full family of pairwise
first-order adjoint point estimates. The adjoint machinery is classical; the
finite-library structure is that the candidate set induces this affine family
of goals and one shared state evaluates all of them. Rigorous uncertainty still
requires localization or directional support bounds. For the quadratic
Burgers operator, the remainder is exactly bilinear:
\begin{equation}
  \Acal_a(w+e)=\Acal_a(w)+L_we+\mathbf B(e,e),
  \qquad
  \delta_w-e=L_w^{-1}\mathbf B(e,e)
  \label{eq:burgers-quadratic-remainder}
\end{equation}
for an exact solve. This gives a PDE-specific local quadratic realization of
Proposition~\ref{prop:physics-decision-representation}.

We tested this construction on eight libraries formed by crossing two
equations, two architectures and two seeded ensembles, evaluated under
controlled input shifts. One shared response selected a candidate within
$10^{-6}$ of the best library loss in 99.01\% of cases and recovered 99.59\% of
non-tied pairwise preferences (Fig.~\ref{fig:prospective-recovery}a,b).
Recovery remained high in all eight libraries and among the closest candidate
comparisons (Supplementary Section S9). The raw
residual recovered 67.83\% of non-tied pairs and 42.66\% of tolerant top-1
choices.

The ranking behaviour extended to Sine--Gordon. Across eight libraries and 240
inputs, the proxy recovered 95.57\% of exact top-1 decisions (Supplementary Table 6 and Supplementary Fig. 2).

\begin{figure}[t]
  \centering
  \includegraphics[width=\linewidth]{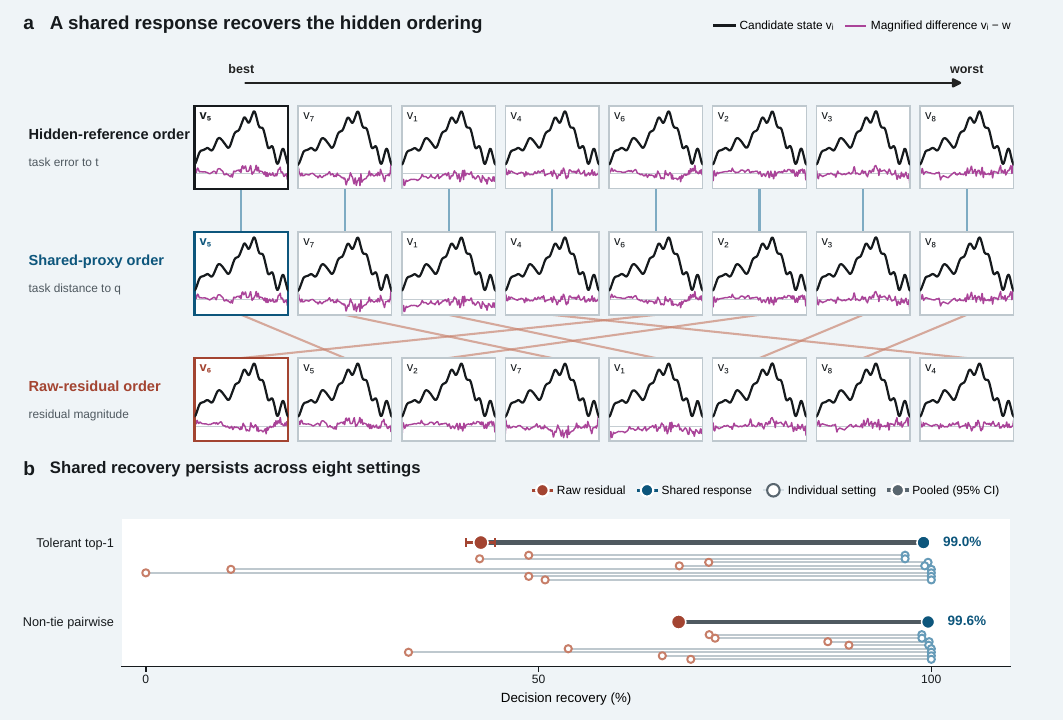}
  \caption{\textbf{One shared physics response recovers hidden
  neural-operator choices.} \textbf{a}, The same eight candidate states are
  shown in three horizontal lanes, ordered from best to worst by
  hidden-reference task error, distance to the shared proxy and
  assembled-defect energy. Lines connect candidate identities across lanes.
  Within each frame, the black curve shows $v_i$ on a common state scale,
  whereas the magenta curve shows $v_i-w$ about the grey zero line on a
  separate magnified scale shared by all candidates. \textbf{b}, Tolerant top-1
  and non-tie pairwise recovery in eight libraries. Thin segments join the
  raw-residual and shared-response results within each library; large markers
  show pooled recovery with input-clustered 95\% confidence intervals.}
  \label{fig:prospective-recovery}
\end{figure}

The gap geometry in Fig.~\ref{fig:ranking-quality}a shows what supplied the
signal. With the correct response, proxy gaps track hidden-reference gaps in
both sign and magnitude. Assigning the response to the wrong input disperses
this relation, while reversing its direction produces a predominantly reversed
gap pattern. The recovered preference therefore depended on the current
physical instance and on propagating its defect in the correct direction, not
simply on static library geometry.

\begin{figure}[!t]
  \centering
  \includegraphics[width=\linewidth]{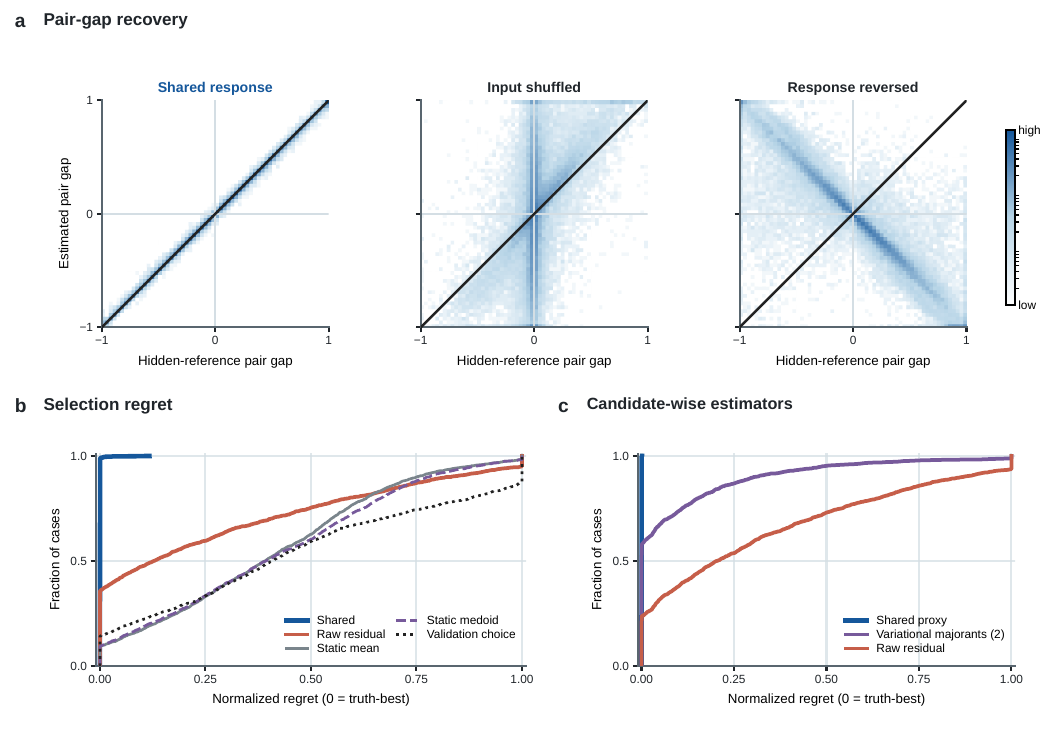}
  \caption{\textbf{Shared physics recovers candidate gaps and yields
  low-regret choices.} \textbf{a}, Hidden-reference pair gaps plotted against
  gaps estimated from the shared, input-shuffled and reversed responses
  (49,323 non-tie pairs; common normalized axes, bins and density scale).
  Matching signs indicate correct ordering; the diagonal indicates agreement
  in gap magnitude. \textbf{b}, Empirical cumulative distributions of
  span-normalized selection regret for 1,920 primary input--library cases.
  Zero denotes selection of the hidden-reference best candidate; curves nearer
  the upper left are better. \textbf{c}, The same regret measure for a separate
  reaction--diffusion population (1,920 cases), comparing the shared proxy,
  raw residual and two candidate-wise variational majorants; the two majorant
  curves nearly coincide.}
  \label{fig:ranking-quality}
\end{figure}

Top-1 recovery alone masks the severity of incorrect choices.
Across the 1,920 Burgers and reaction--diffusion input--library cases, the
shared proxy selected the exact winner in 1,899 cases; when it missed, its
normalized regret remained close to zero (Fig.~\ref{fig:ranking-quality}b).
Here regret is the selected candidate's excess hidden loss divided by the loss
span from the best to the worst library member. The comparison baselines select
by assembled-defect energy (the raw residual baseline), distance to the candidate mean or medoid, or
input-independent validation loss; all produced broader regret distributions.

We also compared the shared response with candidate-wise error estimators
(Fig.~\ref{fig:ranking-quality}c). A common-energy
dual-norm estimator and a candidate-adapted, operator-aware majorant both gave
valid a posteriori bounds and improved pairwise ranking to 82.7\% in the
reaction--diffusion cases. Both nevertheless remained below the shared proxy on
the same candidates. These estimators bound individual state errors, whereas
the shared proxy estimates the direction of a candidate comparison. In this
reaction--diffusion comparison, these
absolute-error estimators preserved the pairwise ordering less reliably than
the shared task proxy; detailed pairwise, top-1 and regret results are given in Supplementary Table 18.

Ranking candidates by the task-space magnitude of their own linearized
corrections produced nearly the same decisions as the shared construction.
In a separate 192-case control with 16 candidates per library, the shared
selector and the candidate-specific correction-magnitude selector chose the
same checkpoint in 191 cases and had nearly identical pairwise accuracy. Thus
one library-level response retained almost all of the decision information from
$N$ candidate-specific linearized error proxies while requiring only one
correction solve.
The difference in solve count was visible in the component timings: from
$N=2$ to $N=16$, shared-response time remained nearly constant, whereas
candidate-wise time grew approximately in proportion to library size. At
$N=16$, the candidate-wise component was 16.0 times slower for Burgers and 15.3
times slower for reaction--diffusion (Supplementary Table 12).
% --- End inlined file: sections/04_results_decision_recovery.tex ---
\FloatBarrier
% --- Begin inlined file: sections/05_results_anchor_replication.tex ---
\subsection{Reliable ranking depends on margins, library composition and admissibility}
\label{sec:anchor-replication}

High average accuracy can hide sensitivity to library composition. In a
reaction--diffusion CNO library, two extreme candidates pulled the arithmetic
anchor away from the main cluster and reduced exact top-1 recovery to 67 of 240
inputs (Fig.~\ref{fig:anchor-replication}a). For those inputs, a candidate-valued
medoid remained inside the main cluster and recovered all 240 choices, linking
the failure to the position of the anchor rather than to an uninformative
candidate library. Across 16 further libraries constructed from separate
training, validation and evaluation data, both anchors selected the exact winner
in 3,800 of 3,840 cases. Candidate composition therefore matters, but neither
anchor was uniformly better; the anchor rule must suit the intended candidate
family.

\begin{figure}[!t]
  \centering
  \includegraphics[width=\linewidth]{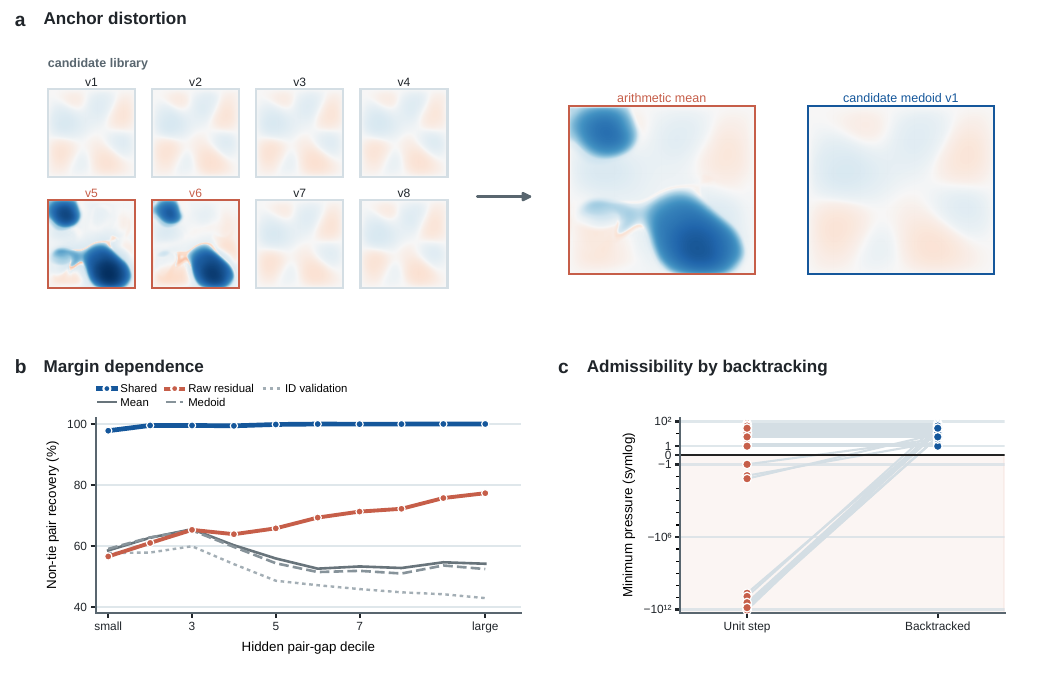}
  \caption{\textbf{Library composition, candidate separation and admissibility
  shape reliable recovery.}
  \textbf{a}, Eight candidate states, their arithmetic mean and medoid on the
  common signed-log scale
  $\operatorname{sgn}(u)\ln(1+|u|/0.05)$. Extreme states displace the mean,
  whereas the medoid remains candidate-valued. \textbf{b}, Non-tie pair
  recovery across hidden-gap deciles for the shared response, raw residual,
  validation choice, mean and medoid (49,323 pairs). \textbf{c}, Paired minimum
  pressures before and after backtracking for 35 shock cases; the horizontal
  zero line marks physical admissibility.}
  \label{fig:anchor-replication}
\end{figure}

Reliable ranking therefore requires the shared anchor to produce an informative
physical response. A second condition is geometric: proxy error
must be smaller than the separation between competing candidates. If only
\begin{equation}
  \metricnorm{P_{\Dcal}^{M}(t-q)}\leq\varepsilon
  \label{eq:projected-proxy-radius}
\end{equation}
is known, then Eq.~\eqref{eq:task-gap-transfer} gives
\begin{equation}
  |G_{ij}(t)-G_{ij}(q)|
  \leq2\varepsilon\metricnorm{y_i-y_j}.
  \label{eq:sharp-margin-bound}
\end{equation}
This bound is sharp under ball uncertainty: for a nonzero pair direction,
equality is attained when the projected error is parallel or antiparallel to
that direction. Near-tie decisions are therefore intrinsically fragile. More
generally, for any fixed $k>0$, an $O(\rho^k)$ proxy error as $\rho\downarrow0$,
with a constant independent of $\rho$, does not guarantee correct ranking when
a nonzero true best--runner-up gap is also $\Theta(\rho^k)$ on the same family;
the directional reversal
construction is given in Supplementary Section S1.
The 49,323 pair decisions follow this margin dependence. Shared
recovery was already 97.8\% in the smallest hidden-gap decile and approached
100\% as gaps widened. Raw residual improved more slowly, while validation and
static-centre baselines did not acquire the same high-margin recovery
(Fig.~\ref{fig:anchor-replication}b). The same margin logic applies to algebraic
solve error, which must be controlled on the scale of the decision being
resolved (Supplementary Section S1).

The shared response has a second boundary in the one-dimensional compressible-flow
shock probe. An undamped update could leave the physically admissible set,
whereas positivity and residual backtracking increased the number of feasible
cases from 22 of 35 to 35 of 35
(Fig.~\ref{fig:anchor-replication}c). The global certificate developed below
provides a different reliability route when strong monotonicity is available.
% --- End inlined file: sections/05_results_anchor_replication.tex ---
\FloatBarrier
% --- Begin inlined file: sections/06_results_comparison_geometry.tex ---
\subsection{Accurate ranking does not require accurate solution reconstruction}
\label{sec:comparison-geometry}

The shared proxy need not reproduce the hidden solution everywhere to rank the
candidates correctly. It only needs to be accurate in directions that
distinguish one candidate from another. This separation explains why strong
ranking can coexist with an imperfect corrected prediction, and why a small
full-task error can still flip a near-tied choice.

The finite-library quotient gives an exact orthogonal decomposition of the
task error,
\begin{equation}
  t-q
  =P_{\Dcal}^{M}(t-q)+P_{\Dcal^{\perp_M}}^{M}(t-q).
  \label{eq:decision-error-decomposition}
\end{equation}
Only the first term changes squared-loss comparisons. Consequently, when
$\Dcal^{\perp_M}$ is non-trivial, correct comparison does not imply a uniformly
accurate task proxy: the orthogonal component can be arbitrarily large
without changing any candidate decision. Conversely, whenever a candidate pair
is distinct, an arbitrarily small full-task error can cross an arbitrarily small
candidate-bisector margin and reverse a decision. Full-task accuracy and
decision accuracy are therefore neither equivalent nor ordered without a
margin condition.

This distinction concerns the final loss geometry, not whether the
state-to-task map is linear. If $Q_a$ is nonlinear but $t=Q_a(u)$, the proxy
$q$ and candidate outputs $y_i=Q_a(v_i)$ all lie in one fixed Hilbert task space,
Eq.~\eqref{eq:task-gap-transfer} remains exact. Nonlinear $Q_a$ changes the
remainders that connect a physical state correction to $q$, as in
Proposition~\ref{prop:physics-decision-representation}. By contrast, a
truth-normalized nRMSE, a general $L^p$ loss or a nonlinear
quantity-of-interest loss need not have affine pair contrasts. Such losses
require task-specific structure or a local tangent and remainder analysis.

The tested systems exhibited several regimes allowed by this geometry. Across
the public PDEBench population, the correction reduced full-trajectory error
by a factor of 9.86 and comparison-subspace error by a factor of 16.1. The
larger contraction in the comparison subspace shows preferential improvement
in decision-relevant directions
(Supplementary Fig. 5). In Sine--Gordon, the
median weighted full-trajectory error norm fell by a factor of 1,835.7. Here the
correction entered a full-proxy regime: $q$ was accurate not only in the
comparison subspace but also across the complete task. The correction paths and
task definitions are given in Supplementary Fig. 4
and Supplementary Section S12. The task loss of $q$ was
also no greater than that of the best candidate in all 1,920 primary
Burgers/reaction--diffusion cases and all 99 inputs under the fixed PDEBench
terminal squared-$L^2$ task. In these populations, ranking and direct
improvement coincided.

Notably, the ranking signal remained accurate under operator--data mismatch. In the two-dimensional compressible-flow probe, the proxy recovered the correct candidate comparison in all ten cases and reduced comparison-subspace error by a median factor of 22.2, although direct proxy use improved on the best candidate in only four cases (Supplementary Fig. 5). This behaviour directly realizes the separation permitted by the exact geometry:
\begin{equation*}
  \text{correct comparison}\;\not\Rightarrow\;
  \text{best corrected state}.
\end{equation*}

Direct use of a corrected state therefore requires accuracy in the full task
space, whereas selection depends only on the candidate-difference coordinate.
% --- End inlined file: sections/06_results_comparison_geometry.tex ---
\FloatBarrier
% --- Begin inlined file: sections/07_results_deployment_timescales.tex ---
\subsection{Shared scores support routing and reusable model selection}
\label{sec:deployment-timescales}

Once a hidden-reference ordering can be estimated, it can be used at different
points in a deployment workflow. At query time, the candidates for the current
input can be rescored against the task-matched proxy and the selected candidate
returned. At a slower timescale, scores can be averaged over an unlabeled
selection panel to rank the candidate models and choose one checkpoint for
repeated use. These
two uses share the candidate scores but answer different questions: routing
preserves input- and task-specific choice, whereas panel-level selection trades
that flexibility for a single reusable model
(Fig.~\ref{fig:deployment-timescales}).

\begin{figure}[!t]
  \centering
  \includegraphics[width=\linewidth]{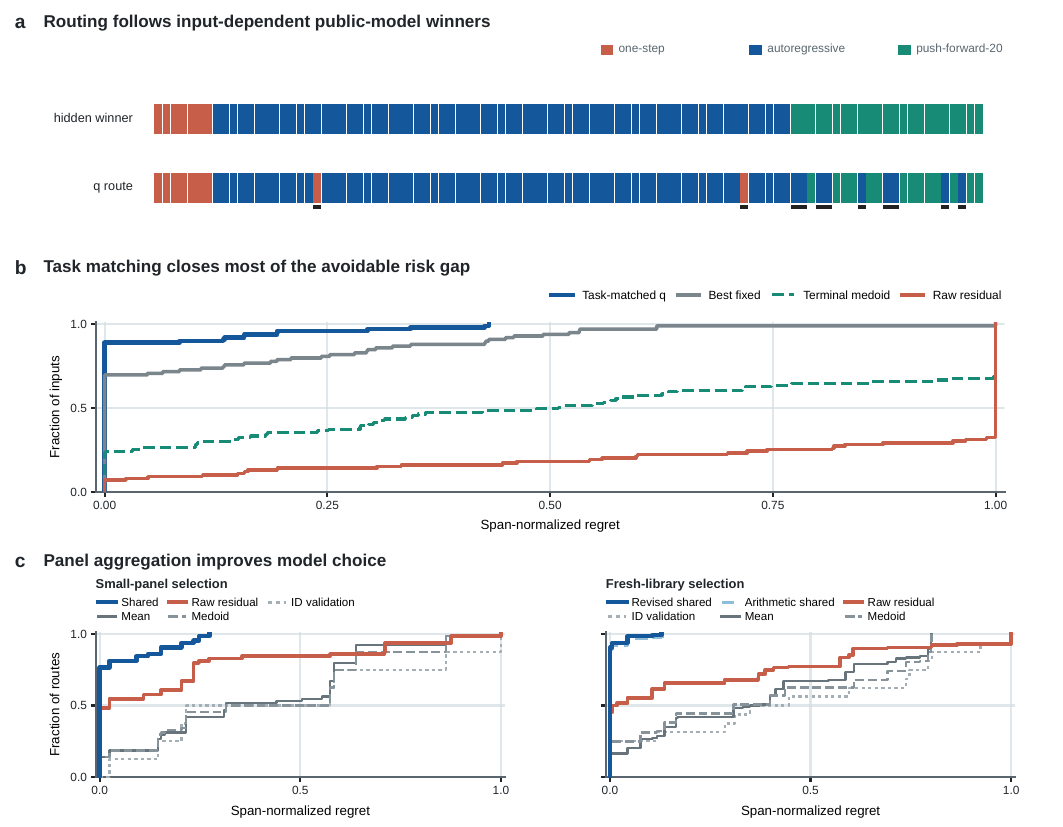}
  \caption{\textbf{Routing and panel-level model selection use comparison at different
  timescales.} \textbf{a}, Hidden-reference winner and task-matched proxy route
  for each of 99 inputs; colours distinguish checkpoints and black ticks mark
  mismatches. \textbf{b}, Empirical cumulative distributions of span-normalized
  selection regret for the same inputs. Regret is excess hidden loss divided by
  the three-model loss span; curves nearer the upper left are better.
  \textbf{c}, Empirical cumulative distributions of the same regret for
  panel-level selection. The left facet contains eight libraries with eight
  panel evaluations per library at $B=24$; the right contains 16 additional
  libraries with eight evaluations per library at $B=144$. Methods are
  identified in the panel legends.}
  \label{fig:deployment-timescales}
\end{figure}

Three public U-Net checkpoints for the two-field PDEBench diffusion--reaction
system provide a test beyond the trained libraries above
\citep{RonnebergerEtAl2015,TakamotoEtAl2022PDEBench}. These released
checkpoints were selected within the benchmark's validation workflow. On the terminal
two-field task, one checkpoint was best for every input and the shared score
recovered every choice, whereas raw residual recovered none of the top choices
(Supplementary Table 7). Because that
ordering was constant across the 99 inputs, the terminal task tests recovery on
one fixed public library rather than input-adaptive routing. We therefore
rescored the same trajectories under a restricted-quadrant high-band spectral
task, for which the preferred checkpoint became input dependent. This spectral
distance lies outside the exact squared-Hilbert identity and probes task
transfer beyond the exact comparison theorem.

For this high-band spectral objective, the preferred checkpoint changed across
inputs (Fig.~\ref{fig:deployment-timescales}a). Rescoring the same proxy in this
task recovered 88 of 99 choices, compared with 69 for retaining the best fixed
terminal model, 23 for medoid centrality and 7 for the raw residual; it closed
86.7\% of the fixed-to-oracle risk gap
(Fig.~\ref{fig:deployment-timescales}b). Because the terminal and spectral tasks
use the same models, inputs and proxy trajectories, the winner crossover
isolates task dependence rather than a change in the candidate library.
Additional task objectives are compared in Supplementary
Section S12.

Panel-level selection asks whether the instance-wise signal survives
aggregation. For an unlabeled panel of size $B$, candidate scores are averaged
to select one reusable checkpoint. Small panels were not always sufficient. At
$B=24$, performance varied sharply among libraries, including one
Burgers--CNO library in which only one of eight panels selected a top-two model.
Pooled across the eight libraries, shared scoring produced lower normalized regret
than the raw residual baseline, validation-loss selection and mean or medoid centrality
(Fig.~\ref{fig:deployment-timescales}c). This pooled advantage came from the
reaction--diffusion libraries and was not observed in either Burgers
architecture group.
Candidate-specific correction made the same selection and incurred the same
regret as the shared response in all 64 panel evaluations.
Strong query-wise ranking therefore need not translate into reliable selection
from a small selection panel.

Across 16 additional libraries evaluated with a selection panel of $B=144$, the
shared score chose a model within the study tolerance of the best in 122 of 128
panels and a top-two model in all 128. It also
outperformed the raw residual baseline, validation and candidate-only centrality
baselines (Fig.~\ref{fig:deployment-timescales}c), with normalized regret
concentrated near zero.
% --- End inlined file: sections/07_results_deployment_timescales.tex ---
\FloatBarrier
% --- Begin inlined file: sections/08_results_rd_certificate.tex ---
\subsection{Strong monotonicity certifies decisions against a same-grid operator reference}
\label{sec:rd-certificate}

The preceding results estimate decisions without access to a numerical
reference. For one zero-Dirichlet, strongly monotone cubic
reaction--diffusion discretization, the residual at any returned state supplies
a global error enclosure and hence a computable decision guarantee. This result
does not require the local linearization assumptions of
Proposition~\ref{prop:physics-decision-representation}.

Let $V_h$ contain the interior grid degrees of freedom with weighted inner
product
\begin{equation}
  \langle v,z\rangle_h=h^2\sum_k v_kz_k.
  \label{eq:rd-inner-product-main}
\end{equation}
Let $K_h$ be self-adjoint positive definite and define
\begin{equation}
  A_h(v)=K_hv+\lambda v+\gamma v^{\odot3}-f_h,
  \qquad \lambda,\gamma>0.
  \label{eq:rd-operator-main}
\end{equation}
Let $Q_h:V_h\to Y$ be linear into a finite-dimensional Hilbert task space. We
identify $Y$ with the task space $\Hcal$ above and
$\langle\cdot,\cdot\rangle_Y=\metricip{\cdot}{\cdot}$; the state-space metric
$M_s$ below is distinct from the fixed task metric $M$. Define the weighted
adjoint by
\begin{equation}
  \langle Q_hv,z\rangle_Y=\langle v,Q_h^*z\rangle_h.
  \label{eq:rd-weighted-adjoint-main}
\end{equation}
For any returned state $s\in V_h$, define
\begin{equation}
  M_s=K_h+\lambda I+
  \frac{3\gamma}{4}\operatorname{diag}(s^{\odot2}),
  \label{eq:rd-proxy-metric-main}
\end{equation}
and write $\|z\|_{M_s}^2=\langle M_sz,z\rangle_h$ and
$\|\rho\|_{M_s^{-1}}^2=\langle \rho,M_s^{-1}\rho\rangle_h$.

\begin{theorem}[Global strongly monotone reaction--diffusion decision certificate]
\label{thm:rd-decision-certificate}
Equation $A_h(u_h)=0$ has a unique solution, $M_s$ is self-adjoint positive
definite, and
\begin{equation}
  \|s-u_h\|_{M_s}
  \leq\|A_h(s)\|_{M_s^{-1}}.
  \label{eq:rd-state-enclosure}
\end{equation}
For $t=Q_hu_h$, $q=Q_hs$ and $d_{ij}=y_i-y_j$,
\begin{equation}
  |G_{ij}(t)-G_{ij}(q)|
  \leq
  2\|A_h(s)\|_{M_s^{-1}}
  \|Q_h^*d_{ij}\|_{M_s^{-1}}
  =:B_{ij}.
  \label{eq:rd-pair-certificate}
\end{equation}
Hence $|G_{ij}(q)|>B_{ij}$ certifies a strict pair sign. If proxy winner $k$
satisfies
\begin{equation}
  R_j(q)-R_k(q)>B_{kj}
  \quad\text{for every }j\ne k,
  \label{eq:rd-strict-top1-main}
\end{equation}
then it is the unique same-grid operator winner. More generally, for a scalar
tolerance $\tau\geq0$, if
\begin{equation}
  R_k(q)-R_j(q)+B_{kj}\leq\tau
  \quad\text{for every }j\ne k,
  \label{eq:rd-tolerant-top1-main}
\end{equation}
then its same-grid regret is at most $\tau$. Failure of a sufficient condition
is abstention.

Let $\mathcal B_h=Q_h^*\Dcal$ and
$r=\dim\mathcal B_h\leq N-1$. One residual inverse action and $r$
comparison-basis inverse actions determine every pair bound through a Gram
matrix.

If a dataset target satisfies $t_{\rm data}=t+\xi$ and
$P_{\Dcal}^{M}\xi\in\mathcal U_{\rm disc}$ for a compact set
$\mathcal U_{\rm disc}\subset\Dcal$, define
$\sigma_{\mathcal U}(d)=\sup_{z\in\mathcal U}\metricip{z}{d}$. Then
\begin{align}
  G_{ij}(t_{\rm data})\in\bigl[&G_{ij}(q)-B_{ij}
  -2\sigma_{\mathcal U_{\rm disc}}(-d_{ij}),\notag\\
  &G_{ij}(q)+B_{ij}
  +2\sigma_{\mathcal U_{\rm disc}}(d_{ij})\bigr].
  \label{eq:rd-dataset-discrepancy}
\end{align}
In particular, $\|P_{\Dcal}^{M}\xi\|_M\leq\Delta_{\Dcal}$ adds
$2\Delta_{\Dcal}\|d_{ij}\|_M$ to $B_{ij}$.
\end{theorem}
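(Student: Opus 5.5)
The plan is to derive every claim from one monotonicity inequality for $A_h$, which is localized at the returned state $s$ rather than at the unknown $u_h$. The key elementary fact is the scalar identity
\begin{equation*}
  (a^3-b^3)(a-b)=(a-b)^2\bigl(a^2+ab+b^2\bigr),
  \qquad
  a^2+ab+b^2=\bigl(b+\tfrac a2\bigr)^2+\tfrac34a^2\geq\tfrac34a^2 .
\end{equation*}
Applying it componentwise with $a=s_k$, $b=v_k$ and summing with the weight $h^2$ gives, for every $v\in V_h$,
\begin{equation*}
  \langle A_h(s)-A_h(v),s-v\rangle_h\geq\|s-v\|_{M_s}^2 .
\end{equation*}
The diagonal term $\frac{3\gamma}{4}\operatorname{diag}(s^{\odot2})$ of Eq.~\eqref{eq:rd-proxy-metric-main} is exactly what this bound produces. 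I expect this asymmetric lower bound, anchored only at $s$ so that the metric is computable without $u_h$, to be the main step; the remainder is bookkeeping.

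For existence and uniqueness, I will observe that $A_h$ is the gradient, in $\langle\cdot,\cdot\rangle_h$, of the functional
\begin{equation*}
  \tfrac12\langle K_hv,v\rangle_h+\tfrac\lambda2\|v\|_h^2+\tfrac\gamma4 h^2\textstyle\sum_k v_k^4-\langle f_h,v\rangle_h .
\end{equation*}
This functional is strictly convex and coercive on the finite-dimensional space $V_h$. It therefore has a unique minimizer, and that minimizer is the unique zero of $A_h$. Uniqueness also follows directly because $A_h$ is strongly monotone with constant $\lambda$. Next, $M_s$ is self-adjoint, being a sum of the self-adjoint $K_h$ and diagonal matrices. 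It is positive definite because $K_h$ is positive definite and $\lambda>0$. For the enclosure \eqref{eq:rd-state-enclosure}, I take $v=u_h$ and use $A_h(u_h)=0$. Cauchy--Schwarz in the $M_s$/$M_s^{-1}$ duality then gives
\begin{equation*}
  \|s-u_h\|_{M_s}^2\leq\langle A_h(s),s-u_h\rangle_h\leq\|A_h(s)\|_{M_s^{-1}}\|s-u_h\|_{M_s},
\end{equation*}
and dividing through by $\|s-u_h\|_{M_s}$ proves the claim.

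For the pair bound, I will invoke Eq.~\eqref{eq:task-gap-transfer} from Proposition~\ref{prop:physics-decision-representation}, together with the weighted adjoint \eqref{eq:rd-weighted-adjoint-main}. This gives
\begin{equation*}
  G_{ij}(t)-G_{ij}(q)=2\langle Q_h(u_h-s),d_{ij}\rangle_Y=2\langle u_h-s,Q_h^*d_{ij}\rangle_h .
\end{equation*}
The same Cauchy--Schwarz step combined with \eqref{eq:rd-state-enclosure} then yields $B_{ij}$. The pair-sign statement is immediate: if $|G_{ij}(q)|>B_{ij}$, then $G_{ij}(t)$ has the same strict sign as $G_{ij}(q)$. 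For the strict top-1 claim, condition \eqref{eq:rd-strict-top1-main} gives
\begin{equation*}
  R_j(t)-R_k(t)=G_{kj}(t)\geq G_{kj}(q)-B_{kj}>0\quad\text{for all }j\neq k,
\end{equation*}
so $k$ is the unique winner. For the tolerant claim, $R_k(t)-R_j(t)\leq R_k(q)-R_j(q)+B_{kj}\leq\tau$ holds for every $j$, in particular for the true minimizer, which bounds the regret by $\tau$.

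For the Gram-matrix claim, I choose a basis $b_1,\dots,b_r$ of $\mathcal B_h$, for instance extracted from the vectors $Q_h^*(y_m-y_1)$. I compute $M_s^{-1}b_m$ for each $m$ and $M_s^{-1}A_h(s)$ once, and set $\Gamma_{mn}=\langle b_m,M_s^{-1}b_n\rangle_h$. Since $Q_h^*d_{ij}=Q_h^*(y_i-y_1)-Q_h^*(y_j-y_1)$ has a coefficient vector $c_{ij}$ in this basis, we get $\|Q_h^*d_{ij}\|_{M_s^{-1}}^2=c_{ij}^{\top}\Gamma c_{ij}$ for every pair, using only these $r+1$ inverse actions.

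Finally, for the dataset discrepancy, I write $G_{ij}(t_{\rm data})-G_{ij}(t)=2\langle\xi,d_{ij}\rangle_M$. Because $d_{ij}\in\Dcal$, this equals $2\langle P_{\Dcal}^M\xi,d_{ij}\rangle_M$, which lies in $[-2\sigma_{\mathcal U_{\rm disc}}(-d_{ij}),\,2\sigma_{\mathcal U_{\rm disc}}(d_{ij})]$. Adding the interval from \eqref{eq:rd-pair-certificate} gives \eqref{eq:rd-dataset-discrepancy}. For the ball case $\|P_{\Dcal}^M\xi\|_M\leq\Delta_{\Dcal}$, Cauchy--Schwarz gives $\sigma(\pm d)\leq\Delta_{\Dcal}\|d_{ij}\|_M$, which is the stated additive term $2\Delta_{\Dcal}\|d_{ij}\|_M$.
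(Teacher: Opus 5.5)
Your proposal is correct and follows the paper's proof essentially step for step. It uses the same sharp scalar inequality $(a^3-b^3)(a-b)\geq\frac34a^2(a-b)^2$ anchored at $s$, existence and uniqueness from the strictly convex coercive discrete energy, Cauchy--Schwarz in the $M_s$/$M_s^{-1}$ pairing combined with the weighted adjoint and the squared-Hilbert gap identity, one-sided interval arithmetic for the strict and tolerant top-1 conditions, basis expansion for the Gram reduction, and support-function addition for the dataset transfer. There are two cosmetic points: you should dispose of the trivial case $s=u_h$ before dividing by $\|s-u_h\|_{M_s}$, and citing the unconditional algebraic gap expansion directly is cleaner than invoking the physics-to-decision proposition, whose linearization hypotheses are not needed here.
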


\begin{proof}
The scalar inequality
\begin{equation*}
  (a^3-b^3)(a-b)\geq\frac34a^2(a-b)^2
\end{equation*}
is sharp and follows from
$a^2+ab+b^2-3a^2/4=(b+a/2)^2$. Applied componentwise, it gives
\begin{equation*}
  \langle A_h(s)-A_h(u_h),s-u_h\rangle_h
  \geq\|s-u_h\|_{M_s}^{2}.
\end{equation*}
Strict convexity and coercivity of the discrete energy give existence and
uniqueness. Cauchy--Schwarz in the $M_s$ metric yields
Eq.~\eqref{eq:rd-state-enclosure}; the squared-Hilbert gap identity and weighted
adjoint relation yield Eq.~\eqref{eq:rd-pair-certificate}. The pair, strict and
tolerant conditions follow by one-sided interval arithmetic. Expanding every
$Q_h^*d_{ij}$ in a basis of $Q_h^*\Dcal$ gives the Gram reduction. Finally,
support-function addition transfers the same-grid interval to
$t_{\rm data}=t+\xi$.
\end{proof}

The certificate uses directional structure rather than a separate inverse
action for every pair. A basis of $Q_h^*\Dcal$ builds one Gram matrix from at
most $N-1$ comparison-basis actions; every pair bound is then a quadratic form
in that basis. The complete discrete proof,
solve-defect inflation and weighted-adjoint implementation are given in
Supplementary Section S11.

Across 24 library--input combinations and 672 candidate pairs, every pairwise,
strict top-1 and $\tau=10^{-6}$ tolerant top-1 margin exceeded its bound; the
largest bound-to-margin ratio was 0.09494 (Fig.~\ref{fig:rd-certificate}a,b).

\begin{figure}[H]
  \centering
  \includegraphics[width=\linewidth]{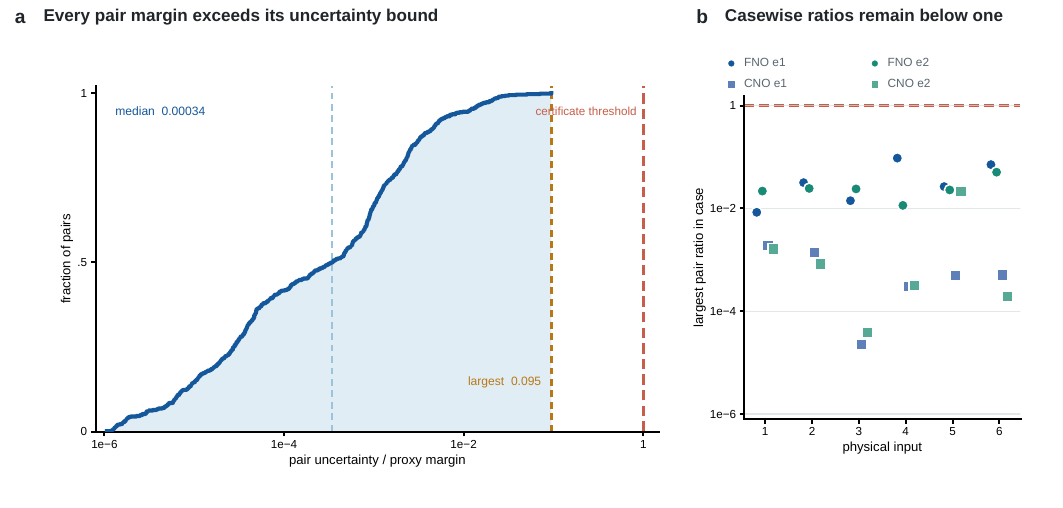}
  \caption{\textbf{Bound-to-margin ratios remain below the certification
  threshold.} \textbf{a}, Empirical cumulative distribution of 672 ratios
  $B_{ij}/|G_{ij}(q)|$; the vertical line marks the sufficient threshold of one.
  \textbf{b}, Largest ratio in each of 24 cases formed by six physical inputs
  and four libraries. All ratios are evaluated against the unique zero of the
  stated same-grid discrete operator.}
  \label{fig:rd-certificate}
\end{figure}

The theorem concerns the unique zero of Eq.~\eqref{eq:rd-operator-main} on the
stated grid. Agreement with a dataset or continuum target additionally requires
a discrepancy bound such as Eq.~\eqref{eq:rd-dataset-discrepancy}.
Cross-resolution results are given in Supplementary Section S10.
% --- End inlined file: sections/08_results_rd_certificate.tex ---
\FloatBarrier
% --- Begin inlined file: sections/10_discussion.tex ---
\section{Discussion}
\label{sec:discussion}

Neural operators are conventionally evaluated as individual predictors, yet a deployed scientific workflow frequently involves multiple plausible models. Once a finite surrogate library is established, the immediate operational challenge shifts from full-state reconstruction to candidate selection. Under a fixed squared Hilbert loss, the unknown solution influences candidate preferences exclusively through the span of candidate differences. This comparison subspace has a dimension of at most one less than the library size, regardless of the ambient state dimension. Recognizing this distinction is critical because it isolates the precise fraction of an unavailable reference required to make a reliable deployment decision.

The shared physics response translates this geometric reduction into a practical computational diagnostic. By propagating the complete constraint defect of a single library anchor through a full-state linearized solve, the method yields a corrected task output to score every candidate simultaneously. While defect correction, adjoint analysis, and a posteriori estimation supply the underlying numerical ingredients, the finite-library quotient identifies the specific family of relative decisions that a single response can resolve. In the reaction-diffusion census, candidate-wise variational estimators raised non-tie pair recovery from 59.8\% to 82.7\%, whereas the shared task proxy achieved 100\% recovery. Furthermore, candidate-specific correction-magnitude proxies produced almost identical choices to the shared calculation, confirming that library-level physics propagation preserves the necessary decision signals.

Despite this shared computational foundation, ranking and direct correction solve fundamentally different problems. Returning the corrected proxy demands accuracy across the entire task space, whereas candidate selection only requires accurate comparison coordinates and sufficient separation between predictions. In several smooth systems, the proxy proved highly effective for both purposes; however, the compressible-flow probe demonstrated that correct relative comparisons can persist even when the direct-use advantage diminishes. These scores subsequently support two distinct deployment modes. Query-time routing preserves the current input and task context, while panel-level aggregation identifies a single optimal model for repeated use. Both deployment strategies require only one post-inference physics response for the entire library, shifting the computational burden from a one-per-candidate scaling to a single shared solve. The overall operational cost will therefore depend primarily on the governing operator, the chosen discretization, and the broader serving workflow.

The reliable application of this shared response is naturally bounded by three operational constraints. First, small candidate margins inherently amplify both proxy and algebraic errors. Second, library composition directly influences the anchor state from which the physical response is computed. Because neither the arithmetic mean nor the candidate medoid demonstrated universal superiority across the tested datasets, anchor construction remains intrinsically library dependent. Third, shock-dominated dynamics may necessitate numerical damping to maintain physical admissibility. Consequently, panel-level aggregation heavily relies on whether a small, unlabeled evaluation panel accurately represents the eventual deployment population, a vulnerability highlighted by the retained Burgers-CNO failure case.

Moving beyond empirical proxy performance, the framework can yield rigorous theoretical guarantees for well-structured systems. For the strongly monotone reaction-diffusion discretization, the residual provides an exact-arithmetic bound for the unique zero of the corresponding same-grid operator, thereby certifying pairwise or top-1 decisions whenever the required margins are present. Transferring these guarantees to a dataset or continuum target will require an additional operator-data discrepancy bound. Future extensions of this work include integrating mixed and evolving candidate libraries, accommodating non-Hilbert objectives, utilizing multifidelity data, and conducting more rigorous stress tests under severe shocks or simulator mismatches. Across all such settings, the central pursuit remains unchanged: identifying precisely which features of a hidden physical solution are strictly necessary to distinguish among available machine learning predictions.
% --- End inlined file: sections/10_discussion.tex ---
% --- Begin inlined file: sections/03_method.tex ---
\section{Methods}
\label{sec:method}

\subsection*{Finite libraries and decision targets}

Let $a$ denote a deployment input and let
$\{\mathcal M_i\}_{i=1}^{N}$ be the finite candidate library. The
high-fidelity reference is unavailable when the library is queried. Candidate
$i$ returns a state trajectory
$v_i=\mathcal M_i(a)$ and a task
output
\begin{equation}
  y_i=Q_a(v_i)\in\Hcal,
\end{equation}
where $(\Hcal,\metricip{\cdot}{\cdot})$ is a fixed Hilbert space after all outputs
have been lifted to a common grid.  If $u_a$ is the numerical reference and
$t=Q_a(u_a)$, the hidden squared task loss is
$R_i(t)=\metricnorm{t-y_i}^{2}$.  The selector is specified without access to the
reference $t$. Throughout, ``reference-free'' means that scoring uses no
deployment reference solution or reference-derived label; the governing
operator, problem data and task remain available. In the Burgers and
reaction--diffusion comparisons, each library
contains candidates from one architecture; the two-dimensional compressible-flow analysis instead
compares one public FNO with one public U-Net. The score depends on the lifted
outputs rather than the model parameters, so architecture matching is an
experimental choice rather than a mathematical requirement.
Every experimental state-to-task map is linear after the fixed common-grid
lift: it is either the identity map or a terminal-state trace. The derivative
notation below retains the framework's extension to smooth nonlinear task maps.
The theorem-backed analyses use these maps with squared-Hilbert losses. Some
task-transfer distances applied to the same outputs are nonlinear and are
reported only as empirical transfer tests; they do not inherit the exact
squared-Hilbert comparison identity.

For an unordered pair $i<j$, define the diagnostic and truth gaps
\begin{equation}
  G_{ij}^{q}=S_j-S_i,
  \qquad
  G_{ij}^{t}=R_j(t)-R_i(t).
  \label{eq:gap-convention}
\end{equation}
Thus $G_{ij}>0$ means that candidate $i$ is preferred. A
per-case tolerance $\tau$ maps each gap to one of three labels: $i$ preferred,
tie, or $j$ preferred.  A strict pair comparison sets $\tau=0$ and tests the
resulting ordering for every unordered pair.  ``Non-tie pairwise accuracy'' instead
conditions on non-ties of the hidden-reference label, whereas ``all-pair three-way
accuracy'' retains the tie class.  Exact top-1 requires the same minimizing
checkpoint; tolerant top-1 accepts a selected checkpoint whose hidden loss is
within the chosen distance of the minimum. For the eight Burgers and
reaction--diffusion libraries, this distance is $10^{-6}$. These comparisons
use non-tie pairwise and tolerant top-1 recovery. Their all-pair sensitivity
analysis uses a $10^{-6}$
hidden-reference tolerance, a $1.01\times10^{-8}$ propagated-score numerical
tolerance (the fixed $10^{-10}$ absolute floor plus a $10^{-8}$ unit-scale
relative floor), and a $10^{-12}$ raw-score tolerance; a matched-tolerance
sensitivity analysis applies $10^{-6}$ to both scores. The 16-library comparison
uses strict pairwise and exact top-1, and
the Sine--Gordon study uses strict pairwise and exact top-1 with numerically
unresolved outcomes counted as errors.  The public PDEBench study also uses strict
pairwise and exact top-1 for one fixed library.

\subsection*{Study design and statistical units}

We crossed two PDEs, two architectures and two independently seeded ensembles
to form eight candidate libraries. Each library contained the
epoch-20 and epoch-40 checkpoints from four training runs, giving 64 checkpoints
from 32 runs.  Eligibility required finite metrics, in-distribution relative
$L^2$ error at most one, exact hard initial or boundary conditions where
applicable, and completion of the full eight-candidate library. Each PDE
contributed 240 controlled-shift inputs, giving 480 distinct physical inputs and
1,920 input--library cases; none overlapped with training, validation or
development data. Candidate eligibility, the anchor rule, task metric, tie
tolerances, selectors and evaluation inputs were specified before reference
evaluation. Within each input--library case, every candidate was scored against
the same library anchor and metric. Confidence intervals use 10,000
input-clustered bootstrap replicates.

Each eight-candidate library yields 28 dependent pair rows per input. Confidence
intervals therefore resample inputs while preserving the coupling of
architectures and ensembles evaluated on the same input. For the 16-library
comparison, resampling likewise preserves matched libraries and shared inputs.
Its hierarchical estimator first averages each metric over inputs
within each library and then gives each of the 16 libraries equal weight; it
therefore differs from the pooled 3,800/3,840 case count. For Sine--Gordon,
the cluster is the shared input identifier across architectures, libraries, and pairs.  A
Sine--Gordon pair was unresolved when the absolute fine-grid loss gap did not
exceed the sum of the candidates' coarse-to-fine loss changes; the top-1 rule
applied the analogous check to the best--runner-up gap. These outcomes were
counted as errors. The public PDEBench comparison contains one fixed
three-candidate library and is therefore summarized by exact counts.

\subsection*{Shared correction and candidate scores}

Let $\Acal_a(v)$ be the assembled discrete or continuous constraint defect,
including initial and boundary components rather than only the interior PDE
residual; components enforced exactly appear as zero rows. A
deterministic, reference-free library-anchor rule $\mathcal C$ produces
\begin{equation}
  w=\mathcal C(v_1,\ldots,v_N).
\end{equation}
The eight Burgers and reaction--diffusion libraries used the arithmetic
candidate mean. The 16-library, Sine--Gordon and public PDEBench comparisons
used the geometric medoid: the candidate
state minimizing the sum of distances to all candidates in the fixed anchor
metric.  The two choices define distinct variants of the framework.  Burgers and
reaction--diffusion medoid distances use the Euclidean norm over all stored state
entries on the diagnostic grid.  Sine--Gordon uses trapezoidal time--space $L^2$
distance over displacement and velocity, with velocity weight one. PDEBench uses
the corresponding physical time--space distance over both released fields. Exact
medoid ties follow the fixed candidate order. The anchor rule is part of the
method specification and is chosen without the deployment reference. The
composition study below shows that no single anchor was uniformly preferable
across the tested libraries; the response must be used in a library regime for
which its anchor and linearization remain informative.

At $w$, form $r_w=\Acal_a(w)$ and the state-dependent Jacobian
$L_w=D\Acal_a(w)$. The shared correction satisfies
\begin{equation}
  L_w\delta_w=-r_w+\ell,
  \label{eq:shared-correction}
\end{equation}
where $\ell$ is the algebraic solve defect and vanishes for an exact solve.
The corresponding corrected state, linearized task estimate and evaluated task
estimate are
\begin{equation}
  s=w+\delta_w,
  \qquad
  q_{\rm lin}=Q_a(w)+DQ_a(w)\delta_w,
  \qquad
  q_{\rm eval}=Q_a(s),
  \label{eq:q-general}
\end{equation}
respectively. Every task map used here is linear, so
$q\equiv q_{\rm lin}=q_{\rm eval}$ exactly; we retain the symbol $q$ for
the task proxy and $s$ for the full corrected state or
trajectory. For a nonlinear $Q_a$, the linearized and evaluated proxies remain
distinct. The
propagated score is Eq.~\eqref{eq:proxy-score}.  Equation~\eqref{eq:shared-correction}
is solved once per input and library; the resulting $q$ is shared by all
$N$ candidate comparisons. This implementation is a full-state linearized
solve, not an explicitly reduced-order solver; the finite-library quotient
specifies which component of its output is consumed by ranking.
Figure~\ref{fig:decision-framework} summarizes this
shared physics-to-decision path, and Fig.~\ref{fig:prospective-recovery}
reports its recovery and comparisons.

The raw residual baseline evaluates each candidate's assembled defect energy,
\begin{equation}
  S_i^{\mathrm{raw}}=\norm{\Acal_a(v_i)}_{Y,a}^{2},
\end{equation}
using the fixed residual-product norm.  This baseline uses the same candidates,
inputs and hidden references as the propagated method, while scoring
the defect directly instead of forming a task-space estimate.  Its exact block,
time, and quadrature weights are defined in Supplementary Section S6.  A higher-cost
control solves
$D\Acal_a(v_i)\delta_i=-\Acal_a(v_i)$ for every candidate and assigns
\begin{equation}
  q_i^{\mathrm{cand}}=Q_a(v_i)+DQ_a(v_i)\delta_i,
  \qquad
  S_i^{\mathrm{cand}}=\metricnorm{q_i^{\mathrm{cand}}-y_i}^{2}.
  \label{eq:candidate-specific-control}
\end{equation}
Thus the control uses one candidate-derived correction per candidate rather than
one common correction for the library.  It ranks candidates by increasing
$S_i^{\mathrm{cand}}$ and selects
$\widehat i_{\mathrm{cand}}=\arg\min_i S_i^{\mathrm{cand}}$; evaluation uses the
same pair and top-1 conventions defined above.  This score is the task-space norm
of the candidate-specific correction---because $y_i=Q_a(v_i)$, it reduces to
$\lVert DQ_a(v_i)\delta_i\rVert_M^2$---and serves as a candidate-specific
linearized error proxy. Comparing it with the shared score tests whether one
library-level correction retains the decisions of $N$ separate corrections.

Candidate-only baselines rank candidates by their in-distribution validation
loss or by squared task-space distance to the candidate mean or geometric
medoid. The validation ordering is independent of the deployment input, whereas
the two centrality scores depend on the current candidate outputs but do not use
the governing equation. For any selector evaluated against a finite set of
hidden losses, normalized regret is the selected excess loss divided by the
best-to-worst loss span, with a denominator floor of $10^{-8}$.

The component-cost comparison used nested $N=2,4,8,16$ libraries, two
architectures, three input strata and ten repeated CPU measurements. Timers
began after candidate outputs were available and covered only the shared or
candidate-specific physics calculation. The reported ratio therefore compares
one shared response with $N$ separate responses after candidate inference; the
complete timing design is reported in
Supplementary Section S9.

\begin{algorithm}[t]
\caption{Shared physics ranking for one deployment input}
\label{alg:shared-ranking}
\begin{algorithmic}[1]
\Require Candidate states $v_1,\ldots,v_N$; complete constraint $\Acal_a$;
linear task map $Q_a$; anchor rule $\mathcal C$, lift, and metric $M$.
\State Lift all candidates to their shared state space.
\State $w\gets\mathcal C(v_1,\ldots,v_N)$.
\State Evaluate $r_w\gets\Acal_a(w)$, retaining initial/boundary rows.
\State Compute $\delta_w$ from $D\Acal_a(w)\delta_w\simeq-r_w$ once and record
$\ell\gets D\Acal_a(w)\delta_w+r_w$.
\State Form $s\gets w+\delta_w$ and
$q\gets Q_a(w)+DQ_a(w)\delta_w=Q_a(s)$.
\For{$i=1,\ldots,N$}
  \State $y_i\gets Q_a(v_i)$; $S_i\gets\metricnorm{q-y_i}^{2}$.
\EndFor
\State \Return the ranked candidates, pair labels, or $\arg\min_i S_i$.
\end{algorithmic}
\end{algorithm}

\subsection*{Viscous Burgers}
On the periodic line, the equation is
$u_t-\nu u_{xx}+u u_x=f$ with prescribed initial state.  The candidate output is
the trajectory and the task output is its terminal state.  The complete defect
is
\begin{equation}
  \Acal_a(w)=\bigl(w(0)-u_0,\;w_t-\nu w_{xx}+w w_x-f\bigr),
\end{equation}
and
$L_wh=(h(0),h_t-\nu h_{xx}+\partial_x(wh))$.  The implementation uses periodic
Fourier differentiation \citep{Trefethen2000}, a time-marching tangent solve,
and a physical $L^2$ terminal metric after Fourier lifting to the common grid.

\subsection*{Cubic reaction--diffusion}
On $\Omega=(0,L)^2$ with zero Dirichlet boundary,
\begin{equation}
  -\nabla\!\cdot(a\nabla u)+\lambda u+\gamma u^3=f,
  \qquad u|_{\partial\Omega}=0.
\end{equation}
States include boundary nodes.  Boundary entries of $\Acal_a$ are the Dirichlet
defect, and interior entries are a conservative variable-coefficient
finite-difference defect.  The correction on the boundary is fixed exactly by the
Dirichlet rows; its known contribution is transferred to the interior right-hand
side before solving the boundary-eliminated system.  The interior Jacobian contains
the reaction factor $\lambda+3\gamma w^2$.
Candidate and corrected fields are compared after common interpolation in a fixed
tensor-product physical quadrature metric.  The task output is the complete
stationary solution field on this common grid, including boundary nodes.  Thus
reaction--diffusion tests the identity-task case, while the Burgers and
Sine--Gordon terminal tasks exercise task compression.

\subsection*{Reaction--diffusion decision certificate}

The certificate calculations use four reaction--diffusion groups, each with
eight candidates, and two inputs from each of three shift strata. This gives 24
library--input combinations and 672 unordered pairs. The
bounds depend on the candidates, forcing, coefficients, operator, anchor rule
and tolerance.

All certificate operators act on the 128-by-128 interior space after verifying
exact zero boundary values.  With the identity task and its $h^2$-weighted inner
product, the corrected proxy is $s=w+\delta_w$ and $q=s$. For each $s$, we
formed $M_s$ from Eq.~\eqref{eq:rd-proxy-metric-main}, computed the residual dual norm,
and solved a basis of $\mathcal B_h=Q_h^*\Dcal$, requiring
$r=\dim\mathcal B_h\leq N-1$ spanning right-hand sides. Approximate
inverse actions were inflated by their algebraic solve defects and an analytic
coercivity lower bound. Certificate quantities were evaluated in float64; the
theorem itself is stated in exact arithmetic.

\subsection*{Candidate-wise residual-estimator controls}

The residual-estimator comparison used all 240 reaction--diffusion inputs, four
eight-candidate libraries and two state realizations: the stored candidate
fields transferred to the comparison grid and the same checkpoints rerun on
that grid. This produced 1,920 input--library--scenario cases.
The controls compared the raw product residual, a common-energy dual norm based
on $H_h=K_h+\lambda I$, a candidate-adapted $L^2$ majorant and the shared
corrected proxy.  Each variational selector applies an inverse operator to all
eight candidate residuals; the shared score uses the single correction already
defined above.  Component timings begin after candidate outputs are available
and exclude inference, data movement, reference computation and end-to-end
serving. The complete estimator definitions and results appear in
Supplementary Section S13.

\subsection*{Sine--Gordon}
The third PDE is the periodic phase-space system
\begin{equation}
  u_t=v,\qquad v_t=c^2u_{xx}-\sin u+f.
\end{equation}
Its complete defect is defined by the initial displacement/velocity errors and all
steps of the fixed velocity-Verlet map \citep{HairerLubichWanner2006}.  The exact
Jacobian of that discrete map is block lower triangular, so
Eq.~\eqref{eq:shared-correction} is solved by a
forward sweep.  The task is terminal displacement and velocity, compared in the
physical phase-space $L^2$ metric with velocity weight one after periodic
Fourier lifting.

\subsection*{PDEBench two-field diffusion--reaction}
The released PDEBench models predict trajectories for the homogeneous-Neumann
system
\begin{equation}
  u_t=u-u^3-k-v+D_u\Delta u,
  \qquad
  v_t=u-v+D_v\Delta v,
\end{equation}
with $(D_u,D_v,k)=(10^{-3},5\times10^{-3},5\times10^{-3})$.  The candidate
trajectories begin with ten observed frames on the released $128\times128$
cell-centred grid.  Every checkpoint receives the same 20-channel history and
predicts one two-field next frame.  Irrespective of its training curriculum, each
checkpoint is evaluated by the same 91-step autoregressive rollout: the oldest
frame is dropped and the prediction appended until the 101-frame trajectory is
complete.  From the last observed frame onward, the complete defect
contains the known initial row and every trapezoidal-map defect.  Its exact
linearization is solved forward in time with a preconditioned sparse solve.  The
raw and propagated diagnostics therefore use identical candidate trajectories and
time horizons.  The task is the terminal two-field state under cell-area-weighted
squared $L^2$ loss.

The task-transfer analysis uses the same 99 inputs, three public checkpoints
and 91 predicted future frames. The already computed trajectory $s$ is scored
under ten objective-specific distances. These
comprised terminal and trajectory squared $L^2$, RMSE, a normalized-RMSE plug-in,
conservation, maximum and boundary errors, and three fixed restricted-quadrant
spectral bands. Only
the two squared-$L^2$ tasks lie directly within the exact Hilbert comparison
identity.  The normalized score uses the root-mean-square magnitude of its own
reference argument, so candidate-to-truth and candidate-to-proxy evaluations do
not share one fixed denominator. The ten metrics therefore provide correlated
views of the same 99 inputs.

\subsection*{Compressible-flow boundary studies}

The one-dimensional shock probe used 35 samples from an official PDEBench
held-out file and three released checkpoints: two FNOs with different history
contracts and one U-Net. Candidate states were transformed to conservative
variables and combined by an arithmetic mean. The comparison operator was a separately
implemented fixed-substep Rusanov--SSPRK2 map rather than the adaptive HLLC data
generator, so the dataset truth has a nonzero operator defect. A reference-free
backtracking rule tested multipliers $0$, $0.25$, $0.5$, $0.75$ and $1$ and chose
the largest value that preserved positive density and pressure without
increasing that defect. Inadmissible unit steps remained errors in strict
decision counts. Supplementary Section S15 gives the complete
results.

The two-dimensional probe used one public FNO and one public push-forward-20
U-Net for periodic PDEBench compressible-flow data at $64\times64$ resolution.
We used ten trajectories. Both candidates received the same ten observed frames and were
rolled autoregressively for eleven further frames.  Primitive variables were
converted to density, momenta and total energy before constructing the
arithmetic mean.  A clean-room periodic Rusanov finite-volume map with
density-weighted viscous fluxes and SSPRK2 time integration defined the comparison
operator; finite differences supplied Jacobian actions.  Because this operator
does not replay the dataset generator exactly, these results characterize an
operator-mismatched regime. The partial path
used $\alpha\in\{0,0.25,0.5,0.75,1,1.25\}$ with reference-free admissibility and
defect checks. Additional medoid results are given in Supplementary Section S14.

\subsection*{Numerical references used for evaluation}

Reference losses were computed from independent high-resolution solutions. For
the eight Burgers and reaction--diffusion libraries, Burgers references used integrating-factor RK4 on
independent 768- and 1,536-point Fourier grids; reaction--diffusion used 511- and
767-point interior grids. The 16-library comparison increased these pairs to
1,536/3,072 and 767/1,023, respectively.  Reaction--diffusion references used
damped Newton iteration with requested relative and absolute tolerances
$10^{-11}$ and $10^{-13}$, and a preconditioned linear solve with requested
relative tolerance $10^{-12}$.  Sine--Gordon references used spectral
velocity-Verlet at 384 spatial points with 6,144 time steps and at 768 points with
12,288 steps. Refinement, residual, replay, and Jacobian-closure
checks are detailed in Supplementary Section S6. For PDEBench, terminal states in
the released numerical trajectories provided the reference states.

These realizations share the decision object but use different discretizations,
anchor rules, and evaluation metrics.

\subsection*{Panel-level model selection}

For an unlabeled selection panel $A$, aggregate the same per-input score,
\begin{equation}
  \bar S_i(A)=\frac{1}{|A|}\sum_{a\in A}S_i(a),
  \qquad \widehat i(A)=\arg\min_i\bar S_i(A).
  \label{eq:panel-aggregate}
\end{equation}
For squared-Hilbert tasks, the instance-wise comparison identity averages
exactly over the panel. With
$G_{ij,a}^{t}$ and $G_{ij,a}^{q}$ denoting the truth and diagnostic gaps for input
$a$,
\begin{equation}
 \frac{1}{|A|}\sum_{a\in A}(G_{ij,a}^{t}-G_{ij,a}^{q})
 =\frac{2}{|A|}\sum_{a\in A}
   \metricip{t_a-q_a}{y_i(a)-y_j(a)}.
 \label{eq:panel-gap-average}
\end{equation}
Thus panel-level selection averages the same decision coordinates over the
inputs. On an evaluation set $D$, only $\mathcal M_{\widehat i(A)}$ is run. Its
mean task loss $R(\widehat i)$ is compared with the best fixed checkpoint
$i^\star=\arg\min_i R(i)$ through absolute regret and
\begin{equation}
  \operatorname{NRegret}(\widehat i)=
  \frac{R(\widehat i)-R(i^\star)}
       {\max\{R(i^{\mathrm{worst}})-R(i^\star),10^{-8}\}}.
  \label{eq:nregret}
\end{equation}
Uncertainty estimates treat inputs as the sampling units and preserve the
dependence among candidate pairs from the same input.

\subsection*{Use of generative language tools}

Generative language tools assisted manuscript organization, prose revision and
code review. The authors verified the reported mathematics and numerical
results and take responsibility for the final manuscript. No experimental data
were generated by these tools.
% --- End inlined file: sections/03_method.tex ---

\section*{Data availability}
The numerical data underlying Figs.~2--6 and all quantitative Supplementary
figures and tables are provided in the accompanying Source Data file. The
processed data and supporting files used in the analyses are available at
\url{https://github.com/ToughClimb/OpPropRank-open-source}; the version used in
this study is
\href{https://github.com/ToughClimb/OpPropRank-open-source/commit/dafb2a06d0815a6678174fc9e1aadc9055b7e69b}{commit \texttt{dafb2a0}}.
The larger archive of candidate predictions, reference solutions and selection
results is available in the
\href{https://github.com/ToughClimb/OpPropRank-open-source/releases/tag/data-v1.0.0}{\texttt{data-v1.0.0} release}.
The PDEBench data and pretrained models used in this study are available under
CC BY 4.0 from \href{https://doi.org/10.18419/DARUS-2986}{DaRUS-2986} and
\href{https://doi.org/10.18419/DARUS-2987}{DaRUS-2987}, respectively.

\section*{Code availability}
The code and tests needed to reproduce the numerical analyses are available at
\url{https://github.com/ToughClimb/OpPropRank-open-source} (commit
\href{https://github.com/ToughClimb/OpPropRank-open-source/commit/dafb2a06d0815a6678174fc9e1aadc9055b7e69b}{\texttt{dafb2a0}}).
The accompanying source package also contains the scripts used to generate all
six main figures.

\section*{Author contributions}
H.L. conceived the study, developed the theory and methodology, implemented the
software, designed and performed the computational experiments, analysed the
data, prepared the figures and drafted the manuscript. F.L. supervised the
research, provided project guidance and reviewed and edited the manuscript.
Both authors reviewed and approved the final manuscript.

\section*{Funding}
This work was supported by the Science and Technology Development Program of
Jilin Province under Grant No.~20250102032JC.

\section*{Ethics declarations}
No ethical approval was required because this study used computational
experiments and publicly available or generated data and did not involve human
participants or animals.

\section*{Competing interests}
The authors declare no competing interests.


\begingroup
\small
\begin{thebibliography}{26}
\providecommand{\natexlab}[1]{#1}
\providecommand{\url}[1]{\texttt{#1}}
\expandafter\ifx\csname urlstyle\endcsname\relax
  \providecommand{\doi}[1]{doi: #1}\else
  \providecommand{\doi}{doi: \begingroup \urlstyle{rm}\Url}\fi

\bibitem[Li et~al.(2021)Li, Kovachki, Azizzadenesheli, Liu, Bhattacharya,
  Stuart, and Anandkumar]{LiEtAl2021FNO}
Zongyi Li, Nikola Kovachki, Kamyar Azizzadenesheli, Burigede Liu, Kaushik
  Bhattacharya, Andrew Stuart, and Anima Anandkumar.
\newblock Fourier neural operator for parametric partial differential
  equations.
\newblock In \emph{International Conference on Learning Representations}, 2021.
\newblock URL \url{https://openreview.net/forum?id=c8P9NQVtmnO}.

\bibitem[Lu et~al.(2021)Lu, Jin, Pang, Zhang, and
  Karniadakis]{LuEtAl2021DeepONet}
Lu~Lu, Pengzhan Jin, Guofei Pang, Zhongqiang Zhang, and George~Em Karniadakis.
\newblock Learning nonlinear operators via {DeepONet} based on the universal
  approximation theorem of operators.
\newblock \emph{Nature Machine Intelligence}, 3:\penalty0 218--229, 2021.
\newblock \doi{10.1038/s42256-021-00302-5}.

\bibitem[Kovachki et~al.(2023)Kovachki, Li, Liu, Azizzadenesheli, Bhattacharya,
  Stuart, and Anandkumar]{KovachkiEtAl2023}
Nikola Kovachki, Zongyi Li, Burigede Liu, Kamyar Azizzadenesheli, Kaushik
  Bhattacharya, Andrew Stuart, and Anima Anandkumar.
\newblock Neural operator: Learning maps between function spaces with
  applications to {PDE}s.
\newblock \emph{Journal of Machine Learning Research}, 24\penalty0
  (89):\penalty0 1--97, 2023.
\newblock URL \url{https://www.jmlr.org/papers/v24/21-1524.html}.

\bibitem[Raoni{\'c} et~al.(2023)Raoni{\'c}, Molinaro, De~Ryck, Rohner,
  Bartolucci, Alaifari, Mishra, and de~B{\'e}zenac]{RaonicEtAl2023CNO}
Bogdan Raoni{\'c}, Roberto Molinaro, Tim De~Ryck, Tobias Rohner, Francesca
  Bartolucci, Rima Alaifari, Siddhartha Mishra, and Emmanuel de~B{\'e}zenac.
\newblock Convolutional neural operators for robust and accurate learning of
  {PDE}s.
\newblock In \emph{Advances in Neural Information Processing Systems},
  volume~36, 2023.
\newblock URL
  \url{https://proceedings.neurips.cc/paper_files/paper/2023/hash/f3c1951b34f7f55ffaecada7fde6bd5a-Abstract-Conference.html}.

\bibitem[Takamoto et~al.(2022)Takamoto, Praditia, Leiteritz, MacKinlay,
  Alesiani, Pfl{\"u}ger, and Niepert]{TakamotoEtAl2022PDEBench}
Makoto Takamoto, Timothy Praditia, Raphael Leiteritz, Dan MacKinlay, Francesco
  Alesiani, Dirk Pfl{\"u}ger, and Mathias Niepert.
\newblock {PDEBench}: An extensive benchmark for scientific machine learning.
\newblock In \emph{36th Conference on Neural Information Processing Systems
  Datasets and Benchmarks Track}, 2022.
\newblock URL
  \url{https://proceedings.neurips.cc/paper_files/paper/2022/hash/0a9747136d411fb83f0cf81820d44afb-Abstract-Datasets_and_Benchmarks.html}.

\bibitem[Gupta and Brandstetter(2023)]{GuptaBrandstetter2023}
Jayesh~K. Gupta and Johannes Brandstetter.
\newblock Towards multi-spatiotemporal-scale generalized {PDE} modeling.
\newblock \emph{Transactions on Machine Learning Research}, 2023.
\newblock URL \url{https://openreview.net/forum?id=dPSTDbGtBY}.

\bibitem[Ohana et~al.(2024)Ohana, McCabe, Meyer, Morel, Agocs, Beneitez,
  Berger, Burkhart, Dalziel, Fielding, Fortunato, Goldberg, Hirashima, Jiang,
  Kerswell, Maddu, Miller, Mukhopadhyay, Nixon, Shen, Watteaux,
  R{\'e}galdo-Saint~Blancard, Rozet, Parker, Cranmer, and Ho]{OhanaEtAl2024}
Ruben Ohana, Michael McCabe, Lucas Meyer, Rudy Morel, Fruzsina~J. Agocs, Miguel
  Beneitez, Marsha Berger, Blakesley Burkhart, Stuart~B. Dalziel, Drummond~B.
  Fielding, Daniel Fortunato, Jared~A. Goldberg, Keiya Hirashima, Yan-Fei
  Jiang, Rich~R. Kerswell, Suryanarayana Maddu, Jonah Miller, Payel
  Mukhopadhyay, Stefan~S. Nixon, Jeff Shen, Romain Watteaux, Bruno
  R{\'e}galdo-Saint~Blancard, Fran{\c c}ois Rozet, Liam~H. Parker, Miles
  Cranmer, and Shirley Ho.
\newblock The well: a large-scale collection of diverse physics simulations for
  machine learning.
\newblock In \emph{Advances in Neural Information Processing Systems},
  volume~37, pages 44989--45037. Curran Associates, Inc., 2024.
\newblock \doi{10.52202/079017-1430}.
\newblock URL
  \url{https://proceedings.neurips.cc/paper_files/paper/2024/hash/4f9a5acd91ac76569f2fe291b1f4772b-Abstract-Datasets_and_Benchmarks_Track.html}.
\newblock Datasets and Benchmarks Track.

\bibitem[Wei et~al.(2023)Wei, Wong, Sung, Gupta, Ooi, Chiu, Dao, and
  Ong]{WeiEtAl2023PINNSelection}
Zhao Wei, Jian~Cheng Wong, Nicholas Sung, Abhishek Gupta, Chin~Chun Ooi,
  Pao-Hsiung Chiu, My~Ha Dao, and Yew~Soon Ong.
\newblock How to select physics-informed neural networks in the absence of
  ground truth: A pareto front-based strategy.
\newblock In \emph{ICML 2023 Workshop on the Synergy of Scientific and Machine
  Learning Modelling}, 2023.
\newblock URL \url{https://icml.cc/virtual/2023/28612}.
\newblock Workshop paper.

\bibitem[Garg et~al.(2022)Garg, Balakrishnan, Lipton, Neyshabur, and
  Sedghi]{GargEtAl2022}
Saurabh Garg, Sivaraman Balakrishnan, Zachary~Chase Lipton, Behnam Neyshabur,
  and Hanie Sedghi.
\newblock Leveraging unlabeled data to predict out-of-distribution performance.
\newblock In \emph{International Conference on Learning Representations}, 2022.
\newblock URL \url{https://openreview.net/forum?id=o_HsiMPYh_x}.

\bibitem[Guillory et~al.(2021)Guillory, Shankar, Ebrahimi, Darrell, and
  Schmidt]{GuilloryEtAl2021}
Devin Guillory, Vaishaal Shankar, Sayna Ebrahimi, Trevor Darrell, and Ludwig
  Schmidt.
\newblock Predicting with confidence on unseen distributions.
\newblock In \emph{Proceedings of the IEEE/CVF International Conference on
  Computer Vision (ICCV)}, pages 1114--1124, 2021.
\newblock \doi{10.1109/ICCV48922.2021.00117}.

\bibitem[Raissi et~al.(2019)Raissi, Perdikaris, and
  Karniadakis]{RaissiEtAl2019}
M.~Raissi, P.~Perdikaris, and George~E. Karniadakis.
\newblock Physics-informed neural networks: A deep learning framework for
  solving forward and inverse problems involving nonlinear partial differential
  equations.
\newblock \emph{Journal of Computational Physics}, 378:\penalty0 686--707,
  2019.
\newblock \doi{10.1016/j.jcp.2018.10.045}.

\bibitem[Karniadakis et~al.(2021)Karniadakis, Kevrekidis, Lu, Perdikaris, Wang,
  and Yang]{KarniadakisEtAl2021}
George~Em Karniadakis, Ioannis~G. Kevrekidis, Lu~Lu, Paris Perdikaris, Sifan
  Wang, and Liu Yang.
\newblock Physics-informed machine learning.
\newblock \emph{Nature Reviews Physics}, 3:\penalty0 422--440, 2021.
\newblock \doi{10.1038/s42254-021-00314-5}.

\bibitem[Li et~al.(2024)Li, Zheng, Kovachki, Jin, Chen, Liu, Azizzadenesheli,
  and Anandkumar]{LiEtAl2024PINO}
Zongyi Li, Hongkai Zheng, Nikola Kovachki, David Jin, Haoxuan Chen, Burigede
  Liu, Kamyar Azizzadenesheli, and Anima Anandkumar.
\newblock Physics-informed neural operator for learning partial differential
  equations.
\newblock \emph{ACM/IMS Journal of Data Science}, 1\penalty0 (3):\penalty0
  1--27, 2024.
\newblock \doi{10.1145/3648506}.

\bibitem[Cao et~al.(2023)Cao, O'Leary-Roseberry, Jha, Oden, and
  Ghattas]{CaoEtAl2023}
Lianghao Cao, Thomas O'Leary-Roseberry, Prashant~K. Jha, J.~Tinsley Oden, and
  Omar Ghattas.
\newblock Residual-based error correction for neural operator accelerated
  infinite-dimensional bayesian inverse problems.
\newblock \emph{Journal of Computational Physics}, 486:\penalty0 112104, 2023.
\newblock \doi{10.1016/j.jcp.2023.112104}.

\bibitem[Jha(2024)]{Jha2024}
Prashant~K. Jha.
\newblock Residual-based error corrector operator to enhance accuracy and
  reliability of neural operator surrogates of nonlinear variational
  boundary-value problems.
\newblock \emph{Computer Methods in Applied Mechanics and Engineering},
  419:\penalty0 116595, 2024.
\newblock \doi{10.1016/j.cma.2023.116595}.

\bibitem[Huang and Perdikaris(2026)]{HuangPerdikaris2026PhysicsCorrect}
Xinquan Huang and Paris Perdikaris.
\newblock {PhysicsCorrect}: A training-free approach for stable neural {PDE}
  simulations.
\newblock \emph{Proceedings of the AAAI Conference on Artificial Intelligence},
  40\penalty0 (26):\penalty0 22057--22065, 2026.
\newblock \doi{10.1609/aaai.v40i26.39360}.

\bibitem[Prudhomme and Oden(1999)]{PrudhommeOden1999}
Serge Prudhomme and J.~Tinsley Oden.
\newblock On goal-oriented error estimation for elliptic problems: Application
  to the control of pointwise errors.
\newblock \emph{Computer Methods in Applied Mechanics and Engineering},
  176\penalty0 (1--4):\penalty0 313--331, 1999.
\newblock \doi{10.1016/S0045-7825(98)00343-0}.

\bibitem[Oden and Prudhomme(2001)]{OdenPrudhomme2001}
J.~Tinsley Oden and Serge Prudhomme.
\newblock Goal-oriented error estimation and adaptivity for the finite element
  method.
\newblock \emph{Computers \& Mathematics with Applications}, 41\penalty0
  (5--6):\penalty0 735--756, 2001.
\newblock \doi{10.1016/S0898-1221(00)00317-5}.

\bibitem[Becker and Rannacher(2001)]{BeckerRannacher2001}
Roland Becker and Rolf Rannacher.
\newblock An optimal control approach to a posteriori error estimation in
  finite element methods.
\newblock \emph{Acta Numerica}, 10:\penalty0 1--102, 2001.
\newblock \doi{10.1017/S0962492901000010}.

\bibitem[Giles and S{\"u}li(2002)]{GilesSuli2002}
Michael~B. Giles and Endre S{\"u}li.
\newblock Adjoint methods for {PDE}s: A posteriori error analysis and
  postprocessing by duality.
\newblock \emph{Acta Numerica}, 11:\penalty0 145--236, 2002.
\newblock \doi{10.1017/S096249290200003X}.

\bibitem[Hesthaven et~al.(2016)Hesthaven, Rozza, and Stamm]{HesthavenEtAl2016}
Jan~S. Hesthaven, Gianluigi Rozza, and Benjamin Stamm.
\newblock \emph{Certified Reduced Basis Methods for Parametrized Partial
  Differential Equations}.
\newblock SpringerBriefs in Mathematics. Springer, 2016.
\newblock \doi{10.1007/978-3-319-22470-1}.

\bibitem[Fanaskov et~al.(2024)Fanaskov, Rudikov, and
  Oseledets]{FanaskovEtAl2024}
Vladimir Fanaskov, Alexander Rudikov, and Ivan Oseledets.
\newblock Neural functional a posteriori error estimates.
\newblock \emph{arXiv preprint arXiv:2402.05585}, 2024.
\newblock \doi{10.48550/arXiv.2402.05585}.

\bibitem[Eiras et~al.(2024)Eiras, Bibi, Bunel, Dvijotham, Torr, and
  Kumar]{EirasEtAl2024}
Francisco Eiras, Adel Bibi, Rudy~R. Bunel, Krishnamurthy Dvijotham, Philip
  Torr, and M.~Pawan Kumar.
\newblock Efficient error certification for physics-informed neural networks.
\newblock In \emph{Proceedings of the 41st International Conference on Machine
  Learning}, volume 235 of \emph{Proceedings of Machine Learning Research},
  pages 12318--12347. PMLR, 2024.
\newblock URL \url{https://proceedings.mlr.press/v235/eiras24a.html}.

\bibitem[Ronneberger et~al.(2015)Ronneberger, Fischer, and
  Brox]{RonnebergerEtAl2015}
Olaf Ronneberger, Philipp Fischer, and Thomas Brox.
\newblock {U-Net}: Convolutional networks for biomedical image segmentation.
\newblock In \emph{Medical Image Computing and Computer-Assisted Intervention
  -- MICCAI 2015}, volume 9351 of \emph{Lecture Notes in Computer Science},
  pages 234--241. Springer, 2015.
\newblock \doi{10.1007/978-3-319-24574-4_28}.

\bibitem[Trefethen(2000)]{Trefethen2000}
Lloyd~N. Trefethen.
\newblock \emph{Spectral Methods in {MATLAB}}, volume~10 of \emph{Software,
  Environments, and Tools}.
\newblock Society for Industrial and Applied Mathematics, Philadelphia, PA,
  2000.
\newblock \doi{10.1137/1.9780898719598}.

\bibitem[Hairer et~al.(2006)Hairer, Lubich, and Wanner]{HairerLubichWanner2006}
Ernst Hairer, Christian Lubich, and Gerhard Wanner.
\newblock \emph{Geometric Numerical Integration: Structure-Preserving
  Algorithms for Ordinary Differential Equations}, volume~31 of \emph{Springer
  Series in Computational Mathematics}.
\newblock Springer, Berlin, Heidelberg, 2 edition, 2006.
\newblock \doi{10.1007/3-540-30666-8}.

\end{thebibliography}
\endgroup
\end{document}

% --- supplement: SI.tex ---

\maketitle
\tableofcontents

% --- Begin inlined file: supplement_sections/s1_comparison_factorization.tex ---
\section{Comparison geometry: expanded statements}
\label{sec:s-comparison-geometry}

Let $(\Hcal,\metricip{\cdot}{\cdot})$ be a real Hilbert task space and let
$y_1,\ldots,y_N\in\Hcal$ be fixed. For any target $z\in\Hcal$, set
$R_i(z)=\metricnorm{z-y_i}^{2}$,
$G_{ij}(z)=R_j(z)-R_i(z)$, and
\begin{equation}
 \Dcal=\operatorname{span}\{y_i-y_1:2\le i\le N\}.
\end{equation}

\subsection{Global decision quotient and bounded-linear minimality}

\begin{theorem}[Finite-library decision quotient]
\label{thm:s-minimal-information}
For every $z\in\Hcal$,
\begin{equation}
  G_{ij}(z)=\metricnorm{y_j}^{2}-\metricnorm{y_i}^{2}
  +2\metricip{z}{y_i-y_j}.
  \label{eq:s-exact-gap}
\end{equation}
Consequently, $P_{\Dcal}^{M}z$ determines the exact gap vector, all ternary
pair signs and the complete weak ordering, and $\dim\Dcal\leq N-1$. It also
determines top-1 after a tie rule has been fixed as a function only of the
minimizer set.

Let $W$ be a real normed space and let $\mathcal T:\Hcal\to W$ be bounded and
linear. The exact gap vector, the
ternary sign vector and the complete weak ordering are determined by
$\mathcal Tz$ for every $z\in\Hcal$ if and only if
\begin{equation}
  \ker\mathcal T\subseteq\Dcal^{\perp_M}.
  \label{eq:s-global-kernel-condition}
\end{equation}
Fix a map $\tau$ that chooses one index from every nonempty subset of
$\{1,\ldots,N\}$ and set
$\iota_\tau(z)=\tau(\operatorname*{arg\,min}_iR_i(z))$. The same kernel
condition is necessary and sufficient for $\mathcal Tz$ to determine
$\iota_\tau(z)$ globally. If $W$ is finite-dimensional, each of these exact
recovery properties implies $\dim W\geq\dim\Dcal$.
\end{theorem}

\begin{proof}
Expansion of the squared norms gives Eq.~\eqref{eq:s-exact-gap}. Every
$y_i-y_j$ lies in $\Dcal$, which proves projection sufficiency and the rank
bound. If $h\in\ker\mathcal T$ but
$h\notin\Dcal^{\perp_M}$, choose $i,j$ with
$\metricip{h}{y_i-y_j}\ne0$. The targets
$(y_i+y_j)/2\pm\varepsilon h$ have the same observation but opposite strict
pair signs. Thus exact signs, and therefore exact gaps and complete weak
ordering, require Eq.~\eqref{eq:s-global-kernel-condition}. Conversely, that
condition makes the difference of any two targets with equal observation
orthogonal to every candidate difference, so all gaps agree.

For top-1 necessity, set $a_i=\metricip{h}{y_i}$. These numbers are not all
equal. As $\alpha\to+\infty$, every minimizer of $R_i(\alpha h)$ belongs to
the maximum-$a_i$ set; as $\alpha\to-\infty$, every minimizer belongs to the
disjoint minimum-$a_i$ set. The fixed tie rule returns different indices from
the two disjoint sets, even though both observations are zero. Duplicate
candidate locations do not add a candidate-difference direction and do not
alter this argument. Finally, the kernel condition makes
$\mathcal T|_{\Dcal}$ injective, so a finite-dimensional codomain must have
dimension at least $\dim\Dcal$.
\end{proof}

This theorem makes the finite-candidate qualification essential. If the library
changes, so does $\Dcal$ and therefore the task information that must be
recovered. Its top-1 conclusion is global over the whole task space, not over
an arbitrary physical solution manifold.

The margin order cannot be omitted. For any fixed $k>0$ and
$0<\rho<1$, take $\Hcal=\Rnum$, $y_1=-1$, $y_2=1$,
$t=-\rho^k/4$ and $q=\rho^k/4$. Then
$\lvert t-q\rvert=\rho^k/2$, while
$G_{12}(t)=\rho^k$ and $G_{12}(q)=-\rho^k$. Thus a proxy error of order
$\rho^k$ can reverse a nonzero truth gap of the same order.

\subsection{Restricted target classes and exposure}

Let $V\subset\Hcal$ be a closed subspace, let
$\mathcal S\subset z_0+V$, and define
\begin{equation}
  \Dcal_V=\operatorname{span}\{P_V^M(y_i-y_1):2\leq i\leq N\}.
\end{equation}
Call $h\in V$ decision-active when
$h\notin\Dcal_V^{\perp_M}$. The class $\mathcal S$ is gap-exposing,
sign-exposing or top-1-exposing in $V$ if, for every decision-active $h$, it
contains two points on one affine line parallel to $h$ whose gap vectors,
ternary sign vectors or fixed tie-broken winners, respectively, differ.

\begin{proposition}[Restricted-domain sufficiency and conditional minimality]
\label{prop:s-restricted-exposure}
Let $W$ be a real normed space and let $\mathcal T:\Hcal\to W$ be bounded and
linear. The condition
\begin{equation}
  \ker(\mathcal T|_V)\subseteq\Dcal_V^{\perp_M}\cap V
  \label{eq:s-restricted-kernel-condition}
\end{equation}
is sufficient for $\mathcal Tz$ to determine every exact gap, sign and
tie-broken top-1 decision on $z_0+V$, and hence on $\mathcal S$. It is
necessary for the corresponding target on $\mathcal S$ when $\mathcal S$ is
gap-, sign- or top-1-exposing, respectively. A complete affine class
$\mathcal S=z_0+V$ satisfies all three exposure properties.
\end{proposition}

\begin{proof}
If two points of $z_0+V$ have the same observation, their difference lies in
$\ker(\mathcal T|_V)$. Under Eq.~\eqref{eq:s-restricted-kernel-condition},
that difference is orthogonal to every projected candidate difference and
hence leaves all gaps unchanged. Conversely, if a decision-active
$h\in\ker(\mathcal T|_V)$ exists, the appropriate exposure property supplies
two targets separated along $h$ with the same observation but different target
objects. A complete affine class contains every such line. Along the whole
line, a nonconstant affine gap crosses its bisector, and sufficiently large
positive and negative parameters expose disjoint maximum- and
minimum-projection candidate sets.
\end{proof}

Exposure cannot be omitted. For $\Hcal=\Rnum$, candidates $y_1=0$ and
$y_2=10$, and $\mathcal S=(-1,1)$, candidate 1 is always uniquely optimal and
the pair sign is constant. A zero-dimensional statistic therefore determines
sign and top-1 on $\mathcal S$, although the exact gap varies and
$\Dcal=\Rnum$.

\subsection{A continuous-statistic dimension bound for exact gaps}

\begin{proposition}[Continuous exact-gap statistics require comparison dimension]
\label{prop:s-continuous-dimension}
Let $r=\dim\Dcal$ and let $\Phi:\Hcal\to\Rnum^m$ be continuous. Suppose an
arbitrary decoder $F$ satisfies
\begin{equation}
  \bigl(R_i(z)-R_1(z)\bigr)_{i=2}^{N}=F(\Phi(z))
  \quad\text{for every }z\in\Hcal.
\end{equation}
Then $m\geq r$. No continuity, linearity or measurability is required of $F$.
\end{proposition}

\begin{proof}
The relative-gap map is injective on $\Dcal$: if two points of $\Dcal$ have
the same relative gaps, their difference lies in both $\Dcal$ and
$\Dcal^{\perp_M}$ and is zero. Hence $\Phi|_{\Dcal}$ is a continuous
injection $\Rnum^r\to\Rnum^m$. If $m<r$, compose this map with the standard
embedding of $\Rnum^m$ into $\Rnum^r$. Invariance of domain would make the
image open in $\Rnum^r$, but it lies in a proper linear subspace and has empty
interior, a contradiction.
\end{proof}

The kernel condition is only an identifiability statement. Stable recovery of
the comparison coordinate requires the extended observability constant
\begin{equation}
  \kappa_{\Dcal}(\mathcal T)
  =\inf\left\{C\geq0:
  \metricnorm{P_{\Dcal}^{M}h}\leq C\norm{\mathcal Th}_W
  \ \text{for every }h\in\Hcal\right\},
  \label{eq:s-observability-constant}
\end{equation}
where the infimum of the empty set is $+\infty$.

\begin{theorem}[Stable comparison observability]
\label{thm:s-stable-comparison-observability}
Let $W$ be a real Hilbert space and let $\mathcal T:\Hcal\to W$ be bounded and
linear.  The following are
equivalent:
\begin{enumerate}[label=(\roman*)]
  \item there is a bounded $B:W\to\Dcal$ satisfying
  $B\mathcal T=P_{\Dcal}^{M}$;
  \item there is $0\leq\kappa<\infty$ such that
  \begin{equation}
    \metricnorm{P_{\Dcal}^{M}h}
    \leq\kappa\norm{\mathcal Th}_{W}
    \quad\text{for every }h\in\Hcal.
    \label{eq:s-stable-comparison-observability}
  \end{equation}
\end{enumerate}
Such a factorization exists if and only if
$\kappa_{\Dcal}(\mathcal T)<\infty$. Whenever these equivalent conditions hold,
\begin{equation}
  \min_{B\mathcal T=P_{\Dcal}^{M}}\norm B
  =\kappa_{\Dcal}(\mathcal T).
\end{equation}
\end{theorem}

\begin{proof}
A factorization gives the inequality with $\kappa=\norm{B}$.  Conversely,
define $B_0(\mathcal Th)=P_{\Dcal}^{M}h$ on
$\operatorname{Ran}\mathcal T$.  Equation~\eqref{eq:s-stable-comparison-observability}
makes this map well defined and bounded, so it extends to
$\overline{\operatorname{Ran}\mathcal T}$.  Composing the extension with the
orthogonal projection of $W$ onto that closed subspace defines $B$ on $W$.
The constructions preserve the optimal norm, proving the final statement. No
closed-range assumption is needed: the factor is first extended to
$\overline{\operatorname{Ran}\mathcal T}$ and then to $W$ by its orthogonal
projection.
\end{proof}

\begin{theorem}[Exact deterministic minimax amplification]
\label{thm:s-minimax-observability}
Let $W$ be a real Hilbert space, let $\mathcal T:\Hcal\to W$ be bounded and
linear, and let $\delta>0$. Observe
\begin{equation}
  y=\mathcal Tx+n,
  \qquad x\in\Hcal,
  \qquad n\in W,
  \qquad \norm n_W\leq\delta.
\end{equation}
Over all set maps $\widehat z:W\to\Dcal$, without linearity, continuity or
measurability restrictions,
\begin{equation}
  \inf_{\widehat z:W\to\Dcal}
  \sup_{\substack{x\in\Hcal\\\norm n_W\leq\delta}}
  \metricnorm{\widehat z(\mathcal Tx+n)-P_{\Dcal}^{M}x}
  =\kappa_{\Dcal}(\mathcal T)\delta.
  \label{eq:s-exact-minimax}
\end{equation}
\end{theorem}

\begin{proof}
If $\kappa_{\Dcal}(\mathcal T)<\infty$, the optimal bounded factor $B$ from
Theorem~\ref{thm:s-stable-comparison-observability} gives the estimator
$\widehat z(y)=By$ and the upper bound
$\norm{Bn}\leq\kappa_{\Dcal}(\mathcal T)\delta$.

For the lower bound, take any $h$ with $\mathcal Th\ne0$ and rescale it so
$\norm{\mathcal Th}_W=2\delta$. The two admissible state--noise pairs
\begin{equation*}
  (x_+,n_+)=\left(\frac h2,-\frac{\mathcal Th}{2}\right),
  \qquad
  (x_-,n_-)=\left(-\frac h2,\frac{\mathcal Th}{2}\right)
\end{equation*}
produce the same observation. Any estimator therefore incurs error at least
$\metricnorm{P_{\Dcal}^{M}h}/2$ on one pair, which equals
$\delta\metricnorm{P_{\Dcal}^{M}h}/\norm{\mathcal Th}_W$. Taking the
supremum proves the finite case even when it is not attained. If an invisible
$h$ has nonzero decision projection, scaling $h$ makes the risk infinite. If
no such direction exists but the finite ratios are unbounded, the same
two-point construction yields arbitrarily large risk. The case
$\kappa_{\Dcal}=0$ follows from the upper bound and nonnegativity.
\end{proof}

Equation~\eqref{eq:s-exact-minimax} concerns the unrestricted state class and
the full deterministic $W$-noise ball. A restricted solution manifold,
non-spherical noise, a stochastic risk or a prior-constrained estimator can
have a smaller minimax constant.

\section{Stable task factorization and nonlinear decision radii}
\label{sec:s-factorization}

Let $X,Y,Z$ be Hilbert spaces, let $\mathcal U\subset X$ be open, and let
$\Acal:\mathcal U\to Y$ and $Q:\mathcal U\to Z$.  A selected solution branch
satisfies $\Acal(u)=0$.

\begin{theorem}[Stable task factorization]
\label{thm:s-factorization}
Fix $v\in\mathcal U$, assume $\Acal$ and $Q$ are Fr\'echet differentiable at
$v$, and let $L=D\Acal(v)$ and $J=DQ(v)$.  The following are equivalent:
\begin{enumerate}[label=(\roman*)]
  \item there is a bounded $T:Y\to Z$ such that $J=TL$;
  \item there is $0\leq\kappa<\infty$ such that
  $\norm{Jh}_Z\le\kappa\norm{Lh}_Y$ for every $h\in X$.
\end{enumerate}
The least admissible $\kappa$ equals
\begin{equation}
  \kappa_*(J\mid L)=
  \inf\left\{\kappa\geq0:
  \norm{Jh}_Z\leq\kappa\norm{Lh}_Y\ \text{for every }h\in X\right\},
\end{equation}
with the infimum of the empty set defined as $+\infty$. Whenever it is finite,
it also equals
\begin{equation}
  \kappa_*(J\mid L)
  =\sup_{Lh\ne0}\frac{\norm{Jh}_Z}{\norm{Lh}_Y},
\end{equation}
with value zero when $L=0=J$. A factorization can be chosen with
$\norm{T}=\kappa_*(J\mid L)$.
\end{theorem}

\begin{proof}
If $J=TL$, the inequality holds with $\kappa=\norm T$.  Conversely, the inequality
implies $\ker L\subseteq\ker J$, so $T_0(Lh)=Jh$ is well defined on
$\operatorname{Ran}L$ and has norm $\kappa_*$.  It extends continuously to
$\overline{\operatorname{Ran}L}$.  Let
$P:Y\to\overline{\operatorname{Ran}L}$ be the orthogonal projection and compose
the extension with $P$.  The resulting $T$ satisfies
$J=TL$ and $\norm T=\norm{T_0}=\kappa_*$.
\end{proof}

For a scalar task, $J\in X^*$ and the factorization is equivalent, after Riesz
identification, to solvability of the task adjoint equation
$L^*z=J$.  State identifiability can therefore fail while a particular comparison
direction remains observable.

\begin{proposition}[Exact shared-response and task-map identities]
\label{prop:s-shared-response-identities}
Let $w,u\in\mathcal U$, suppose $\Acal(u)=0$, and assume that
$L_w=D\Acal(w):X\to Y$ is a bounded isomorphism. Let an inexact shared response
satisfy
\begin{equation}
  L_w\delta_w=-\Acal(w)+\ell,
\end{equation}
set $e=u-w$, and define
\begin{equation}
  \Rcal_A(e)=\Acal(w+e)-\Acal(w)-L_we.
\end{equation}
Then
\begin{equation}
  \delta_w-e=L_w^{-1}\bigl[\ell+\Rcal_A(e)\bigr].
  \label{eq:s-shared-response-error}
\end{equation}
If $Q$ is Fr\'echet differentiable at $w$ and $w+\delta_w\in\mathcal U$, set
$J_w=DQ(w)$, define
$\Rcal_Q(h)=Q(w+h)-Q(w)-J_wh$ whenever $w+h\in\mathcal U$, set $t=Q(u)$, and
\begin{equation}
  q_{\rm lin}=Q(w)+J_w\delta_w,
  \qquad
  q_{\rm eval}=Q(w+\delta_w).
\end{equation}
Then
\begin{align}
  q_{\rm lin}-t
  &=J_wL_w^{-1}\bigl[\ell+\Rcal_A(e)\bigr]-\Rcal_Q(e),
  \label{eq:s-q-linear-error}\\
  q_{\rm eval}-t
  &=J_wL_w^{-1}\bigl[\ell+\Rcal_A(e)\bigr]
    +\Rcal_Q(\delta_w)-\Rcal_Q(e).
  \label{eq:s-q-evaluated-error}
\end{align}
When the task space is the Hilbert space used for candidate comparison,
applying $P_{\Dcal}^{M}$ gives the projected identities in the main-text
physics-to-decision proposition. If
$Q$ is bounded and linear, the two proxies coincide and both task remainders
vanish.
\end{proposition}

\begin{proof}
The expansion
$0=\Acal(u)=\Acal(w)+L_we+\Rcal_A(e)$ and the response equation give
$L_w(\delta_w-e)=\ell+\Rcal_A(e)$, proving
Eq.~\eqref{eq:s-shared-response-error}. The identities for $q_{\rm lin}$ and
$q_{\rm eval}$ follow by subtracting
$t=Q(w)+J_we+\Rcal_Q(e)$ and substituting the state identity.
\end{proof}

\begin{theorem}[Nonlinear task--residual identity]
\label{thm:s-nonlinear-identity}
Assume $\Acal,Q\in C^1(\mathcal U)$ and the segment $[v,u]\subset\mathcal U$.
For any bounded $T_v:Y\to Z$, let
$e=u-v$ and $\widehat{\Delta Q}_v=-T_v\Acal(v)$.  Then
\begin{equation}
 \widehat{\Delta Q}_v-[Q(u)-Q(v)]
 =\int_0^1 [T_vD\Acal(v+se)-DQ(v+se)]e\,ds.
 \label{eq:s-nonlinear-identity}
\end{equation}
Suppose, on the segment,
\begin{align}
 \norm{D\Acal(x)-D\Acal(v)}&\le M_A\norm{x-v}^{\alpha},\\
 \norm{DQ(x)-DQ(v)}&\le M_Q\norm{x-v}^{\alpha},
\end{align}
where $0<\alpha\le1$, and let
$\eta_v=\norm{T_vD\Acal(v)-DQ(v)}$.  If $\norm e\le R$, then
\begin{equation}
 \norm{\widehat{\Delta Q}_v-[Q(u)-Q(v)]}
 \le \eta_vR+
 \frac{\norm{T_v}M_A+M_Q}{1+\alpha}R^{1+\alpha}.
 \label{eq:s-holder-radius}
\end{equation}
\end{theorem}

\begin{proof}
The fundamental theorem of calculus gives
$-\Acal(v)=\int_0^1D\Acal(v+se)e\,ds$ and
$Q(u)-Q(v)=\int_0^1DQ(v+se)e\,ds$.  Subtraction proves
Eq.~\eqref{eq:s-nonlinear-identity}.  Add and subtract the derivatives at $v$,
apply the stated bounds, and integrate $s^\alpha$ over $[0,1]$.
\end{proof}

For exact factorization, $\eta_v=0$.  In a $C^{1,1}$ setting, the task correction
is therefore second-order accurate once a localization engine supplies $R$.  Such
an engine may be Newton--Kantorovich, coercivity, a verified interval method, or a
problem-specific energy estimate. In this paper, a computable radius is supplied
for the strongly monotone reaction--diffusion operator in Supplementary
Section~\ref{sec:s-rd-certificate}.

\subsection{A computable local Newton-to-decision route}

The preceding identity becomes a certificate only when the anchor-to-root
distance is itself enclosed. The following standard contraction argument records
one sufficient route and its decision consequence.

\begin{theorem}[Conditional local Newton decision enclosure]
\label{thm:s-local-newton-decision}
Let the closed ball $\overline B(v,R_0)$ be contained in $\mathcal U$. Assume
$\Acal\in C^1$ there, $L=D\Acal(v)$ is invertible with
$\norm{L^{-1}}\leq\beta$, and
\begin{equation}
  \norm{D\Acal(x)-D\Acal(y)}
  \leq M\norm{x-y}_X
  \quad\text{for all }x,y\in\overline B(v,R_0).
\end{equation}
Set $\delta=-L^{-1}\Acal(v)$,
$\Rcal_{A,v}(h)=\Acal(v+h)-\Acal(v)-Lh$, $\eta=\norm\delta_X$ and
$a=\beta M/2$. If $a=0$, set $R=\eta$. If $a>0$, assume
$4a\eta<1$ and set
\begin{equation}
  R=\frac{1-\sqrt{1-4a\eta}}{2a}.
  \label{eq:s-local-newton-radius}
\end{equation}
Suppose in either case that $R\leq R_0$. Then $\Acal$ has a unique zero
$u_\star$ in $\overline B(v,R)$,
\begin{equation}
  \norm{u_\star-v}_X\leq R,
  \qquad
  \norm{D\Acal(x)^{-1}}
  \leq\frac{\beta}{1-\beta MR}
  \quad(x\in\overline B(v,R)),
  \label{eq:s-persistent-invertibility}
\end{equation}
where the denominator is positive.

Let $Q:X\to\Hcal$ be bounded and linear, let
$q=Q(v+\delta)$, $t_\star=Q(u_\star)$, and let $\Dcal$ be the comparison
subspace of a fixed candidate library. The set
\begin{equation}
  \mathcal U_R=
  \overline{\left\{
  -P_{\Dcal}^{M}QL^{-1}\Rcal_{A,v}(h):
  \norm h_X\leq R
  \right\}}\subset\Dcal
  \label{eq:s-local-projected-set}
\end{equation}
is compact and contains $P_{\Dcal}^{M}(t_\star-q)$. In particular,
\begin{equation}
  \metricnorm{P_{\Dcal}^{M}(t_\star-q)}
  \leq
  \norm{P_{\Dcal}^{M}QL^{-1}}\frac{M}{2}R^2.
  \label{eq:s-local-decision-ball}
\end{equation}
The support-function conditions in Supplementary
Section~\ref{sec:s-support-calculus} therefore give computable pair, top-1,
regret and abstention decisions whenever the stated constants and supports are
available.
\end{theorem}

\begin{proof}
For $h\in\overline B(0,R)$ define
$\Phi(h)=\delta-L^{-1}\Rcal_{A,v}(h)$. Taylor's theorem gives
$\norm{\Rcal_{A,v}(h)}\leq(M/2)\norm h_X^2$, so
\begin{equation}
  \norm{\Phi(h)}_X\leq\eta+aR^2=R.
\end{equation}
When $a=0$, the remainder vanishes on the ball and $\Phi$ is constant. When
$a>0$, the derivative bound on the convex ball gives
\begin{equation}
  \norm{\Phi(h)-\Phi(k)}_X
  \leq\beta MR\norm{h-k}_X,
  \qquad
  \beta MR=1-\sqrt{1-4a\eta}<1.
\end{equation}
Thus Banach's theorem gives a unique fixed point $h_\star$ in the ball, and
$\Acal(v+h_\star)=0$. For any $x$ in the ball,
$\norm{L^{-1}(D\Acal(x)-L)}\leq\beta MR<1$; the Banach perturbation lemma
proves Eq.~\eqref{eq:s-persistent-invertibility}.

The exact correction identity gives
$t_\star-q=-QL^{-1}\Rcal_{A,v}(h_\star)$. Hence its comparison projection
belongs to the set before closure in Eq.~\eqref{eq:s-local-projected-set}.
Because $\Dcal$ is finite-dimensional, the closure of this bounded set is
compact. The Taylor bound gives Eq.~\eqref{eq:s-local-decision-ball}.
\end{proof}

This theorem identifies the zero produced in the certified ball. Applying it
to a separately designated physical truth requires proof that the designated
branch is that local zero. Residual smallness alone cannot provide that branch
identification.

\begin{corollary}[Pairwise decision interval]
Let $t,q\in\Hcal$ and $\varepsilon\geq0$.  If
$\metricnorm{P_{\Dcal}^{M}(t-q)}\le\varepsilon$, then
\begin{equation}
 G_{ij}(t)\in
 [G_{ij}(q)-2\varepsilon\metricnorm{y_i-y_j},
  G_{ij}(q)+2\varepsilon\metricnorm{y_i-y_j}].
\end{equation}
An interval lying strictly on one side of zero certifies that pair.  If, for one
index $\widehat i\in\{1,\ldots,N\}$, every interval for
$G_{\widehat i j}(t)$ with $j\ne\widehat i$ is contained in $(0,\infty)$, then
$\widehat i$ is certified top-1.
\end{corollary}

\begin{proof}
The squared-gap expansion gives
$G_{ij}(t)-G_{ij}(q)=2\metricip{t-q}{y_i-y_j}$.  Since
$y_i-y_j\in\Dcal$, projection and Cauchy--Schwarz bound its absolute value by
$2\varepsilon\metricnorm{y_i-y_j}$, which gives the interval.  Strict
separation from zero fixes a pair sign.  If every such sign favours one
candidate against every competitor, that candidate is top-1.
\end{proof}

\subsection{A conditional finite-sample bound}

The panel-selection studies use clustered bootstrap rather than the following
bounded-i.i.d. result, which gives a simple finite-sample comparison.

\begin{proposition}[Uniform score-mean concentration]
Let $m\geq1$ and $N\geq2$ be integers, let $B_0\geq0$ and
$\delta\in(0,1)$, and let
$a_1,\ldots,a_m$ be independent and identically distributed with law $\mu$.
For each $1\leq i\leq N$, let $S_i$ be measurable and satisfy
$S_i(a)\in[0,B_0]$ for $\mu$-almost every $a$.  Then, with probability at least
$1-\delta$,
\begin{equation}
 \max_{1\le i\le N}
 \left|\frac1m\sum_{k=1}^m S_i(a_k)-\mathbb E_{a\sim\mu} S_i(a)\right|
 \le B_0\sqrt{\frac{\log(2N/\delta)}{2m}}.
\end{equation}
If the population score gap between the best and second-best candidate exceeds
twice this radius, empirical panel selection recovers the population minimizer.
\end{proposition}

\begin{proof}
Apply Hoeffding's inequality to each of the $N$ bounded score functions with
failure probability $\delta/N$ and take a union bound.  On the resulting event,
the empirical difference between the best and any competitor has the same sign
whenever the population difference exceeds twice the radius.
\end{proof}

Direct application to the tested $B=24$ and $B=144$ settings would require a
useful common bound $B_0$ and identically sampled inputs. Their stratified,
dependent sampling designs require the clustered analysis reported here instead.
% --- End inlined file: supplement_sections/s1_comparison_factorization.tex ---
% --- Begin inlined file: supplement_sections/s2_burgers.tex ---
\section{Periodic viscous Burgers: functional setting and proofs}
\label{sec:s-burgers}

Let $\mathbb T=\Rnum/(2\pi\mathbb Z)$, $I=(0,T)$, and $\nu>0$.  For input
$a=(u_0,f)\in H^1(\mathbb T)\times L^2(I;L^2(\mathbb T))$, consider
\begin{equation}
 u_t-\nu u_{xx}+u u_x=f,\qquad u(0)=u_0.
 \label{eq:s-burgers}
\end{equation}
Set
\begin{align}
 X&=H^1(I;L^2(\mathbb T))\cap L^2(I;H^2(\mathbb T)),\\
 Y&=H^1(\mathbb T)\times L^2(I;L^2(\mathbb T)),\\
 \Acal_a(w)&=\bigl(w(0)-u_0,\;w_t-\nu w_{xx}+w w_x-f\bigr),\\
 \mathbf B(p,z)&=\left(0,\frac12\partial_x(pz)\right).
\end{align}
Then $\Acal_a(w)=\mathcal L_0w+\mathbf B(w,w)-(u_0,f)$ and
\begin{equation}
 L_vh=D\Acal_a(v)h
 =\bigl(h(0),h_t-\nu h_{xx}+\partial_x(vh)\bigr).
 \label{eq:s-linearized-burgers}
\end{equation}

\begin{lemma}[Trace and bilinear bounds]
\label{lem:s-trace-bilinear}
There are constants $c_{\rm tr}(T)$ and $c_B(\nu,T)$ such that
\begin{equation}
 X\hookrightarrow C([0,T];H^1(\mathbb T)),\qquad
 \norm w_{C_tH_x^1}\le c_{\rm tr}\norm w_X,
 \label{eq:s-trace}
\end{equation}
and
\begin{equation}
 \norm{\mathbf B(p,z)}_Y\le c_B\norm p_X\norm z_X.
 \label{eq:s-bilinear}
\end{equation}
Consequently $\Acal_a:X\to Y$ is analytic.
\end{lemma}

\begin{proof}
For smooth $w$, let $A=I-\partial_{xx}$.  Then
\[
\frac{d}{dt}\norm{w(t)}_{H^1}^2=2\ip{w_t(t)}{Aw(t)}_{L^2}.
\]
Choose $t_0$ so that
$\norm{w(t_0)}_{H^1}^2\le T^{-1}\norm w_{L_t^2H_x^1}^2$.  Integration and
Cauchy--Schwarz give
\[
\sup_t\norm{w(t)}_{H^1}^2
\le T^{-1}\norm w_{L_t^2H_x^1}^2
+2\norm{w_t}_{L_t^2L_x^2}\norm{Aw}_{L_t^2L_x^2}
\le C_T\norm w_X^2.
\]
Density yields Eq.~\eqref{eq:s-trace}.  In one dimension,
$H^1(\mathbb T)\hookrightarrow L^\infty(\mathbb T)$, hence
\begin{align*}
\norm{\partial_x(pz)}_{L_t^2L_x^2}
&\le \norm p_{L_t^\infty L_x^\infty}\norm{z_x}_{L_t^2L_x^2}
+\norm z_{L_t^\infty L_x^\infty}\norm{p_x}_{L_t^2L_x^2}\\
&\le C\norm p_X\norm z_X.
\end{align*}
The first component of $\mathbf B$ is zero.
\end{proof}

\begin{theorem}[Strong well-posedness]
\label{thm:s-burgers-wp}
For every $u_0\in H^1(\mathbb T)$ and $f\in L^2(I;L^2(\mathbb T))$,
Eq.~\eqref{eq:s-burgers} has a unique
\begin{equation}
 u\in X\cap C([0,T];H^1(\mathbb T)).
\end{equation}
On bounded input sets,
$\norm u_{C_tH_x^1}+\norm u_{L_t^2H_x^2}+\norm{u_t}_{L_t^2L_x^2}$
is bounded by a finite function of $\nu,T,\norm{u_0}_{H^1}$, and
$\norm f_{L_t^2L_x^2}$.
\end{theorem}

\begin{proof}
For a smooth Galerkin solution, testing by $u$ and using periodicity gives
\begin{equation}
 \frac{d}{dt}\norm u_{L^2}^2+2\nu\norm{u_x}_{L^2}^2
 \le \norm f_{L^2}^2+\norm u_{L^2}^2.
 \label{eq:s-l2-energy}
\end{equation}
Gronwall controls $u$ in $L_t^\infty L_x^2$ and $u_x$ in $L_t^2L_x^2$.
Testing by $-u_{xx}$ and writing
$y=\norm{u_x}_{L^2}^2$, $z=\norm{u_{xx}}_{L^2}^2$ gives
\[
\frac12y'+\nu z=-(f,u_{xx})-\frac12\int_{\mathbb T}u_x^3\,dx.
\]
The mean of $u_x$ is zero.  The one-dimensional Gagliardo--Nirenberg inequality
gives
\[
 \norm{u_x}_{L^3}^3
 \le C\norm{u_{xx}}_{L^2}^{1/2}\norm{u_x}_{L^2}^{5/2}
 =Cz^{1/4}y^{5/4}.
\]
Young's inequality with conjugate exponents $4$ and $4/3$ then gives
\begin{equation}
 y'+\nu z\le C_\nu y^{5/3}+C_\nu\norm f_{L^2}^2.
 \label{eq:s-h1-energy}
\end{equation}
Since Eq.~\eqref{eq:s-l2-energy} controls $\int_0^T y$, the coefficient
$C_\nu y^{2/3}$ in $y^{5/3}=y^{2/3}y$ is uniformly integrable:
$\int_0^T y^{2/3}\le T^{1/3}(\int_0^T y)^{2/3}$.  Gronwall applied to the
Galerkin inequality yields a uniform $L_t^\infty H_x^1$ bound; integrating
Eq.~\eqref{eq:s-h1-energy} yields the $L_t^2H_x^2$ bound.  The equation then gives
$u_t\in L_t^2L_x^2$ because $u\in L_t^\infty L_x^\infty$ and
$u_x\in L_t^2L_x^2$.

Fourier--Galerkin approximations therefore exist on $[0,T]$ with uniform $X$ and
$L_t^\infty H_x^1$ bounds.  Weak compactness and Aubin--Lions give, along a
subsequence, strong convergence in $L_t^2H_x^1$, which is sufficient to pass to
$u u_x$. Indeed, the one-dimensional embedding $H^1\hookrightarrow L^\infty$
and the uniform $L_t^\infty H_x^1$ bound give
\[
 \norm{u_n u_{n,x}-u u_x}_{L_t^2L_x^2}
 \le \norm{u_n-u}_{L_t^2L_x^\infty}
       \norm{u_{n,x}}_{L_t^\infty L_x^2}
 +\norm{u}_{L_t^\infty L_x^\infty}
       \norm{u_{n,x}-u_x}_{L_t^2L_x^2}\longrightarrow0.
\]
The trace bound supplies $u(0)=u_0$.

For uniqueness, if $w=u_1-u_2$ and $m=(u_1+u_2)/2$, then
$w_t-\nu w_{xx}+\partial_x(mw)=0$, $w(0)=0$.  Testing by $w$ gives
\[
\frac12\frac{d}{dt}\norm w_{L^2}^2+\nu\norm{w_x}_{L^2}^2
=(mw,w_x)
\le\frac\nu2\norm{w_x}_{L^2}^2+C_\nu\norm m_{H^1}^2\norm w_{L^2}^2.
\]
Because $m\in C_tH_x^1$, Gronwall yields $w=0$.
\end{proof}

\begin{theorem}[Linearized isomorphism]
\label{thm:s-linear-iso}
For every fixed $v\in X$, $L_v:X\to Y$ in
Eq.~\eqref{eq:s-linearized-burgers} is a bounded isomorphism.  If
$L_vh=(h_0,g)$ and $V=\norm v_{C_tH_x^1}$, then
\begin{equation}
 \norm h_X\le C_{\nu,T,V}
 (\norm{h_0}_{H^1}+\norm g_{L_t^2L_x^2}).
 \label{eq:s-linear-estimate}
\end{equation}
In particular $\beta_v:=\norm{L_v^{-1}}_{\mathcal L(Y,X)}<\infty$.
\end{theorem}

\begin{proof}
Boundedness follows from Lemma~\ref{lem:s-trace-bilinear}.  Testing
$h_t-\nu h_{xx}+\partial_x(vh)=g$ by $h$ gives
\begin{equation}
 \frac{d}{dt}\norm h_{L^2}^2+\nu\norm{h_x}_{L^2}^2
 \le C_\nu(1+V^2)\norm h_{L^2}^2+C\norm g_{L^2}^2.
\end{equation}
Testing by $-h_{xx}$ and using
$\norm{\partial_x(vh)}_{L^2}\le CV\norm h_{H^1}$ gives
\begin{equation}
 \frac{d}{dt}\norm{h_x}_{L^2}^2+\nu\norm{h_{xx}}_{L^2}^2
 \le C_\nu V^2\norm h_{H^1}^2+C_\nu\norm g_{L^2}^2.
\end{equation}
After addition and Gronwall,
\[
\norm h_{L_t^\infty H_x^1}+\norm h_{L_t^2H_x^2}
\le C_{\nu,T,V}(\norm{h_0}_{H^1}+\norm g_{L_t^2L_x^2}).
\]
The equation controls $h_t$ and proves Eq.~\eqref{eq:s-linear-estimate}.
Fourier--Galerkin approximation gives existence, and the same estimate with zero
data gives uniqueness.
\end{proof}

\subsection{Burgers correction identities, conditional localization, and sharpness}
\label{sec:s-burgers-correction}

Let $u$ be the strong solution, $v\in X$, $e=u-v$, $r_v=\Acal_a(v)$, and
$m=(u+v)/2$.  Define
\begin{equation}
 \delta_v=-L_v^{-1}r_v,\qquad
 \delta_u=-L_u^{-1}r_v,\qquad
 \delta_m=-L_m^{-1}r_v.
\end{equation}

\begin{theorem}[Exact secant and plug-in identities]
\label{thm:s-secant}
The following identities hold:
\begin{align}
 -r_v&=L_me, & \delta_m&=e,\\
 \delta_v-e&=L_v^{-1}\mathbf B(e,e), &
 \delta_u-e&=-L_u^{-1}\mathbf B(e,e).
 \label{eq:s-plugin-signs}
\end{align}
Consequently
\begin{align}
 \norm{\delta_v-e}_X&\le\beta_vc_B\norm e_X^2,\\
 \norm{\delta_u-e}_X&\le\beta_uc_B\norm e_X^2,\\
 \norm{L_v^{-1}-L_u^{-1}}&\le2c_B\beta_v\beta_u\norm e_X.
\end{align}
\end{theorem}

\begin{proof}
The first two identities follow directly by expanding the quadratic Burgers
operator about $v$, $u$, and their midpoint. Since $m=u-e/2$,
$L_m=L_u-\mathbf B(e,\cdot)$, which gives the negative sign in the
truth-linearized formula. The estimates follow from
Eq.~\eqref{eq:s-bilinear}.  Finally use the resolvent identity
$L_v^{-1}-L_u^{-1}=L_v^{-1}(L_u-L_v)L_u^{-1}$ and
$L_u-L_v=2\mathbf B(e,\cdot)$.
\end{proof}

\begin{theorem}[Computable Newton localization]
\label{thm:s-newton-cert}
Suppose $\bar\beta_v\ge\norm{L_v^{-1}}$, set
$\eta_v=\norm{\delta_v}_X$ and $\alpha_v=\bar\beta_vc_B$, and assume
$4\alpha_v\eta_v<1$.  Define
\begin{equation}
 R_v=\frac{1-\sqrt{1-4\alpha_v\eta_v}}{2\alpha_v}.
 \label{eq:s-newton-radius}
\end{equation}
Then $\norm{u-v}_X\le R_v$.  For a bounded linear task $C:X\to Z$,
\begin{equation}
 \left|\norm{C\delta_v}_Z-\norm{C(u-v)}_Z\right|
 \le\norm C\,\alpha_vR_v^2.
 \label{eq:s-newton-score-radius}
\end{equation}
\end{theorem}

\begin{proof}
For $w=u-v$, the quadratic equation is equivalent to the fixed point
\[
 w=\delta_v-L_v^{-1}\mathbf B(w,w)=:\Phi(w).
\]
On the closed radius-$R_v$ ball,
$\norm{\Phi(w)}\le\eta_v+\alpha_vR_v^2=R_v$, while
\[
\norm{\Phi(w)-\Phi(z)}
\le2\alpha_vR_v\norm{w-z},
\qquad
2\alpha_vR_v=1-\sqrt{1-4\alpha_v\eta_v}<1.
\]
Banach's theorem produces a unique root in the ball.  Global uniqueness from
Theorem~\ref{thm:s-burgers-wp} identifies it with $u-v$.  The secant estimate and
the reverse triangle inequality prove Eq.~\eqref{eq:s-newton-score-radius}.
\end{proof}

Candidate-wise pair intervals, top-1 separation, and regret would follow by
applying Eq.~\eqref{eq:s-newton-score-radius} candidate by candidate. The
implemented experiments do not compute a verified $\bar\beta_v$ or instantiate
these conditional localization radii. The reaction--diffusion construction in
Supplementary Section~\ref{sec:s-rd-certificate} is the paper's only evaluated
decision-certificate chain; this Burgers theorem is not a source of the
hidden-reference accuracy claim.

\subsubsection{Self-midpoint correction}

Set $\widetilde m_v=v+\delta_v/2$ and
$\delta_v^{\rm mid}=-L_{\widetilde m_v}^{-1}r_v$.

\begin{theorem}[Cubic self-midpoint error]
\label{thm:s-midpoint}
Under Theorem~\ref{thm:s-newton-cert}, $L_{\widetilde m_v}$ is invertible and
\begin{equation}
 \norm{L_{\widetilde m_v}^{-1}}
 \le\bar\beta_v^{\rm mid}:=
 \frac{\bar\beta_v}{1-\alpha_v\eta_v}.
\end{equation}
Moreover,
\begin{equation}
 \delta_v^{\rm mid}-e
 =-L_{\widetilde m_v}^{-1}
 \mathbf B\!\left(L_v^{-1}\mathbf B(e,e),e\right),
 \label{eq:s-midpoint-identity}
\end{equation}
and hence
\begin{equation}
 \norm{\delta_v^{\rm mid}-e}_X
 \le\bar\beta_v^{\rm mid}\bar\beta_vc_B^2\norm e_X^3.
\end{equation}
\end{theorem}

\begin{proof}
$L_{\widetilde m_v}-L_v=\mathbf B(\delta_v,\cdot)$, so the Banach perturbation
lemma applies because $\bar\beta_vc_B\eta_v=\alpha_v\eta_v<1/4$.  Let
$q_v=\delta_v-e=L_v^{-1}\mathbf B(e,e)$.  Since the exact midpoint is
$m=v+e/2$, $\widetilde m_v-m=q_v/2$ and
$L_m-L_{\widetilde m_v}=-\mathbf B(q_v,\cdot)$.  Using $-r_v=L_me$ gives
Eq.~\eqref{eq:s-midpoint-identity}; the norm bound follows twice from
Eq.~\eqref{eq:s-bilinear}.
\end{proof}

Iterating the fixed-residual midpoint map
$c_{n+1}=-L_{v+c_n/2}^{-1}r_v$, $c_0=\delta_v$, gives the exact recursion
\begin{equation}
 c_{n+1}-e=-L_{v+c_n/2}^{-1}\mathbf B(c_n-e,e).
\end{equation}
Thus, under uniform inverse bounds, $c_n-e=O(\norm e^{n+2})$.  Cubic order is a
property of one self-midpoint step, not an oracle-budget optimum; repeated Newton
iterations can converge faster.

\subsubsection{Order-sharp rank reversals}

Take zero data so $u=0$ and terminal task $C_Tw=w(T)$.  Choose
$\vartheta\in C^\infty([0,T])$ with $\vartheta(0)=0$ and
$\vartheta(t)>0$ for $t>0$, and let
$h(t,x)=\vartheta(t)(\sin x-\sin2x)$.  Set
$q=L_0^{-1}\mathbf B(h,h)$.  Direct Fourier
calculation yields
\begin{equation}
 \ip{h(T)}{q(T)}_{L^2}
 =\frac{\pi\vartheta(T)}2(I_1-I_2)>0,
 \quad
 I_k=\int_0^T e^{-\nu k^2(T-s)}\vartheta(s)^2\,ds.
\end{equation}
If $H=\norm{h(T)}_{L^2}$ and
$\kappa=\ip{h(T)}{q(T)}/H$, then
\begin{align}
 \norm{C_T\delta_{-dh}}&=dH+\kappa d^2+O(d^3),\\
 \norm{C_T\delta_{dh}}&=dH-\kappa d^2+O(d^3).
\end{align}
Choosing $d_1=\epsilon-c\epsilon^2$ and
$d_2=\epsilon+c\epsilon^2$ with $0<c<\kappa/H$ makes the first true error smaller
but the first diagnostic score larger.  Hence the second-order norm-ranking margin
is order-sharp for this decoder.

For the self-midpoint decoder, let
$h_k(t,x)=\vartheta(t)\sin(kx)$ and
\begin{equation}
 q_k=L_0^{-1}\mathbf B(h_k,h_k),\qquad
 r_k=-L_0^{-1}\mathbf B(q_k,h_k).
\end{equation}
Writing $q_k=Q_k(t)\sin(2kx)$ gives
\begin{equation}
 Q_k(t)=\frac{k}{2}\int_0^t e^{-4\nu k^2(t-s)}\vartheta(s)^2\,ds>0.
\end{equation}
The $k$-mode coefficient of $r_k(T)$ is
\begin{equation}
 P_k(T)=\frac{k}{4}\int_0^T e^{-\nu k^2(T-s)}\vartheta(s)Q_k(s)\,ds>0,
\end{equation}
and $P_k(T)=O(k^{-2})$.  With
$\gamma_k=\sqrt\pi P_k(T)$ and
$H_{\rm mode}:=\norm{h_k(T)}_{L^2}=\sqrt\pi\,\vartheta(T)$, the resolvent identity
and Banach perturbation lemma give, for each fixed $k$,
$L_{-dh_k}^{-1}=L_0^{-1}+O_{\mathcal L(Y,X)}(d)$ and
$L_{\widetilde m_{-dh_k}}^{-1}=L_0^{-1}+O_{\mathcal L(Y,X)}(d)$.  Substitution in
Eq.~\eqref{eq:s-midpoint-identity} yields
\begin{equation}
 \delta_{-dh_k}^{\rm mid}=dh_k+d^3r_k+O_X(d^4),
 \qquad
 \norm{C_T\delta_{-dh_k}^{\rm mid}}=dH_{\rm mode}+\gamma_kd^3+O(d^4).
\end{equation}
Because $\gamma_k\to0$, first fix $K$ such that $\gamma_K<\gamma_1$, then choose
$0<c<(\gamma_1-\gamma_K)/(2H_{\rm mode})$.  For
$d_1=\epsilon-c\epsilon^3$, $d_2=\epsilon+c\epsilon^3$ and candidates
$v_1=-d_1h_1$, $v_2=-d_2h_K$, the true score of $v_1$ is smaller, whereas
the diagnostic-score difference has leading coefficient
$\gamma_1-\gamma_K-2cH_{\rm mode}>0$.  Thus, after fixing $K$ and $c$ and then
letting $\epsilon\downarrow0$, a cubic-gap rank reversal occurs.  This is a
sharpness example for the fixed decoder, not a PDE-wide impossibility independent
of residual representation or computational budget.
% --- End inlined file: supplement_sections/s2_burgers.tex ---
% --- Begin inlined file: supplement_sections/s3_extensions.tex ---
\section{Sharp decision calculus and theory-only extensions}
\label{sec:s-sharp-decision-calculus}

\subsection{Support functions, robust selection and regret}
\label{sec:s-support-calculus}

Let $q\in\Hcal$ be a proxy, and write $d_{ij}=y_i-y_j$ for each candidate
pair. Suppose that the unknown target satisfies
\begin{equation}
  P_{\Dcal}^{M}(t-q)\in\mathcal U,
  \label{eq:s-projected-uncertainty}
\end{equation}
where $\mathcal U\subset\Dcal$ is nonempty and compact. For
$d\in\Dcal$, write
\begin{equation}
  \sigma_{\mathcal U}(d)
  =\sup_{z\in\mathcal U}\metricip{z}{d}.
\end{equation}

\begin{proposition}[Exact projected-uncertainty calculus]
\label{prop:s-support-calculus}
For every pair $(i,j)$, the attainable gap set and its interval hull are
\begin{align}
 \{G_{ij}(t):P_{\Dcal}^{M}(t-q)\in\mathcal U\}
 &=G_{ij}(q)+2\{\metricip{z}{d_{ij}}:z\in\mathcal U\},
 \label{eq:s-exact-gap-set}\\
 \operatorname{hull}\{G_{ij}(t)\}
 &=\left[G_{ij}(q)-2\sigma_{\mathcal U}(-d_{ij}),
 G_{ij}(q)+2\sigma_{\mathcal U}(d_{ij})\right].
 \label{eq:s-exact-gap-hull}
\end{align}
Thus candidate $i$ defeats $j$ strictly for every admissible target if and
only if
\begin{equation}
  G_{ij}(q)>2\sigma_{\mathcal U}(-d_{ij}).
  \label{eq:s-robust-pair}
\end{equation}
Candidate $k$ is the unique minimizer for every admissible target if and only
if Eq.~\eqref{eq:s-robust-pair} holds with $i=k$ for every $j\ne k$.

The exact worst-case regret of deploying candidate $k$ is
\begin{equation}
 \sup_{P_{\Dcal}^{M}(t-q)\in\mathcal U}
 \left[R_k(t)-\min_jR_j(t)\right]
 =\max_j\left\{-G_{kj}(q)+2\sigma_{\mathcal U}(-d_{kj})\right\},
 \label{eq:s-exact-robust-regret}
\end{equation}
where the term $j=k$ is zero. A bound at most $\tau$ is therefore necessary
and sufficient for uniform $\tau$-regret over the stated uncertainty set;
failure of the strict or tolerant condition is an abstention.
\end{proposition}

\begin{proof}
The exact squared-gap identity gives
$G_{ij}(t)=G_{ij}(q)+2\metricip{P_{\Dcal}^{M}(t-q)}{d_{ij}}$.
Taking the image, minimum and maximum of this continuous linear functional on
the compact set proves Eqs.~\eqref{eq:s-exact-gap-set} and
\eqref{eq:s-exact-gap-hull}. The strict pair and top-1 statements follow from
the lower endpoint. Finally,
\begin{equation*}
 R_k(t)-\min_jR_j(t)=\max_j[-G_{kj}(t)].
\end{equation*}
The supremum commutes with the finite maximum, and each scalar supremum is the
corresponding support function, proving Eq.~\eqref{eq:s-exact-robust-regret}.
\end{proof}

The attainable set in Eq.~\eqref{eq:s-exact-gap-set} need not fill its interval
hull when $\mathcal U$ is non-convex. The endpoints and all robust conditions
are nevertheless exact. For the ball
$\mathcal U=\{z\in\Dcal:\metricnorm z\le\varepsilon\}$,
$\sigma_{\mathcal U}(d)=\varepsilon\metricnorm d$, and equality is attained
in every nonzero pair direction. This proves sharpness of the familiar
$2\varepsilon\metricnorm{d_{ij}}$ margin radius.

If independent uncertainty sources satisfy
$z_r\in\mathcal U_r$ and add in the comparison coordinate, then their total
set is the Minkowski sum $\mathcal U_1+\cdots+\mathcal U_p$ and
\begin{equation}
 \sigma_{\mathcal U_1+\cdots+\mathcal U_p}(d)
 =\sum_{r=1}^{p}\sigma_{\mathcal U_r}(d).
 \label{eq:s-minkowski-support}
\end{equation}
Physics, algebraic-solve, discretization, branch and operator--data discrepancy
sets can therefore be combined without pretending that any missing component
has been bounded.

\subsection{Bregman and smooth-loss coordinates}

Let $\Phi$ be Fr\'echet differentiable on an open convex subset of a real
Hilbert space and define
\begin{equation}
 D_\Phi(x,y)=\Phi(x)-\Phi(y)-\ip{\nabla\Phi(y)}{x-y}.
\end{equation}
For the first-type loss $R_i(t)=D_\Phi(t,y_i)$,
\begin{align}
 G_{ij}(t)
 &=c_{ij}^{(1)}+\ip{t}{\nabla\Phi(y_i)-\nabla\Phi(y_j)},
 \label{eq:s-bregman-first}\\
 c_{ij}^{(1)}
 &=\Phi(y_i)-\Phi(y_j)
 +\ip{\nabla\Phi(y_j)}{y_j}
 -\ip{\nabla\Phi(y_i)}{y_i}.
\end{align}
Hence the exact primal comparison coordinate is the projection onto
$\operatorname{span}\{\nabla\Phi(y_i)-\nabla\Phi(y_1)\}_{i=2}^N$.
For the second-type loss $R_i(t)=D_\Phi(y_i,t)$,
\begin{equation}
 G_{ij}(t)=\Phi(y_j)-\Phi(y_i)
 -\ip{\nabla\Phi(t)}{y_j-y_i},
 \label{eq:s-bregman-second}
\end{equation}
so the decision coordinate is the projection of the dual variable
$\nabla\Phi(t)$ onto the candidate-difference span. These are exact bisector
coordinates, not claims about every non-Hilbert metric.

The continuous-statistic dimension argument requires an open coordinate
domain. For Eq.~\eqref{eq:s-bregman-first}, it applies when the primal target
domain contains a relatively open set in an affine translate of the active
gradient-difference span. For Eq.~\eqref{eq:s-bregman-second}, it applies only
when the attainable dual domain $\nabla\Phi(\operatorname{dom}\Phi)$ contains
a relatively open set in an affine translate of the active candidate span.
On restricted sets, sign and top-1 minimality still require the corresponding
exposure condition.

More generally, for a twice differentiable loss $\ell(t,y)$, define
$g_{ij}(t)=\ell(t,y_j)-\ell(t,y_i)$. Around $t_0$,
\begin{equation}
 g_{ij}(t_0+h)=g_{ij}(t_0)
 +\ip{h}{\nabla_1\ell(t_0,y_j)-\nabla_1\ell(t_0,y_i)}
 +r_{ij}(h).
\end{equation}
If the gradient difference is $L_{ij}$-Lipschitz on the segment, then
$|r_{ij}(h)|\le(L_{ij}/2)\norm h^2$. The span of these gradient differences
is therefore a local tangent comparison space; unlike the squared-Hilbert and
Bregman formulas above, it is generally not a global exact quotient.

\subsection{Anchor sensitivity, composition and multiple anchors}

Let $\Acal\in C^2$ and define the Newton map
\begin{equation}
 \mathcal N(w)=w-D\Acal(w)^{-1}\Acal(w)
\end{equation}
where the derivative is invertible. With
$\delta_w=-D\Acal(w)^{-1}\Acal(w)$, direct differentiation gives
\begin{equation}
 D\mathcal N(w)[h]
 =-D\Acal(w)^{-1}D^2\Acal(w)[h,\delta_w].
 \label{eq:s-anchor-derivative}
\end{equation}
Consequently, if along the segment between anchors $w$ and $w'$ the inverse
norm, second-derivative norm and correction norm are bounded by
$\beta$, $K$ and $\eta$, respectively, then
\begin{equation}
 \norm{\mathcal N(w')-\mathcal N(w)}
 \le\beta K\eta\norm{w'-w}.
 \label{eq:s-anchor-sensitivity}
\end{equation}
This local estimate explains why anchor changes can be strongly damped near a
regular root but does not select a universally valid anchor.

For an arithmetic state anchor
$w_N=N^{-1}\sum_{i=1}^Nv_i$, augmentation by $v_{N+1}$ gives the exact identity
\begin{equation}
 w_{N+1}-w_N=\frac{v_{N+1}-w_N}{N+1}.
\end{equation}
For any two old task candidates, the resulting corrected proxies
$q_N=Q\mathcal N(w_N)$ and $q_{N+1}=Q\mathcal N(w_{N+1})$ obey
\begin{equation}
 G_{ij}(q_{N+1})-G_{ij}(q_N)
 =2\metricip{q_{N+1}-q_N}{d_{ij}}.
 \label{eq:s-library-augmentation-gap}
\end{equation}
Equations~\eqref{eq:s-anchor-sensitivity}--\eqref{eq:s-library-augmentation-gap}
are composition-sensitivity identities, not a guarantee for the arithmetic
mean or for a medoid.

Finally, suppose $K$ truth-free anchors produce proxies $q_a$ and certified
projected uncertainty sets $\mathcal U_a$ satisfying
$P_{\Dcal}^{M}(t-q_a)\in\mathcal U_a$. Each anchor yields the interval
$I_{ij}^{(a)}$ from Eq.~\eqref{eq:s-exact-gap-hull}, and
\begin{equation}
 G_{ij}(t)\in\bigcap_{a=1}^{K}I_{ij}^{(a)}.
\end{equation}
An intersection lying strictly on one side of zero certifies the pair; one
candidate favoured by all of its intersected pair intervals is certified
top-1. Empty intersections expose inconsistent bounds or implementations.
This multi-anchor construction is theory-only and was not evaluated in the
reported studies.

\section{Operator extensions under finite training observations}
\label{sec:s-operator-extensions}

This section explains why deployment selection is not resolved merely by matching a
finite training record.  Let $K$ be a compact subset of a normed input space and
$\mathfrak G=C(K;X)$.  Suppose training information has the form
\begin{equation}
 \mathcal O(G)=\Psi(O_1G(a_1),\ldots,O_MG(a_M)),
\end{equation}
where $O_j:X\to E_j$ are bounded linear observations and $\Psi$ is arbitrary.
Fix $a_\star\in K$ and define
\begin{equation}
 N_\star=\bigcap_{j:a_j=a_\star}\ker O_j,
\end{equation}
with $N_\star=X$ if $a_\star$ was not observed.

\begin{theorem}[Finite-observation operator lifting]
For $G_0\in C(K;X)$ and $h\in N_\star$, there exists $G_h\in C(K;X)$ such that
\begin{equation}
 \mathcal O(G_h)=\mathcal O(G_0),\qquad
 G_h(a_\star)=G_0(a_\star)+h,
 \qquad
 \norm{G_h-G_0}_{\infty}\le\norm h_X.
\end{equation}
\end{theorem}

\begin{proof}
Let $F_\star=\{a_j:a_j\ne a_\star\}$.  If it is empty, set $\chi\equiv1$.
Otherwise let
$d_\star=\min_{b\in F_\star}\norm{a_\star-b}>0$ and
\[
\chi(a)=\max\{0,1-2\norm{a-a_\star}/d_\star\}.
\]
Set $G_h(a)=G_0(a)+\chi(a)h$.  At observations away from $a_\star$, $\chi=0$;
at $a_\star$, every relevant $O_jh=0$.  Thus all features entering $\Psi$ are
unchanged.  The remaining statements follow from $\chi(a_\star)=1$ and
$0\le\chi\le1$.
\end{proof}

This topological statement explains why finite training observations alone do
not determine deployment behaviour: they permit indistinguishable extensions in
hidden directions.

\subsection{Pairwise adjoints and smooth losses}

For a bounded linear task $C:X\to Z$ and squared loss, define the pairwise truth contrast
\begin{equation}
 J_{ij}(x)=\norm{C(x-v_j)}_Z^2-\norm{C(x-v_i)}_Z^2.
\end{equation}
Here $C$ includes the fixed common-grid lift and the identity or terminal trace,
so the Methods notation is $y_i=Cv_i$ and $t=Cu$ in this linear setting.
The quadratic terms in $Cx$ cancel:
\begin{equation}
 J_{ij}(x)=2\ip{Cx}{C(v_i-v_j)}_Z
 +\norm{Cv_j}_Z^2-\norm{Cv_i}_Z^2.
 \label{eq:s-affine-contrast}
\end{equation}
Thus the unknown truth enters an affine functional even though the individual
losses are quadratic.

\begin{proposition}[Direct pairwise adjoint estimator]
Let $\Acal\in C^1(\mathcal U;Y)$, let $w,u\in\mathcal U$ with
$[w,u]\subset\mathcal U$, and assume $\Acal(u)=0$.  Suppose an adjoint state
$z_{ij,w}\in Y$ satisfies
\begin{equation}
 D\Acal(w)^*z_{ij,w}=2C^*C(v_i-v_j).
\end{equation}
Let an inexact shared correction satisfy
\begin{equation}
 D\Acal(w)\delta_w=-\Acal(w)+\ell_w.
\end{equation}
Define
\begin{equation}
 \widehat\Delta_{ij}=J_{ij}(w)-\ip{z_{ij,w}}{\Acal(w)}_Y
 +\ip{z_{ij,w}}{\ell_w}_Y.
\end{equation}
If
$\norm{D\Acal(x)-D\Acal(w)}\le M\norm{x-w}$ on the segment to $u$, then
\begin{equation}
 |\widehat\Delta_{ij}-J_{ij}(u)|
 \le\norm{z_{ij,w}}_Y\norm{\ell_w}_Y
 +\frac12M\norm{z_{ij,w}}_Y\norm{u-w}_X^2.
\end{equation}
At $w=(v_i+v_j)/2$, $J_{ij}(w)=0$.
\end{proposition}

\begin{proof}
Apply Theorem~\ref{thm:s-nonlinear-identity} to the affine scalar task
Eq.~\eqref{eq:s-affine-contrast}.  Its derivative is constant, the adjoint equation
is exact factorization, and $\int_0^1s\,ds=1/2$. The algebraic solve defect
contributes $\ip{z_{ij,w}}{\ell_w}_Y$, bounded by Cauchy--Schwarz. The
midpoint statement follows by symmetry.
\end{proof}

For the inexact linearized correction
$L_w\delta_w=-\Acal(w)+\ell_w$ and a linear task $C$,
this point estimator is exactly the shared-$q$ gap rather than a competing
diagnostic:
\begin{align}
 J_{ij}(w)-\ip{z_{ij,w}}{\Acal(w)}_Y
 +\ip{z_{ij,w}}{\ell_w}_Y
 &=J_{ij}(w)+\ip{L_w^*z_{ij,w}}{\delta_w}_X \\
 &=J_{ij}(w)+2\ip{C(v_i-v_j)}{C\delta_w}_Z
 =J_{ij}(w+\delta_w).
\end{align}
The last expression is $G_{ij}(q)$ for $q=C(w+\delta_w)$.  Thus one primal
correction computes every linear-task pairwise point estimate simultaneously.
The conservative adjoint envelope reported later is a different construction:
it bounds remainders and was too loose to select models.

For a general smooth loss $\ell:Z\times Z\to\Rnum$, set
$J_{ij}(x)=\ell(Q(x),y_j)-\ell(Q(x),y_i)$.  The same factorization and residual
identity apply to $DJ_{ij}$, but $J_{ij}$ need not be affine and its second
derivative contributes to the remainder.  Consequently the $N-1$ linear
comparison-subspace reduction is special to squared Hilbert distance; the broader
task-factorization layer remains valid.

\subsection{Decoder bias, residual gauge, and computational budget}

Suppose a task decoder along candidate paths $v_{\rho,\theta}=u-\rho\theta$ has
the expansion
\begin{equation}
 p_{\rho,\theta}=\rho C\theta+\rho^kH(\theta)+o(\rho^k),
 \qquad k\ge2.
\end{equation}
If $C\theta\ne0$, its norm expands as
\begin{equation}
 \norm{p_{\rho,\theta}}
 =\rho\norm{C\theta}
 +\rho^k\ip{C\theta/\norm{C\theta}}{H(\theta)}+o(\rho^k).
 \label{eq:s-radial-bias}
\end{equation}
Only the radial component of the leading vector error affects norm ranking at this
order.

\begin{proposition}[Directional rank reversal]
Suppose $\norm{C\theta_1}=\norm{C\theta_2}=a>0$ and the radial coefficients
$b(\theta)$ in Eq.~\eqref{eq:s-radial-bias} satisfy
$b(\theta_1)>b(\theta_2)$.  For any
$0<c<[b(\theta_1)-b(\theta_2)]/(2a)$ and sufficiently small $\rho$, candidates at
distances $\rho-c\rho^k$ along $\theta_1$ and $\rho+c\rho^k$ along $\theta_2$ have
the correct true norm ordering but the reversed proxy ordering.  The true norm gap
is $\Theta(\rho^k)$.
\end{proposition}

\begin{proof}
Insert the two perturbed radii into Eq.~\eqref{eq:s-radial-bias}.  The proxy
coefficients at order $\rho^k$ are $b(\theta_1)-ac$ and
$b(\theta_2)+ac$, while the true norms differ by $2ac\rho^k$ in the opposite
direction.
\end{proof}

This order is decoder-dependent.  For the scalar equation
$\Acal(x)=x+ax^2$ with root zero, a one-step Newton correction from $v=-\rho$ is
$\rho+a\rho^2+O(\rho^3)$.  Under the locally invertible residual reparameterization
$\Phi(y)=y-ay^2$ and $\widetilde\Acal=\Phi\circ\Acal$, the same root and candidate
give $\rho+O(\rho^3)$.  In general,
\begin{equation}
 D^2\widetilde\Acal(u)[h,h]
 =D\Phi(0)D^2\Acal(u)[h,h]
 +D^2\Phi(0)[D\Acal(u)h,D\Acal(u)h].
\end{equation}
High-order rank curvature is therefore not an invariant of the PDE zero set; it
depends on the residual coordinate and decoder.  We use the natural complete PDE
residual and make no coordinate-free optimality claim.

Similarly, without a physics-oracle budget there is no fixed rank barrier: repeated
Newton steps approach the nonlinear solve.  The empirical method fixes one shared
linearized solve per input.  The midpoint hierarchy in
Sections~\ref{sec:s-burgers}--\ref{sec:s-burgers-correction} analyzes
alternative fixed decoders and their computational requirements.

\subsection{Polynomial constraints, nonsmooth regimes, and branches}

If $\Acal$ is a polynomial of degree at most $d$, then
$s\mapsto D\Acal(v+se)$ has degree at most $d-1$.  A $q$-point Gauss--Legendre rule
on $[0,1]$ is exact when $d\le2q$, giving
\begin{equation}
 \Acal(u)-\Acal(v)
 =\left[\sum_{j=1}^{q}\omega_jD\Acal(v+c_je)\right]e.
\end{equation}
The Burgers midpoint identity is the degree-two, one-node case; cubic and quartic
reaction terms require two nodes.  Because the nodes depend on unknown $e$, this is
an algorithmic high-order template rather than a free certificate.

Smooth tangent theory also has hard boundaries.  For the translated step
$u_s(x)=\mathbf1_{x<s}$,
$\norm{u_{s+h}-u_s}_{L^2}=|h|^{1/2}$, so the solution map is not locally Lipschitz
into $L^2$.  In $L^1$ the difference quotient approaches a measure rather than an
$L^1$ tangent.  For physics-informed approximation of conservation laws after
shock formation, strong residual losses may therefore be unsuitable; weak
dual-norm residuals and entropy-admissibility conditions are natural alternatives
\citep{ChaumetGiesselmann2024}.

Finally, residual zero does not identify a chosen branch.  For
$\Acal(x)=x^2-1$, designated truth $u=1$, and candidate $v=-1$, the candidate
residual vanishes although $|u-v|=2$.  A general certificate must either establish
branch uniqueness for the task or include a branch-ambiguity radius.  The three
experimental PDEs use fixed unique numerical branches in the tested regimes.
% --- End inlined file: supplement_sections/s3_extensions.tex ---
% --- Begin inlined file: supplement_sections/s4_numerical_protocol.tex ---
\section{PDE diagnostics and numerical contracts}
\label{sec:s-pde-contracts}

\subsection{Assembled constraint defects}

For Burgers, the state trajectory is sampled on a periodic Fourier grid.  The
defect product contains the full initial mismatch and every time-step PDE defect;
the shared tangent equation is integrated by a fixed integrating-factor RK4
scheme.  The common task grid contains 384 points in the two-PDE studies.  The task
is the terminal field with physical periodic $L^2$ quadrature.

For reaction--diffusion, arrays retain explicit boundary nodes.  Boundary residual
entries equal the predicted Dirichlet values, so the defect includes boundary as
well as interior error.  Interior entries use conservative face coefficients for
$-\nabla\cdot(a\nabla u)$ and the full $\lambda u+\gamma u^3-f$ reaction defect.
The correction on the boundary is imposed exactly; after transferring its
contribution to the interior right-hand side, the Jacobian system with reaction
factor $\lambda+3\gamma w^2$ is solved by preconditioned conjugate gradients.  The
common grid has 128 interior points per direction and uses physical tensor-product
quadrature including boundary nodes.

For Sine--Gordon, trajectories store displacement and velocity.  The complete
discrete defect includes both initial fields and every displacement/velocity step
of velocity Verlet.  Differentiating that map gives an exact lower-triangular
Jacobian action, and forward substitution closes the linear residual.  The task
metric is periodic physical $L^2$ on terminal displacement plus terminal velocity
with weight one.  Candidate fields are Fourier-lifted to the common task grid.

For the PDEBench two-field diffusion--reaction system, trajectories are
cell-centred on a $128\times128$ homogeneous-Neumann grid. The rollout starts at the
last of ten observed frames.  Each released U-Net maps the same ten-frame,
20-channel history to one two-field frame; all three checkpoints are rolled out
autoregressively to 101 frames by dropping the oldest frame and appending the
prediction at every step.  Their different one-step, autoregressive, and
push-forward-20 labels describe training strategies, not different evaluation
horizons.  The complete discrete defect contains the known
initial row and every subsequent trapezoidal-map row for
$u_t=u-u^3-k-v+D_u\Delta u$ and $v_t=u-v+D_v\Delta v$, with
$(D_u,D_v,k)=(0.001,0.005,0.005)$.  The Laplacian matches the released finite-volume
generator, including its one-neighbour boundary rows.  The exact block lower-
triangular Jacobian is solved by forward substitution; each step uses GMRES with a
state-dependent block-diagonal sparse-LU preconditioner, relative tolerance
$10^{-10}$, absolute tolerance $10^{-12}$, and a fail-closed 100-iteration limit.
The task metric is cell-area-weighted squared $L^2$ over both terminal fields.

The raw comparator is defined by the following implementation-level product
norms.  If $r_i^0$ is the initial-condition defect and $r_i^n$ the Burgers PDE
defect at time row $n$, then
\begin{equation}
 S_{i,\mathrm B}^{\mathrm{raw}}
 =\Delta x\sum_x|r_i^0(x)|^2
  +\Delta x\sum_n\Delta t_n\sum_x|r_i^n(x)|^2.
 \label{eq:s-raw-burgers}
\end{equation}
For stationary reaction--diffusion with mesh spacing $h$, the residual weights
are boundary line measure and interior area,
\begin{equation}
 S_{i,\mathrm{RD}}^{\mathrm{raw}}
 =h\sum_{x\in\partial\Omega_h}|r_i(x)|^2
  +h^2\sum_{x\in\Omega_h^\circ}|r_i(x)|^2.
 \label{eq:s-raw-rd}
\end{equation}
Writing $r_{u,i}^0,r_{v,i}^0$ for the two Sine--Gordon initial blocks and
$r_{u,i}^n,r_{v,i}^n$ for its velocity-Verlet map defects gives
\begin{equation}
 S_{i,\mathrm{SG}}^{\mathrm{raw}}
 =\Delta x\!\left(\sum_x(|r_{u,i}^0|^2+|r_{v,i}^0|^2)
 +\sum_{n,x}(|r_{u,i}^n|^2+|r_{v,i}^n|^2)\right).
 \label{eq:s-raw-sg}
\end{equation}
The map defects themselves contain the time-step factors; no additional
time weight is applied.  PDEBench analogously uses
\begin{equation}
 S_{i,\mathrm{PB}}^{\mathrm{raw}}
 =\Delta x\Delta y\left(\sum_{x,k}|r_{i,k}^0(x)|^2
 +\sum_{n,x,k}|r_{i,k}^n(x)|^2\right),
 \label{eq:s-raw-pdebench}
\end{equation}
where $k$ indexes its two fields and $r_i^n$ is the trapezoidal-map defect,
including its time-step factor.  Thus $\|\cdot\|_{Y,a}$ in the Methods is not an
unspecified continuous residual norm: it denotes the corresponding block,
time, and quadrature weighting above.  Every raw-residual comparison uses the same
candidates, inputs and reference states as propagated physics. It
compares two representations of the available physics signal; the computational
costs differ because propagation includes a linearized solve.
The benchmark equations are evaluated in their nondimensional forms.  These
product weights approximate the block integrals but are not optimized
stability or dual norms; sensitivity to alternative residual gauges remains a
separate question.

\subsection{Numerical-reference checks}

The eight-library two-PDE references used independent coarse/fine calculations:
Burgers used integrating-factor RK4 on 768- and 1,536-point Fourier grids, and
reaction--diffusion used 511- and 767-point interior grids.  The
reaction--diffusion Newton solver requested relative tolerance $10^{-11}$ and
absolute tolerance $10^{-13}$; its inner linear solve requested relative tolerance
$10^{-12}$.  Targeted second-method checks covered 12 Burgers and 12
reaction--diffusion inputs selected by fixed error/refinement rules and produced no
decision-changing discrepancy.

The 16-library references used Burgers grids 1,536 and 3,072 and
reaction--diffusion interior grids 767 and 1,023. The
sum-of-candidate-loss refinement rule found no indeterminate
pair among 107,520 and no indeterminate top-1 case among 3,840. Applied to the
eight-library comparison, the same rule flagged three of 53,760 pairs and one of
1,920 top-1 cases; all remained in the reported results.
Sine--Gordon used spectral
velocity-Verlet at 384 spatial points with 6,144 steps and at 768 points with
12,288 steps, together with residual, Jacobian, and closure checks.  Sixteen
development inputs were also compared with adaptive eighth-order Runge--Kutta
(DOP853).  For Sine--Gordon, none of the 53,760 fine-reference pair gaps was
exactly zero or within
$10^{-8}$ of zero.  Numerical uncertainty was assessed separately from this tie
check: a pair was unresolved if its absolute fine-reference gap did not exceed the
sum of the two candidates' absolute coarse-to-fine loss changes.  The top-1 check
applied the same construction to the fine-reference best and runner-up.  Ten pairs
and three top-1 cases were unresolved and were counted as errors.

For PDEBench, terminal states came directly from the published numerical
trajectories. Every extracted history, terminal state, spatial grid and time grid
matched the source. GMRES
used five or six iterations per step, the maximum relative step residual was
$9.999\times10^{-11}$, and the maximum complete linearized-constraint closure was
$2.310\times10^{-10}$.

\section{Candidate generation and evaluation sets}
\label{sec:s-evaluation-populations}

\subsection{Eight-library two-PDE comparison}

Eight cells crossed PDE, FNO/CNO architecture, and ensemble replicate.  Each cell
contained the epoch-20 and epoch-40 checkpoints from four training runs, for 64
candidates from 32 runs.  Admission required finite metrics, in-distribution
relative $L^2$ error at most one, exact hard initial or boundary conditions where
applicable, and complete eight-candidate libraries.  Within-cell in-distribution
maximum/minimum error ratios ranged from 1.48 to 4.13.

For each PDE, 240 evaluation inputs were balanced across three strata: 80 frequency
shifts activated basis modes absent from the training generator; 80 parameter
shifts used disjoint parameter intervals; and 80 combined shifts applied both.
For Burgers on the periodic domain $[0,2\pi]$ and time interval $[0,0.2]$, the
training generator used initial-condition modes 1--4, forcing modes 1--2 and
$\nu\in[0.025,0.075]$. Frequency shift added initial modes 5--10; parameter
shift used $\nu\in[0.008,0.018]\cup[0.085,0.110]$; combined shift applied both.
Initial Fourier coefficients had scales $0.55k^{-1.5}$ for modes 1--4 and
$0.30k^{-0.5}$ for added modes, while forcing coefficients had scale
$0.12k^{-1}$ with temporal modulation at most 0.35 over one or two cycles.

Reaction--diffusion used $L=1$, diffusion field
$a=0.25+\exp(g)$ and sine-basis forcing. The training generator activated the
first five of eight diffusion modes and first six of nine forcing modes, with
coefficient scales 0.12 and 0.80,
$\lambda\in[0.80,1.40]$ and $\gamma\in[0.40,0.90]$. Frequency shift activated
all eight and nine modes with scales 0.16 and 0.90. Parameter shift increased
the scales to 0.26 and 1.35 and used
$\lambda\in[0.30,0.60]\cup[1.80,2.20]$ and
$\gamma\in[0.10,0.25]\cup[1.10,1.40]$; combined shift applied the expanded
basis support and shifted parameters together.

These inputs were disjoint from training, validation and development. The hidden-reference
tie tolerance was $10^{-6}$.  Stored propagated labels used a
$1.01\times10^{-8}$ numerical score tolerance and raw-residual labels used
$10^{-12}$.  The propagated value is the fixed $10^{-10}$ absolute floor plus
a $10^{-8}$ relative floor at unit score scale. A matched-tolerance sensitivity
analysis applies $10^{-6}$ to
both score families.

\subsection{Internally trained candidate contracts}

The Burgers and reaction--diffusion candidate studies used 2,048 training and 256
in-distribution validation samples per run, AdamW for 40 epochs, batch size 32,
weight decay $10^{-4}$, and checkpoints at epochs 20 and 40.  The learning rate
was $2\times10^{-3}$ except for reaction--diffusion CNO, for which an
in-distribution-only pilot selected $10^{-3}$.  Burgers FNO used width 24, six
temporal and 12 spatial modes, four Fourier layers, and projection width 48;
Burgers CNO used three levels, two residual blocks, three neck blocks, circular
padding, and channel multiplier 37 at 48 native spatial points.  The
reaction--diffusion FNO used width 32, 12 modes per axis, four layers, and
projection width 64; its CNO used three levels, two residual blocks, three neck
blocks, zero padding, and channel multiplier 40 at 32 native interior points.
Initial or boundary conditions were enforced by construction.

Sine--Gordon used the same 2,048/256 split, 40 epochs, epoch-20/40 checkpoints,
AdamW learning rate $2\times10^{-3}$, and weight decay $10^{-4}$ on a
$48\times48$ space--time grid.  Its two-output FNO used the Burgers Fourier
width/modes/layers, while its two-output CNO used the Burgers CNO depth and channel
contract; both enforced the two initial phase channels exactly. GPU batch sizes
were 96 for FNO and 48 for CNO.

\subsection{Small-panel model selection}

The small-panel study reused the eight two-PDE libraries.  For each PDE it generated
eight balanced 24-input selection panels (eight inputs per stratum) and one disjoint
240-input deployment population.  Each panel had a fixed balanced ordering, so the
budgets 3, 6, 12, and 24 were nested.  Statistical inference conditions on the
eight libraries and resamples panels within PDE and deployment inputs within
PDE/shift stratum, preserving shared inputs across architecture and ensemble.
We evaluated mean normalized regret, exact and tolerant selection, and top-two
selection across the eight libraries.

\subsection{Sixteen-library comparison}

Four new library replicates were trained for each PDE/architecture cell, yielding
16 libraries and 128 candidates.  Training runs, seeds, checkpoint identities,
training samples, and validation samples were disjoint from the earlier candidate
population.  Primary diagnostics used the full-state geometric medoid; the
arithmetic mean was evaluated as a comparator.  The evaluation comprised 480
instance-wise inputs, 2,304 selection inputs, and 1,920 evaluation inputs. The
selection-panel size was 144.

\subsection{Candidate-composition stress test}

A separately generated set contained 16 libraries, 128 checkpoints, 480
instance-wise inputs, and 3,840 input--library cases; none of its candidates or
inputs belongs to the 16-library set above. With an
arithmetic-mean common state, propagated physics reached 97.9883\%
strict pairwise and 94.5833\% exact top-1 accuracy in aggregate, versus 71.7736\%
and 41.0677\% for raw residual.  One reaction--diffusion CNO library was markedly
less accurate (72.9762\% pairwise and 27.9167\% top-1) because two checkpoints
from one run lay far from the other six under shift.  Retaining the eight-candidate
mean while evaluating the six clustered candidates yielded 54.5278\% pairwise
accuracy; recomputing the shared state on those six yielded 3,600/3,600 pair and
240/240 top-1 decisions.  These results isolate a library-composition effect:
an outlying candidate can alter the decision coordinate even for comparisons in
which it does not participate.  On this same stress population, the geometric
medoid changed the affected CNO cell to 100\% on both metrics and matched the
near-perfect rankings of the other reaction--diffusion cells.  For
Burgers it changed no top-1 decision and reduced the correct-pair count by ten
among 53,760 comparisons.

\subsection{Sine--Gordon comparison}

The Sine--Gordon study used eight candidate libraries and generated 240 new
balanced inputs disjoint from training, validation, reference-development, and
earlier evaluation populations. The libraries were eight FNO/CNO ensembles.
Every indeterminate outcome counted as an error. The study used the geometric-medoid
common state and a fixed phase-space metric. Primary comparisons were strict; a
separate $10^{-8}$ tie check found no fine-reference ties. This
population therefore tests input generalization for a third PDE.

\subsection{Public PDEBench model library}

The candidate library contains the three released U-Net checkpoints for the
two-dimensional diffusion--reaction dataset: one-step, autoregressive and
push-forward-20 training variants. No model was fine-tuned. The U-Net architecture
is common to all candidates, while the training strategy changes. The reported
population comprises identifiers 0901--0999 from the official final split.

Official training code assigns identifiers 0900--0999 to its validation object,
so these inputs were absent from gradient training.

\section{Metrics, aggregation, and uncertainty}
\label{sec:s-metrics}

For an unordered canonical pair $i<j$, the stored gap is the loss of $j$ minus the
loss of $i$, so a positive gap prefers $i$. In the primary two-PDE study,
primary pair accuracy conditions on reference non-ties under absolute tolerance
$10^{-6}$; the all-pair three-way statistic retains the tie label.  Its top-1
metric accepts a selected checkpoint whose hidden loss is within $10^{-6}$ of the
minimum for that input and library; exact top-1 is reported separately. The additional-library study
uses zero-tolerance strict pairwise and exact top-1 metrics.  The Sine--Gordon
primary metrics compare the exact signs of all fine-reference pair gaps and the
exact minimizing checkpoint.  A separate $10^{-8}$ tolerant check found no
fine-reference ties.  Outcomes unresolved by the coarse-to-fine loss-change check
defined above are retained in the denominator and counted as errors.

The PDEBench study uses all three unordered candidate pairs per input under strict
zero-tolerance comparison and exact checkpoint-identity top-1.  Candidate pairs are
within-input decisions, not independent samples. Because every input produced the
same correctness vector, the bootstrap
interval is degenerate and is not interpreted as population uncertainty.  We
report exact descriptive counts for this one fixed library.

A matched-tolerance analysis applies the $10^{-6}$ hidden-reference tolerance to
both diagnostic gaps, separating ranking disagreement from different tie
conventions.

The primary study used 10,000 percentile-bootstrap replicates with seed
80173001 and the paired PDE/input identifier as the cluster.  The small-panel study
used 10,000 hierarchical replicates with seed 84100001, resampling panels within
PDE and deployment inputs within PDE/shift. The additional-library study used 10,000
hierarchical replicates with seed 93090001, preserving paired library replicate,
selection panel, deployment wave and input dependencies. The Sine--Gordon study used
10,000 replicates with seed 95090001 and the shared input identifier across
architectures, replicates and pairs as the cluster. The PDEBench bootstrap
used seed 97090001 and the input identifier as its unit.  Candidate pairs are never
treated as independent sampling units.

For panel-level selection, exact best is the deterministic minimum; tolerant best
uses absolute tolerance $10^{-6}$ in the additional-library study; and top-2 means the
selected checkpoint is among the two lowest deployment-reference risks.  NRegret
divides absolute regret by the larger of the within-library risk span and
$10^{-8}$.  Route-level point means and hierarchical-bootstrap means are reported
as distinct estimators.
% --- End inlined file: supplement_sections/s4_numerical_protocol.tex ---
% --- Begin inlined file: supplement_sections/s5_detailed_results.tex ---
\section{Cell-level and control results}
\label{sec:s-detailed-results}

\subsection{Evaluation sets}

% --- Begin inlined file: tables/table_populations.tex ---
\begin{table}[H]
\centering
\caption{Instance-wise study populations. An input--library case evaluates
one candidate library on one input; populations are analysed separately.}
\label{tab:populations}
\footnotesize
\setlength{\tabcolsep}{4pt}
\begin{tabularx}{\textwidth}{@{}l l r r >{\raggedright\arraybackslash}X@{}}
\toprule
Study & Candidates & Inputs / cases & Pair decisions & Primary rule \\
\midrule
Primary two-PDE & 8 trained libraries ($8$ each) & 480 / 1,920 & 53,760 & $10^{-6}$ reference non-ties; tolerant top-1 \\
Additional libraries & 16 libraries ($8$ each) & 480 / 3,840 & 107,520 & Strict pairs; exact top-1 \\
Sine--Gordon & 8 libraries ($8$ each) & 240 / 1,920 & 53,760 & Strict pairs and top-1; unresolved counted as errors \\
PDEBench & 1 public U-Net library ($3$ models) & 99 / 99 & 297 & Strict pairs; exact top-1 \\
\bottomrule
\end{tabularx}
\end{table}
% --- End inlined file: tables/table_populations.tex ---

\subsection{Pooled instance-wise results}

% --- Begin inlined file: tables/table_instancewise.tex ---
\begin{table}[H]
\centering
\caption{Instance-wise hidden-reference results. Values are estimates with 95\%
clustered-bootstrap intervals; pooled count ratios appear in parentheses when
they differ from the clustered estimator. The first two-PDE comparison uses the $10^{-6}$
tolerant top-1 rule; the 16-library, Sine--Gordon and PDEBench comparisons use exact
checkpoint identity. Exact primary top-1 was $1{,}899/1{,}920$ for propagation and
$688/1{,}920$ for raw residual (Supplementary Section~S5).  PDEBench is an exact
count for one public library. Populations are not pooled.}
\label{tab:instancewise}
\footnotesize
\begin{tabularx}{\textwidth}{@{}l l >{\raggedright\arraybackslash}X
  >{\raggedright\arraybackslash}X@{}}
\toprule
Setting & Method & Pairwise result & Top-1 result \\
\midrule
Two-PDE, eight libraries & Propagated &
$49{,}122/49{,}323=99.5925\%$ [99.5355, 99.6482] &
$1{,}901/1{,}920=99.0104\%$ [98.5938, 99.4271] \\
Two-PDE, eight libraries & Raw residual &
$33{,}457/49{,}323=67.8325\%$ [67.1886, 68.4727] &
$819/1{,}920=42.6563\%$ [40.7813, 44.4792] \\
\addlinespace
Two-PDE, 16 libraries & Propagated &
$99.7063\%$ [99.6187, 99.7870] ($107{,}205/107{,}520=99.7070\%$) &
$98.9549\%$ [98.3333, 99.4792] ($3{,}800/3{,}840=98.9583\%$) \\
Two-PDE, 16 libraries & Raw residual &
$70.4352\%$ [69.5879, 71.2891] ($75{,}733/107{,}520=70.4362\%$) &
$43.4816\%$ [38.9063, 48.0215] ($1{,}671/3{,}840=43.5156\%$) \\
\addlinespace
Sine--Gordon & Propagated &
$52{,}942/53{,}760=98.4784\%$ [98.2013, 98.7333] &
$1{,}835/1{,}920=95.5729\%$ [94.4792, 96.5625] \\
Sine--Gordon & Raw residual &
$27{,}837/53{,}760=51.7801\%$ [50.6622, 52.9018] &
$250/1{,}920=13.0208\%$ [11.4583, 14.6875] \\
\addlinespace
Public PDEBench library & Propagated &
$297/297=100\%$ &
$99/99=100\%$ \\
Public PDEBench library & Raw residual &
$0/297=0\%$ &
$0/99=0\%$ \\
\bottomrule
\end{tabularx}
\end{table}
% --- End inlined file: tables/table_instancewise.tex ---

The eight-library two-PDE columns use hidden-reference non-tie pairwise and tolerant top-1
rules; exact top-1 was 1,899/1,920 for propagation and 688/1,920 for raw
residual, and propagated all-pair three-way accuracy was 91.4732\%.
The 16-library comparison uses strict pairwise and exact top-1; tolerant propagated
top-1 was 3,804/3,840. Within this 16-library population,
propagated-minus-raw differences were
29.2711 percentage points [28.4673, 30.0791] pairwise and 55.4732 [51.0410,
59.9219] top-1.  Sine--Gordon uses conservative strict rules, with ten pair and
three top-1 refinement-indeterminate outcomes retained as errors.  The PDEBench
library has one uniform truth ranking across all 99 inputs.

\subsection{Metric and mechanism sensitivity}

The all-pair three-way accuracy was $49{,}176/53{,}760=91.4732\%$
for propagation and $33{,}457/53{,}760=62.2340\%$ for raw residual.  The
diagnostic tolerances differ because they encode method-specific numerical ties.
Applying the same $10^{-6}$ tolerance to both diagnostic gaps, propagation correctly labeled
$53{,}517/53{,}760=99.5480\%$ of all pairs and recovered 4,413 of the 4,437
reference ties; raw residual gave $33{,}456/53{,}760=62.2321\%$ and recovered
one reference tie.

\begin{table}[H]
\centering
\caption{Mechanism controls on the same 1,920 primary input--library cases.
Pair counts condition on the 49,323 hidden-reference non-ties; top-1 uses the
$10^{-6}$ tolerant winner set.  The wrong-input control cyclically reassigns $q$
within PDE and shift stratum, and the reversed control replaces $w+\delta_w$ by
$w-\delta_w$.}
\label{tab:s-mechanism-controls}
\footnotesize
\begin{tabularx}{\textwidth}{@{}lXrr@{}}
\toprule
Selector & Truth-free construction & Pair correct & Top-1 correct \\
\midrule
ID validation & Input-independent ordering by in-distribution validation loss
  & 24,814/49,323 & 420/1,920 \\
Mean centrality & Squared task-space distance to the candidate mean
  & 28,134/49,323 & 329/1,920 \\
Medoid centrality & Squared task-space distance to the candidate medoid
  & 27,689/49,323 & 335/1,920 \\
Wrong-input propagation & Propagated coordinate from another input in the same stratum
  & 33,885/49,323 & 884/1,920 \\
Reversed propagation & Input-aligned correction with its direction reversed
  & 5,080/49,323 & 59/1,920 \\
\bottomrule
\end{tabularx}
\end{table}

\begin{figure}[H]
  \centering
  \includegraphics[width=0.88\textwidth]{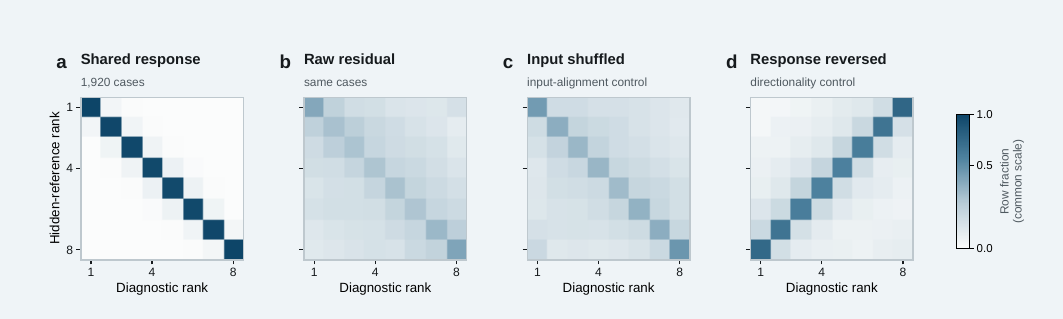}
  \caption{Population-level candidate-rank concordance in the primary
  population. Each tile cross-tabulates a candidate's hidden-reference rank and
  its diagnostic rank for the shared response, raw residual, input-shuffled
  response and reversed response, using one common colour scale. The diagonal
  concentration describes complete-order structure beyond the primary tolerant
  top-1 and non-tie pair endpoints; the tiles are dependent views of the same
  cases.}
  \label{fig:s-rank-concordance}
\end{figure}

For margin stratification, reference non-ties were ranked by absolute truth gap
separately within each of the eight cells and divided into ten equal-count deciles.
Propagated sign accuracy was $97.7913\%$ in the lowest decile and
$99.3917$--$100\%$ in deciles 2--10; raw accuracy ranged from $56.5755\%$ to
$77.3133\%$.  On the 1,920 reconstructed input--library cases, mean
span-normalized regret was $2.4048\times10^{-4}$ for propagation and 0.274746
for raw residual; their medians were 0 and 0.128197, and their maxima were
0.117419 and 1.

The hidden candidate losses also show that selection was not uniformly
large-margin.  In 336/1,920 cases the best--runner-up gap was at most the
primary $10^{-6}$ top-1 tolerance. Relative to the best-to-worst loss span,
the best--runner-up gap had 10th, median, and 90th percentiles 0.0260, 0.1616,
and 0.4777.
The task-loss scale also varied strongly by PDE.  For Burgers, the best-candidate
loss had median 0.10287 and 90th percentile 0.35209; for reaction--diffusion the
corresponding values were $2.3621\times10^{-4}$ and $6.1839\times10^{-3}$.
The median best-to-worst candidate spans were $3.1403\times10^{-3}$ and
$1.8387\times10^{-4}$.  Mean loss of the propagated coordinate $q$ was
$1.6427\times10^{-3}$ for Burgers and $7.1278\times10^{-9}$ for
reaction--diffusion, versus mean oracle-best candidate losses 0.14441 and
0.0021712, respectively.

\subsection{Eight-library two-PDE results}

\begin{table}[H]
\centering
\caption{Results for the eight two-PDE libraries. Pair columns condition on hidden-reference
non-ties; top-1 is tolerant.  Mean regret is the unnormalised excess hidden task
loss of the selected checkpoint over the best candidate, averaged across inputs.
Each cell has 240 inputs and at most 6,720 pair opportunities.}
\label{tab:s-primary-cells}
\small
\begin{tabular}{@{}lrrrrr@{}}
\toprule
Cell & Prop. pair & Raw pair & Prop. top-1 & Raw top-1 & Prop. mean regret \\
\midrule
Burgers CNO e1 & 98.7928 & 71.7288 & 96.6667 & 48.7500 & $2.37\times10^{-6}$ \\
Burgers CNO e2 & 98.8074 & 72.4806 & 96.6667 & 42.5000 & $3.24\times10^{-6}$ \\
Burgers FNO e1 & 99.7283 & 86.8246 & 99.5833 & 71.6667 & $1.63\times10^{-7}$ \\
Burgers FNO e2 & 99.6694 & 89.5116 & 99.1667 & 67.9167 & $6.39\times10^{-6}$ \\
RD CNO e1 & 100.0000 & 53.7785 & 100.0000 & 10.8333 & 0 \\
RD CNO e2 & 100.0000 & 33.4425 & 100.0000 & 0.0000 & 0 \\
RD FNO e1 & 100.0000 & 65.7546 & 100.0000 & 48.7500 & 0 \\
RD FNO e2 & 100.0000 & 69.3808 & 100.0000 & 50.8333 & $3.21\times10^{-12}$ \\
\bottomrule
\end{tabular}
\end{table}

The reaction--diffusion FNO cells contain most reference ties: their non-tie
denominators are 4,532 and 4,667 rather than 6,720.  The other six cells contribute
196 additional ties, giving the pooled non-tie denominator 49,323.  The
all-pair three-way sensitivity is reported above and in the main paper.

\subsection{Sixteen-library two-PDE results}

\begin{table}[H]
\centering
\caption{Results for the 16 two-PDE libraries. Mean regret is
the unnormalised excess hidden task loss of the selected checkpoint over the best
candidate, averaged across inputs.  Each cell has 240 inputs and 6,720 strict pair
opportunities.  Strict top-1 is exact checkpoint agreement.}
\label{tab:s-fresh-library-cells}
\footnotesize
\begin{tabular}{@{}lrrrrr@{}}
\toprule
Cell & Prop. pair & Raw pair & Prop. top-1 & Raw top-1 & Prop. mean regret \\
\midrule
Burgers CNO r1 & 99.4345 & 78.5565 & 97.5000 & 54.1667 & $2.66\times10^{-6}$ \\
Burgers CNO r2 & 99.1815 & 78.7351 & 97.9167 & 57.9167 & $1.26\times10^{-6}$ \\
Burgers CNO r3 & 99.2857 & 77.2917 & 98.7500 & 65.4167 & $4.16\times10^{-7}$ \\
Burgers CNO r4 & 98.8690 & 73.5268 & 95.4167 & 38.7500 & $3.42\times10^{-6}$ \\
Burgers FNO r1 & 99.8512 & 91.5476 & 99.5833 & 65.4167 & $7.86\times10^{-7}$ \\
Burgers FNO r2 & 99.6875 & 88.0952 & 98.3333 & 76.2500 & $6.39\times10^{-6}$ \\
Burgers FNO r3 & 99.7024 & 88.7500 & 99.1667 & 67.0833 & $2.45\times10^{-7}$ \\
Burgers FNO r4 & 99.6131 & 89.8661 & 98.3333 & 72.9167 & $1.50\times10^{-6}$ \\
RD CNO r1 & 100.0000 & 51.3244 & 100.0000 & 23.3333 & 0 \\
RD CNO r2 & 100.0000 & 53.7649 & 100.0000 & 9.5833 & 0 \\
RD CNO r3 & 100.0000 & 55.2827 & 100.0000 & 25.8333 & 0 \\
RD CNO r4 & 100.0000 & 51.5476 & 100.0000 & 40.8333 & 0 \\
RD FNO r1 & 99.8958 & 64.0179 & 99.1667 & 27.9167 & $5.94\times10^{-11}$ \\
RD FNO r2 & 99.9107 & 60.8780 & 99.5833 & 18.7500 & $9.61\times10^{-12}$ \\
RD FNO r3 & 99.9702 & 60.5357 & 100.0000 & 24.1667 & 0 \\
RD FNO r4 & 99.9107 & 63.2589 & 99.5833 & 27.9167 & $7.01\times10^{-13}$ \\
\bottomrule
\end{tabular}
\end{table}

The pooled estimators are clustered-bootstrap means rather than ratios of the cell
totals. Pooled count ratios are 107,205/107,520 pairwise and 3,800/3,840
top-1; the bootstrap means are 99.7063\% and 98.9549\%.

\subsection{Sine--Gordon cells}

Two auxiliary input populations quantify reference-grid sensitivity.  At the
initial reference resolution, propagation reached 97.9018\% pairwise and
95.0521\% top-1 accuracy versus 51.9773\% and 12.2917\% for raw defect energy;
166 pair and 13 top-1 comparisons were numerically indeterminate.  Reference-grid
refinement reduced these counts and yielded 98.3612\% and 96.4063\% versus
51.6648\% and 13.0729\%, with four indeterminate top-1 cases.  These auxiliary
populations are not pooled with the disjoint population below, whose ten pair and
three top-1 indeterminate outcomes are conservatively counted as errors.

\begin{table}[H]
\centering
\caption{Sine--Gordon conservative cell results.  Reference-indeterminate events
are retained as errors.  Each cell has 240 inputs and 6,720 pair opportunities.}
\label{tab:s-sine-gordon-cells}
\small
\begin{tabular}{@{}lrrrrrr@{}}
\toprule
Cell & Prop. pair & Raw pair & Prop. top-1 & Raw top-1 & Pair indet. & Top indet. \\
\midrule
CNO r1 & 99.1667 & 44.3006 & 97.0833 & 20.0000 & 0 & 0 \\
CNO r2 & 98.8542 & 51.0268 & 95.8333 & 8.3333 & 1 & 0 \\
CNO r3 & 98.7798 & 51.6071 & 97.0833 & 7.9167 & 1 & 0 \\
CNO r4 & 99.3899 & 47.8869 & 98.3333 & 13.7500 & 2 & 0 \\
FNO r1 & 97.9167 & 57.4405 & 94.5833 & 15.4167 & 0 & 0 \\
FNO r2 & 97.7530 & 48.0804 & 92.5000 & 11.2500 & 2 & 0 \\
FNO r3 & 97.6488 & 57.9613 & 93.7500 & 12.9167 & 2 & 2 \\
FNO r4 & 98.3185 & 55.9375 & 95.4167 & 14.5833 & 2 & 1 \\
\bottomrule
\end{tabular}
\end{table}

\subsection{Public PDEBench model library}

\begin{table}[H]
\centering
\caption{Descriptive results for the fixed public PDEBench U-Net library on
inputs 0901--0999.  Each input contributes three strict pair comparisons.  No
population interval is attached because candidate ordering and method correctness
are constant across inputs.  Mean regret is the unnormalised excess terminal task
loss over the best checkpoint, averaged across inputs.}
\label{tab:s-pdebench-public}
\small
\begin{tabular}{@{}lrrrl@{}}
\toprule
Method & Strict pair & Exact top-1 & Mean regret & Selected checkpoint \\
\midrule
Propagated physics & $297/297$ & $99/99$ & 0 & Autoregressive \\
Raw residual & $0/297$ & $0/99$ & 0.282935 & One-step \\
Full-trajectory-medoid centrality & $198/297$ & $0/99$ & 0.141394 & Push-forward-20 \\
\bottomrule
\end{tabular}
\end{table}

The true loss ordering was autoregressive, push-forward-20, then one-step on all
99 inputs; the smallest best--runner-up loss gap was 0.108595.  The propagated
scores had the same ordering and the raw scores had the reverse ordering.  The
propagated terminal estimate had mean truth loss 0.004332,
versus 0.010511 for the oracle-best checkpoint, and was strictly better on every
input.  These values use cell-area-weighted squared $L^2$ over both terminal
fields.  The corresponding excluded development input 0900 had the same candidate
ordering, so a constant ordering copied from development would also be perfect on
this population; the experiment supports the physics score on this fixed external
library, not input-adaptive routing.

\subsection{Selection-panel size}

\begin{table}[H]
\centering
\caption{Route-level results across nested selection-panel sizes. The main
comparison uses $B=24$; the full curve shows the observed panel-size dependence.}
\label{tab:s-panel-size}
\small
\begin{tabular}{@{}rrrrrrrrr@{}}
\toprule
$B$ & \multicolumn{4}{c}{Propagated physics} & \multicolumn{4}{c}{Raw residual} \\
\cmidrule(lr){2-5}\cmidrule(lr){6-9}
& NRegret & Exact & Tol. & Top-2 & NRegret & Exact & Tol. & Top-2 \\
\midrule
3  & 0.1269 & 37 & 40 & 46 & 0.2660 & 30 & 32 & 33 \\
6  & 0.1335 & 35 & 39 & 42 & 0.2506 & 27 & 30 & 31 \\
12 & 0.0481 & 46 & 50 & 54 & 0.2148 & 29 & 33 & 37 \\
24 & 0.0351 & 49 & 52 & 55 & 0.1866 & 31 & 37 & 39 \\
\bottomrule
\end{tabular}
\end{table}

Performance varied strongly by library. Burgers CNO ensemble 2 exceeded raw
defect energy by 0.07289 mean normalized regret and selected a top-two model in
only one of eight panels. At group level, propagated versus raw mean normalized regret was
0.007526 versus 0 for Burgers FNO and 0.070986 versus 0.070411 for Burgers CNO,
showing no improvement in either Burgers architecture group.

Combinations of the 24-input panels were also examined at
$B=48,72,96,144,$ and 192. Propagated mean NRegret
decreased from 0.0351 at $B=24$ to 0.0049 at $B=144$ and 0.0030 at $B=192$;
exact-best frequency was not monotone, changing from 0.7656 to 0.9062 and 0.8750,
respectively. These recombinations show the observed panel-size dependence in
this population.

\subsection{Candidate-specific correction control}

A subset used the first eight input identifiers in each PDE/shift stratum,
producing 48 unique inputs and 192 input--library cases. For pair comparisons,
the propagated method recomputed an anchor and response after holding
out both candidates in each pair; it is therefore a pair-cross-fit estimator,
whereas top-1 used
the ordinary one-response shared construction on the same cases. Both were
compared with a control that solves one linearized problem per candidate. Each
candidate-specific solve produces that candidate's corrected task estimate, and
the control score is its squared task-metric distance from the original candidate
output:
$S_i^{\mathrm{cand}}=\lVert Q_a(v_i)+DQ_a(v_i)\delta_i-y_i\rVert_M^2$.

\begin{table}[H]
\centering
\caption{Correction control on the deterministic subset. For the propagated
row (${}^\ast$), the pair entry is pair-cross-fit whereas the top-1 entry uses the ordinary shared
response; the two columns therefore have different propagated constructions.
The candidate-specific pair interval is a 95\% input-cluster bootstrap interval.}
\label{tab:s-candidate-specific}
\small
\begin{tabular}{@{}lcc@{}}
\toprule
Method & Non-tie pairwise & Tolerant top-1 \\
\midrule
Propagated${}^\ast$ & $5{,}007/5{,}031=99.5230\%$ & $190/192=98.9583\%$ \\
Candidate-specific & $5{,}010/5{,}031=99.5826\%$ [99.3993, 99.7616] & $190/192=98.9583\%$ \\
\bottomrule
\end{tabular}
\end{table}

The ordinary shared and candidate-specific methods were top-1 correct on the
same number of cases, while the pair-cross-fit and candidate-specific pair
accuracies differed by three decisions. The pair result is not a one-response
versus eight-response cost comparison. At both $B=12$ and $B=24$, the ordinary
shared and candidate-specific selectors had identical aggregate exact-best,
tolerant-best, top-2 and mean-regret results across the 64 routes.

A separate control used 16-candidate libraries. For each PDE and architecture,
the two eight-candidate ensembles were interleaved; the first 16 input identifiers
in each of the three strata yielded 192 input--library cases. Thus the population contains
the first-eight-per-stratum subset used above and adds the next eight identifiers
per stratum.  The shared construction used the arithmetic mean of all 16
candidates. The same candidates and inputs were used for shared,
candidate-specific and raw-residual scoring.

\begin{table}[H]
\centering
\caption{Pooled-16 accuracy control.  Pairwise accuracy conditions on
hidden-reference non-ties; top-1 uses the $10^{-6}$ tolerant winner set.  The
raw-residual row uses the same candidates and inputs.}
\label{tab:s-pooled16-accuracy}
\small
\begin{tabular}{@{}lcc@{}}
\toprule
Method & Non-tie pairwise & Tolerant top-1 \\
\midrule
Shared propagated & $21{,}378/21{,}458=99.6272\%$ & $189/192=98.4375\%$ \\
Candidate-specific & $21{,}370/21{,}458=99.5899\%$ & $190/192=98.9583\%$ \\
Raw residual & $14{,}235/21{,}458=66.3389\%$ & $67/192=34.8958\%$ \\
\bottomrule
\end{tabular}
\end{table}

Shared and candidate-specific correction selected the same checkpoint in
191/192 cases.  Their 89 pair-label disagreements favoured the shared method by
48 correct decisions to 40; the remaining disagreement was a hidden-reference
tie and is excluded from the non-tie denominator.
Reaction--diffusion was perfect for both propagated methods; their remaining
errors occurred in the two Burgers pools.

\subsection{Truth-covariant flips and direct use of \texorpdfstring{$q$}{q}}

\begin{table}[H]
\centering
\caption{Checkpoint-re-inference flip decomposition over 53,760 unordered pair
opportunities in the primary two-PDE study.}
\label{tab:s-flips}
\begin{tabular}{@{}lr@{}}
\toprule
Category & Count \\
\midrule
No flip & 37,782 \\
Diagnostic and truth flip & 15,534 \\
Diagnostic-only flip & 256 \\
Truth-only flip & 188 \\
\bottomrule
\end{tabular}
\end{table}

Among 15,790 strict diagnostic flips, 98.3787\% were accompanied by a truth flip.
In the primary population, the shared
corrected output $q$ also matched or improved on the best checkpoint's task loss
in all 1,920 input--library cases. This observation concerns
the reported task coordinate; panel-level deployment instead selects a reusable
checkpoint.

\subsection{Component cost accounting}

The shared diagnostic uses one correction solve per input/library, whereas the
candidate-specific construction uses one solve per candidate. The pooled-16
accuracy control is a direct one-versus-16 comparison; the earlier $N=8$ pair
control is pair-cross-fit and is not used for this cost claim. A timing study
formed one 16-candidate library for each PDE and architecture by interleaving the
two eight-candidate ensembles. The first input in each of three strata was
measured ten times after two warm-ups,
giving 60 observations per PDE, method, and candidate count.

\begin{table}[H]
\centering
\caption{Median measured physics-diagnostic component time in seconds for the
pooled-library scaling study. S denotes one shared solve and C one solve per
candidate.  These measurements are separate from the eight-candidate accuracy
control and use fewer inputs than the pooled-16 accuracy control.}
\label{tab:s-pooled16-timing}
\small
\begin{tabular}{@{}rcccc@{}}
\toprule
& \multicolumn{2}{c}{Burgers} & \multicolumn{2}{c}{Reaction--diffusion} \\
\cmidrule(lr){2-3}\cmidrule(lr){4-5}
$N$ & S & C & S & C \\
\midrule
2  & 0.653 & 1.293 & 0.0472 & 0.0930 \\
4  & 0.551 & 2.164 & 0.0475 & 0.1887 \\
8  & 0.540 & 4.337 & 0.0483 & 0.3764 \\
16 & 0.541 & 8.664 & 0.0486 & 0.7455 \\
\bottomrule
\end{tabular}
\end{table}

These measurements characterize within-run scaling of the physics components
after candidate inference. End-to-end latency additionally depends on model
inference, data movement, scheduling and the surrounding serving system. The
archived environment records Python
3.12.13, NumPy 2.5.1, SciPy 1.18.0, and one-thread OpenMP, MKL, OpenBLAS, and
NumExpr settings.

\begin{figure}[H]
  \centering
  \includegraphics[width=0.96\textwidth]{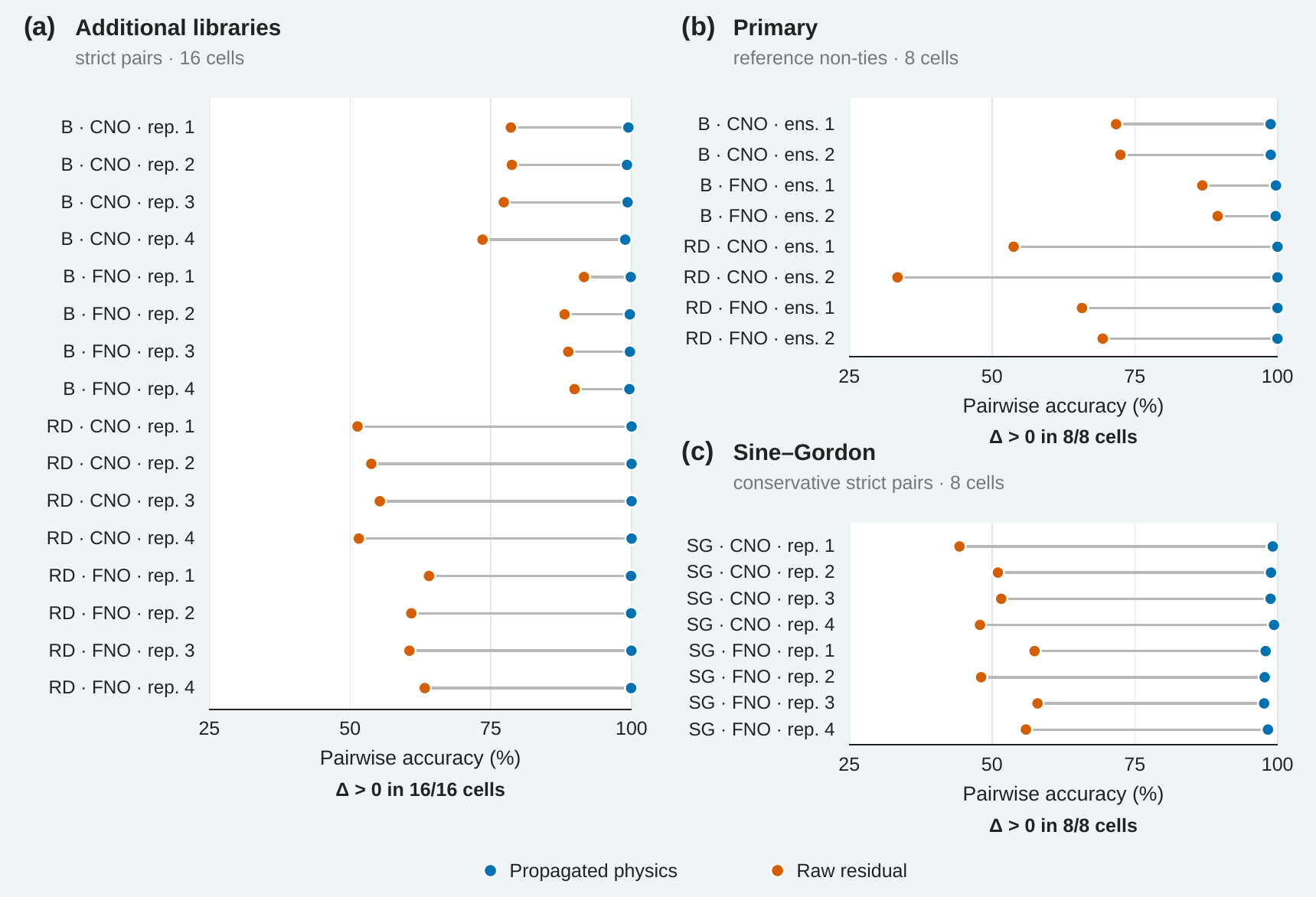}
  \caption{Complete within-cell pairwise contrasts on matched candidates and
  inputs. Panels show 16 additional-library cells under strict pairs, eight
  primary cells under reference non-ties, and eight Sine--Gordon cells under
  conservative strict pairs.  B, RD and SG denote the three PDEs; ens. and rep.
  identify ensembles and library replicates. Shared inputs induce
  dependence, so no cellwise intervals are shown and the three population
  estimands remain separate.}
  \label{fig:s-cellwise-advantage}
\end{figure}
% --- End inlined file: supplement_sections/s5_detailed_results.tex ---
% --- Begin inlined file: supplement_sections/s6_cross_resolution.tex ---
\section{Cross-resolution and locality analyses}
\label{sec:s-cross-resolution}

\subsection{Fixed candidates}

\begin{table}[H]
\centering
\caption{Fixed-candidate grid transitions.  The pair column counts comparisons
that are non-ties on at least one endpoint grid; pairs tied on both grids are
excluded from strict-flip rates.  Strict flips compare positive/negative ordering,
whereas tolerance transitions also include movement into or out of the
pre-specified tie region.}
\label{tab:s-fixed-grid}
\small
\begin{tabular}{@{}lrrrrr@{}}
\toprule
Transition & Cells & Pairs & Strict flips & Tol. transitions & Top-1 changes \\
\midrule
Burgers 192$\to$384 & 4 & 26,874 & 0 & 0 & 0 \\
RD 64$\to$128 & 4 & 26,835 & 10 & 63 & 2 \\
RD 128$\to$256 & 4 & 26,818 & 0 & 25 & 0 \\
RD 256$\to$512, new inputs & 4 & 26,793 & 0 & 6 & 0 \\
\bottomrule
\end{tabular}
\end{table}

For the new-input reaction--diffusion 256-to-512 transition, the four cells
contained 0, 1, 4, and 1 tolerance-label changes, respectively, and no cell changed
top-1.  Every changed pair lay within the bottom 0.70\% of its cell's margin
distribution.  These results describe the tested finite sequence; a
continuum rate would require additional resolution levels and an asymptotic
argument.

\subsection{Exact remainder populations}

All coarse and fine outputs in this analysis are first lifted to the same Hilbert
task space with the fixed metric $M$.  Write $q_h,y_{i,h},y_{j,h}$ for the
coarse-grid diagnostic and candidate outputs and $q_H,y_{i,H},y_{j,H}$ for
their fine-grid counterparts.  For $g\in\{h,H\}$, define the diagnostic gap
and its pair-centred factors by
\begin{align}
  G_{ij,g}
  &=\|q_g-y_{j,g}\|_M^2-\|q_g-y_{i,g}\|_M^2,
  \label{eq:s-cross-grid-gap}\\
  c_{ij,h}&=q_h-\frac{y_{i,h}+y_{j,h}}{2},
  &p_{ij,h}&=y_{i,h}-y_{j,h}.
  \label{eq:s-cross-grid-centre-separation}
\end{align}
Thus $G_{ij,h}=2\langle c_{ij,h},p_{ij,h}\rangle_M$.  Introduce the observed
grid-to-grid drifts
\begin{equation}
  d_q=q_H-q_h,
  \qquad d_i=y_{i,H}-y_{i,h},
  \qquad d_j=y_{j,H}-y_{j,h},
  \label{eq:s-cross-grid-drifts}
\end{equation}
and set
\begin{equation}
  \delta c_{ij}=d_q-\frac{d_i+d_j}{2},
  \qquad
  \delta p_{ij}=d_i-d_j.
  \label{eq:s-cross-grid-centred-drifts}
\end{equation}
The term linear in these drifts and the remaining bilinear term are
\begin{align}
  \widehat G_{ij,H}^{(1)}
  &=G_{ij,h}
    +2\langle c_{ij,h},\delta p_{ij}\rangle_M
    +2\langle\delta c_{ij},p_{ij,h}\rangle_M,
  \label{eq:s-cross-grid-first-gap}\\
  R_{ij}^{(2)}
  &=2\langle\delta c_{ij},\delta p_{ij}\rangle_M.
  \label{eq:s-cross-grid-remainder}
\end{align}
Since the fine-grid centred factors are $c_{ij,h}+\delta c_{ij}$ and
$p_{ij,h}+\delta p_{ij}$, direct expansion gives the exact identity
\begin{equation}
  G_{ij,H}=\widehat G_{ij,H}^{(1)}+R_{ij}^{(2)}.
  \label{eq:s-cross-grid-exact-identity}
\end{equation}
Here ``first order'' and ``second order'' refer to algebraic degree in the
observed drifts; Eq.~\eqref{eq:s-cross-grid-exact-identity} makes no small-drift
approximation.

For completeness, let $\eta_{\rm num}\geq0$ denote the numerical
closure allowance and put
\begin{equation}
  I_{ij,H}=
  \bigl[\widehat G_{ij,H}^{(1)}-|R_{ij}^{(2)}|-\eta_{\rm num},
        \widehat G_{ij,H}^{(1)}+|R_{ij}^{(2)}|+\eta_{\rm num}\bigr].
  \label{eq:s-cross-grid-interval}
\end{equation}
The strict pair screen passes when $0\notin I_{ij,H}$, equivalently
$|\widehat G_{ij,H}^{(1)}|>|R_{ij}^{(2)}|+\eta_{\rm num}$; its interval side
then fixes the sign of $G_{ij,H}$.  With a pair tolerance $\tau_{ij}\geq0$,
the screen returns the positive, negative or tied label only when
$I_{ij,H}$ lies wholly in $(\tau_{ij},\infty)$,
$(-\infty,-\tau_{ij})$ or $[-\tau_{ij},\tau_{ij}]$, respectively.  A strict
top-1 decision passes when one first-order winner has a strict certified pair
interval on its side against every competitor.  All remaining pair and top-1
decisions abstain.  The certified object is therefore the corresponding
fine-grid diagnostic decision $G_{ij,H}$.

The first population comprised eight two-PDE libraries on 480 inputs; the second
reused the same inputs and candidates after checkpoint re-inference at both grids. These
populations each contain 53,760 pairs.  The exact identity closes on every row, with maximum
errors $9.1593\times10^{-16}$ and $9.0206\times10^{-16}$.  The candidate-only
expression that omits shared-diagnostic drift fails on 5,729 and 5,819 rows,
respectively.  Pooled pair-screen coverages are 65.4464\% and 65.0800\%;
the architecture-resolved values are reported in
Supplementary Table~\ref{tab:s-certificate} and summarized graphically in
Supplementary Fig.~\ref{fig:s-reliability-screen}.  Exact
interval separation determines every covered diagnostic decision, so coverage is
the empirical quantity of interest.

% --- Begin inlined file: tables/table_reliability.tex ---
\begin{table}[H]
\centering
\caption{Exact cross-resolution diagnostic stability. Coverage is the fraction of
fine-grid diagnostic decisions passing the exact stability screen; all uncovered
decisions abstain.  Agreement of every covered decision follows algebraically and
does not measure hidden-reference ranking.}
\label{tab:s-certificate}
\small
\begin{tabular}{@{}l l r r r r@{}}
\toprule
Population & Group & Pairs & Pair coverage & Top-1 coverage & Max closure error \\
\midrule
Earlier & FNO & 26,880 & 98.1734\% & 93.8542\% & $4.72\times10^{-16}$ \\
Earlier & CNO & 26,880 & 32.7195\% & 6.0417\% & $9.16\times10^{-16}$ \\
Checkpoint re-inference & FNO & 26,880 & 98.0394\% & 93.4375\% & $9.02\times10^{-16}$ \\
Checkpoint re-inference & CNO & 26,880 & 32.1205\% & 5.6250\% & $9.02\times10^{-16}$ \\
\bottomrule
\end{tabular}
\end{table}
% --- End inlined file: tables/table_reliability.tex ---

\begin{figure}[H]
  \centering
  \includegraphics[width=0.72\textwidth]{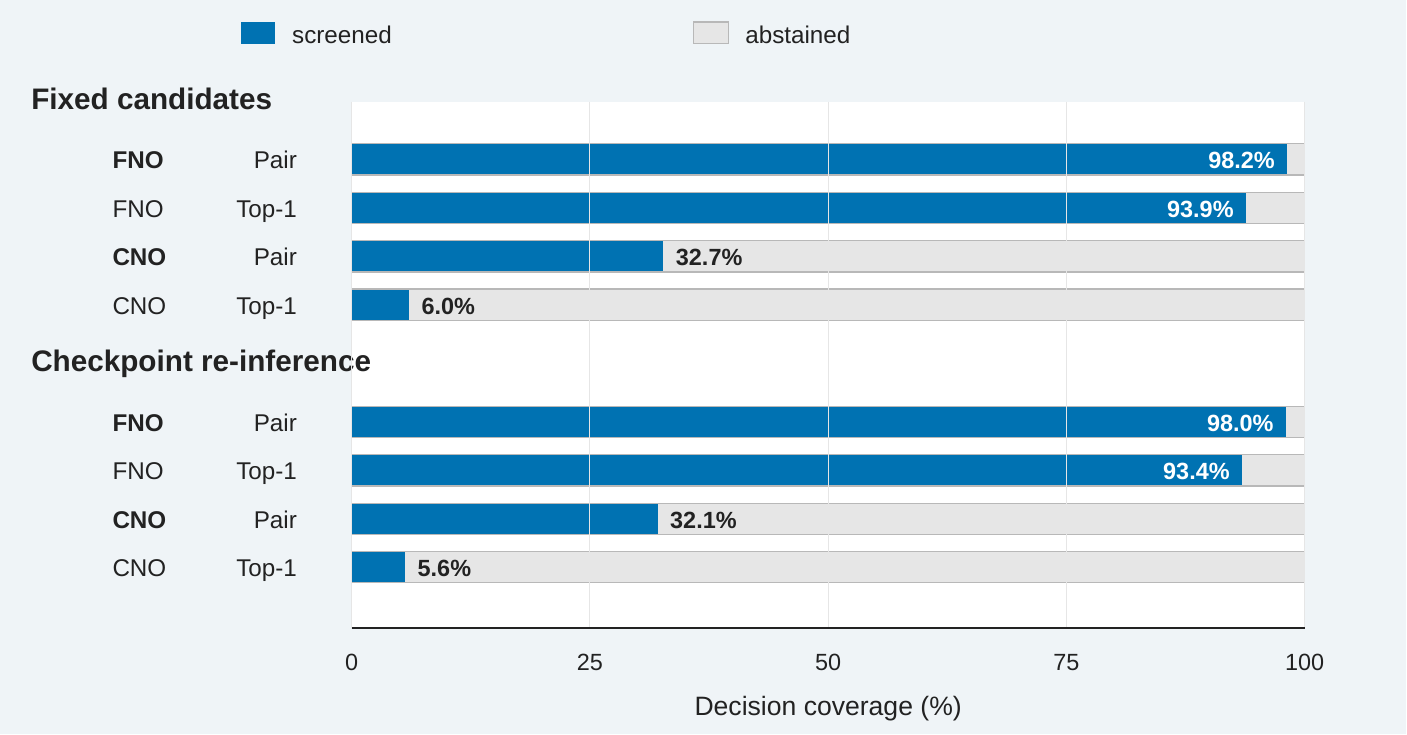}
  \caption{Coverage of the exact cross-resolution rank-or-abstain stability
  screen in two distinct cross-resolution populations.  Each contains 26,880 pair
  decisions and 960 top-1 cases per architecture.  Blue decisions pass the
  screen and grey decisions abstain.  Coverage is high for FNO and low for CNO,
  especially for top-1. The screen measures diagnostic stability between two
  computed grids.}
  \label{fig:s-reliability-screen}
\end{figure}

\subsection{Locality and fragility}

An exploratory first-order gap-change predictor, formed from the three terms that
are linear in the shared and candidate drifts, was compared with a margin-only flip
predictor on the checkpoint-re-inference population.  Pooled FNO AUROC improved
by 0.0662, with 95\% CI $[0.0548,0.0781]$, and AUPRC increased from 0.3639 to
0.5060.  For CNO, AUROC changed by $-0.0342$ (95\% CI
$[-0.0401,-0.0283]$) and AUPRC fell from 0.5724 to 0.5455.  All four FNO cell
increment intervals were positive, while both reaction--diffusion CNO intervals
were negative.  Across eight cells, the Spearman association between centered
locality ratio and AUROC increment was $-0.9524$, with locality confounded by
architecture.

Median centered locality ratios ranged from 0.010--0.200 for FNO, 0.995--0.998 for
Burgers CNO, and 3.31--4.46 for reaction--diffusion CNO.  Error-relative medians
ranged from 0.075--0.326, 1.44--1.53, and 20.8--22.8, respectively.  The opposite
FNO and CNO trends preclude a common empirical threshold in these data.  The exact
pair-specific sign control remains
$|R_{ij}^{(2)}|/|\widehat G_{ij,H}^{(1)}|$.

\subsection{Other auxiliary analyses}

Principal angles between the candidate-difference subspace and its cross-grid
drift subspace did not by themselves explain pair flips.  An SVD of the same
task-difference coordinates placed 95.5\% of energy in low modes, but ordinary
and near-tie cases split across subsequent modes, so this did not establish an
effective rank.  A conservative envelope obtained from pairwise task-adjoint
residual estimates had a median envelope-to-gap ratio of approximately
$1.98\times10^5$, far too loose for useful decisions.  A sequential pilot that
added candidate-comparison directions until a decision stabilized retained
accuracy only after computing every expensive probe and the full solve, so it
produced no cost reduction.  These auxiliary results motivate direct pair-specific
intervals rather than a universal locality, rank, or adjoint surrogate.
% --- End inlined file: supplement_sections/s6_cross_resolution.tex ---
% --- Begin inlined file: supplement_sections/s7_rd_certificate.tex ---
\section{Operator-aligned selective certification for cubic reaction--diffusion}
\label{sec:s-rd-certificate}

This section gives the complete exact-arithmetic theorem for the strongly monotone
reaction--diffusion discretization and evaluates its decision margins
numerically. The computation includes algebraic solve-defect inflation but does
not enclose floating-point roundoff.

\subsection{Typed discrete setting}

Let a uniform grid of spacing $h=L/(m+1)$ cover $(0,L)^2$.  The state space
$V_h=\mathbb R^{m\times m}$ contains only the interior degrees of freedom; each
$v\in V_h$ is extended by zero to the boundary.  Define
\begin{equation}
  \langle v,z\rangle_h=h^2\sum_{k\in\mathcal I_h}v_kz_k,
  \qquad \norm{v}_h=\langle v,v\rangle_h^{1/2}.
  \label{eq:s-rd-inner-product}
\end{equation}
Let $K_h:V_h\to V_h$ be the conservative variable-coefficient Dirichlet
diffusion operator assembled with positive harmonic face coefficients.  Its
edge-sum representation gives
\begin{equation}
  \langle K_hv,z\rangle_h=\langle v,K_hz\rangle_h,
  \qquad \langle K_hv,v\rangle_h>0\quad(v\ne0).
  \label{eq:s-rd-diffusion-spd}
\end{equation}
For $\lambda>0$, $\gamma>0$, and interior forcing $f_h\in V_h$, set
\begin{equation}
  A_h(v)=K_hv+\lambda v+\gamma v^{\odot3}-f_h.
  \label{eq:s-rd-operator}
\end{equation}
The implementation also stores a complete residual with boundary rows.  The
results below use only its interior restriction after verifying exact zero
Dirichlet values.  They do not apply to the nonsymmetric complete matrix that
contains identity boundary rows and interior-to-boundary couplings.

Let $(Y,\langle\cdot,\cdot\rangle_Y)$ be a finite-dimensional real Hilbert task
space and let $Q_h:V_h\to Y$ be linear.  Its weighted adjoint is defined by
\begin{equation}
  \langle Q_hv,z\rangle_Y=\langle v,Q_h^*z\rangle_h
  \quad\text{for every }v\in V_h,\ z\in Y.
  \label{eq:s-rd-weighted-adjoint}
\end{equation}
Thus $Q_h^*$ need not be an unweighted Euclidean transpose. In the computation,
$Y=V_h$ with the same $h^2$-weighted inner product and $Q_h=Q_h^*=I$.

\subsection{Global operator-aligned enclosure}

\begin{lemma}[Cubic secant lower bound]
\label{lem:s-rd-cubic}
For every $a,b\in\mathbb R$,
\begin{equation}
  (a^3-b^3)(a-b)\ge\frac34a^2(a-b)^2.
  \label{eq:s-rd-cubic}
\end{equation}
The constant $3/4$ is sharp.
\end{lemma}

\begin{proof}
If $a=b$, the statement is immediate.  Otherwise divide by $(a-b)^2$ and use
\[
  a^2+ab+b^2=\left(b+\frac a2\right)^2+\frac34a^2.
\]
Equality occurs at $b=-a/2$.
\end{proof}

\begin{theorem}[Global residual enclosure]
\label{thm:s-rd-enclosure}
The equation $A_h(u_h)=0$ has a unique solution $u_h\in V_h$.  For any
$s\in V_h$, define
\begin{equation}
  M_s=K_h+\lambda I+\frac{3\gamma}{4}
  \operatorname{diag}(s^{\odot2}).
  \label{eq:s-rd-metric}
\end{equation}
Then $M_s$ is symmetric positive definite and
\begin{equation}
  \norm{s-u_h}_{M_s}
  \le \norm{A_h(s)}_{M_s^{-1}},
  \label{eq:s-rd-state-bound}
\end{equation}
where
$\norm{v}_{M_s}^2=\langle M_sv,v\rangle_h$ and
$\norm{r}_{M_s^{-1}}^2=\langle r,M_s^{-1}r\rangle_h$.
\end{theorem}

\begin{proof}
The map $A_h$ is the gradient, in the $h^2$-weighted inner product, of the
coercive strictly convex finite-dimensional functional
\[
  \mathcal E(v)=\frac12\langle K_hv,v\rangle_h
  +\frac\lambda2\langle v,v\rangle_h
  +\frac\gamma4h^2\sum_kv_k^4-\langle f_h,v\rangle_h.
\]
It therefore has exactly one zero.  Let $e=s-u_h$.  Symmetry and positivity of
$K_h$, together with Lemma~\ref{lem:s-rd-cubic} applied componentwise, yield
\[
  \langle A_h(s)-A_h(u_h),e\rangle_h
  \ge \langle M_se,e\rangle_h=\norm{e}_{M_s}^2.
\]
Because $A_h(u_h)=0$, Cauchy--Schwarz in the dual pair generated by $M_s$ gives
\[
  \norm{e}_{M_s}^2
  \le\langle A_h(s),e\rangle_h
  \le\norm{A_h(s)}_{M_s^{-1}}\norm{e}_{M_s}.
\]
The result follows after treating $e=0$ separately and otherwise dividing by
$\norm{e}_{M_s}$.
\end{proof}

If $a_{\min}>0$ bounds all diffusion face coefficients from below, then
\begin{equation}
 \lambda_{\min}(M_s)\ge
 \mu_s^{\mathrm{analytic}}
 :=\lambda+a_{\min}\frac{8}{h^2}
 \sin^2\!\left(\frac{\pi}{2(m+1)}\right)
 +\frac{3\gamma}{4}\min_ks_k^2>0.
 \label{eq:s-rd-coercivity}
\end{equation}
The eigenvalues here refer to the coordinate matrix; the scalar $h^2$ in the
inner product does not alter them.

Indeed, the conservative edge-sum representation and the face bound
$a_f\geq a_{\min}$ give, for every $v\in V_h$,
\begin{equation}
  \langle K_hv,v\rangle_h
  \geq a_{\min}\langle-\Delta_hv,v\rangle_h.
\end{equation}
The two-dimensional five-point zero-Dirichlet Laplacian has eigenvalues
\begin{equation}
  \frac{4}{h^2}\left[
  \sin^2\!\left(\frac{p\pi}{2(m+1)}\right)
  +\sin^2\!\left(\frac{q\pi}{2(m+1)}\right)
  \right],
  \qquad 1\leq p,q\leq m.
\end{equation}
Its smallest eigenvalue is the $p=q=1$ value in
Eq.~\eqref{eq:s-rd-coercivity}.  Adding the Rayleigh lower bounds for
$\lambda I$ and
$(3\gamma/4)\operatorname{diag}(s^{\odot2})$ proves the displayed analytic
coercivity constant.

\subsection{Pairwise and top-1 decisions}

Fix task outputs $y_1,\ldots,y_N\in Y$ and define
$R_i(z)=\norm{z-y_i}_Y^2$ and
$G_{ij}(z)=R_j(z)-R_i(z)$.  Let $d_{ij}=y_i-y_j$,
$t=Q_hu_h$, and $q=Q_hs$.

\begin{theorem}[Operator-aligned pair bound]
\label{thm:s-rd-pair}
For every candidate pair,
\begin{align}
 |G_{ij}(t)-G_{ij}(q)|
 &=2\left|\langle u_h-s,Q_h^*d_{ij}\rangle_h\right|,
 \label{eq:s-rd-pair-identity}\\
 &\le
 2\norm{A_h(s)}_{M_s^{-1}}
   \norm{Q_h^*d_{ij}}_{M_s^{-1}}
 =:B_{ij}.
 \label{eq:s-rd-pair-bound}
\end{align}
Consequently, if $|G_{ij}(q)|>B_{ij}$, then $G_{ij}(t)\neq 0$ and has the
same sign as $G_{ij}(q)$.
\end{theorem}

\begin{proof}
Expanding the two squared-distance differences gives
\[
 G_{ij}(t)-G_{ij}(q)
 =2\langle t-q,d_{ij}\rangle_Y
 =2\langle u_h-s,Q_h^*d_{ij}\rangle_h.
\]
Apply Cauchy--Schwarz in the $M_s$ metric and then
Theorem~\ref{thm:s-rd-enclosure}.
\end{proof}

Let $k$ be the lowest-index proxy minimizer and let $\tau\ge0$.

\begin{corollary}[Strict and tolerant top-1 certificates]
\label{cor:s-rd-top1}
Candidate $k$ is the unique exact same-grid discrete-solution minimizer if
\begin{equation}
  R_j(q)-R_k(q)>B_{kj}\qquad\text{for every }j\ne k.
  \label{eq:s-rd-strict-top1}
\end{equation}
It is guaranteed to lie within $\tau$ of the best exact same-grid
discrete-solution loss if
\begin{equation}
  R_k(q)-R_j(q)+B_{kj}\le\tau
  \qquad\text{for every }j\ne k.
  \label{eq:s-rd-tolerant-top1}
\end{equation}
Failure of either condition returns abstention.
\end{corollary}

\begin{proof}
For strict top-1, Theorem~\ref{thm:s-rd-pair} implies
$R_j(t)-R_k(t)\ge R_j(q)-R_k(q)-B_{kj}>0$ for every competitor.  For the
tolerant statement, it gives
$R_k(t)-R_j(t)\le R_k(q)-R_j(q)+B_{kj}\le\tau$ for every $j$, including a
exact same-grid discrete-solution minimizer.
\end{proof}

To evaluate all pair directions, define
\begin{equation}
  \Dcal=\operatorname{span}\{y_i-y_1:2\leq i\leq N\},
  \qquad
  \mathcal B_h=Q_h^*\Dcal,
  \qquad
  r=\dim\mathcal B_h\leq N-1.
\end{equation}
Choose a basis $b_1,\ldots,b_r$ of $\mathcal B_h$ and define
\begin{equation}
  \Gamma_{k\ell}=\langle b_k,M_s^{-1}b_\ell\rangle_h.
  \label{eq:s-rd-gram}
\end{equation}
For each pair, let $c_{ij}\in\Rnum^r$ be the unique coefficient vector with
$Q_h^*d_{ij}=\sum_{\ell=1}^r(c_{ij})_\ell b_\ell$. Then
\begin{equation}
  \norm{Q_h^*d_{ij}}_{M_s^{-1}}^2
  =c_{ij}^{\mathsf T}\Gamma c_{ij}.
  \label{eq:s-rd-gram-pair}
\end{equation}
Thus one residual inverse action and $r$ comparison-basis inverse actions with
the common matrix $M_s$ evaluate all $N(N-1)/2$ bounds. No candidate-specific
correction is required, but the certificate cost is comparison-rank dependent
rather than library-size independent.

\subsection{Transfer from operator truth to dataset truth}

The preceding results certify $t=Q_hu_h$, the exact zero of the named same-grid
operator. Dataset truth requires an additional discrepancy enclosure.
Throughout this subsection, the task metric is the $Y$ metric:
$\metricip{x}{y}=\langle x,y\rangle_Y$, and $P_{\Dcal}^{M}$ is the
corresponding orthogonal projection.

\begin{proposition}[Projected operator--dataset discrepancy]
\label{prop:s-rd-dataset-discrepancy}
Suppose $t_{\rm data}=t+\xi$ and a compact set
$\mathcal U_{\rm disc}\subset\Dcal$ satisfies
$P_{\Dcal}^{M}\xi\in\mathcal U_{\rm disc}$. With
$\sigma_{\mathcal U}(d)=\sup_{z\in\mathcal U}\metricip{z}{d}$,
\begin{align}
  G_{ij}(t_{\rm data})\in\bigl[&G_{ij}(q)-B_{ij}
  -2\sigma_{\mathcal U_{\rm disc}}(-d_{ij}),\notag\\
  &G_{ij}(q)+B_{ij}
  +2\sigma_{\mathcal U_{\rm disc}}(d_{ij})\bigr].
  \label{eq:s-rd-dataset-discrepancy}
\end{align}
If only $\metricnorm{P_{\Dcal}^{M}\xi}\leq\Delta_{\Dcal}$ is known, it
suffices to add $2\Delta_{\Dcal}\metricnorm{d_{ij}}$ to $B_{ij}$.
Discrepancy in $\Dcal^{\perp_M}$ does not change a squared-loss decision.
\end{proposition}

\begin{proof}
The exact gap identity gives
$G_{ij}(t_{\rm data})=G_{ij}(t)+2\metricip{\xi}{d_{ij}}$. Theorem
\ref{thm:s-rd-pair} encloses the first term, while the minimum and maximum of
the second over $\mathcal U_{\rm disc}$ are
$-2\sigma_{\mathcal U_{\rm disc}}(-d_{ij})$ and
$2\sigma_{\mathcal U_{\rm disc}}(d_{ij})$. The norm-ball statement follows
from Cauchy--Schwarz.
\end{proof}

No reference-free $\mathcal U_{\rm disc}$ or $\Delta_{\Dcal}$ was available
for PDEBench or the exploratory flow datasets. The proposition formalizes the
extra information required for transfer; it does not provide that information.

\subsection{Floating-point solve-defect control}

\begin{lemma}[Residual-inflated inverse norm]
\label{lem:s-rd-solve-defect}
Suppose $M:V_h\to V_h$ is self-adjoint and positive definite in
$\langle\cdot,\cdot\rangle_h$, and $\lambda_{\min}(M)\ge\mu>0$.  For any
approximate solution $\widehat z$ of $Mz=b$, with algebraic defect
$\rho=b-M\widehat z$,
\begin{equation}
  \norm{b}_{M^{-1}}
  \le \norm{\widehat z}_M+\frac{\norm{\rho}_h}{\sqrt\mu}.
  \label{eq:s-rd-solve-defect}
\end{equation}
\end{lemma}

\begin{proof}
Write $z=\widehat z+M^{-1}\rho$.  The triangle inequality in the $M$ norm and
$\norm{\rho}_{M^{-1}}\le\norm{\rho}_h/\sqrt\mu$ prove the claim.
\end{proof}

Equation~\eqref{eq:s-rd-solve-defect} was applied to the residual right-hand side
and every comparison-direction combination.  The implementation then added a
small multiple of machine epsilon to the float64 quantities. This does not bound
roundoff; a strict floating-point certificate would require outward rounding or
another validated arithmetic implementation.

\subsection{Numerical certificate margins and same-grid comparison}

The calculation used four reaction--diffusion cells: FNO and CNO, each with
ensembles e1 and e2. Each cell contained eight candidates, and the four cells
reused the same six inputs, giving 24 library--input cases.
The identity task uses full-field squared $h^2$-weighted loss; exact zero boundary
values make this identical to the published tensor-trapezoid loss.

\begin{table}[H]
\centering
\caption{Float64 evaluation of the reaction--diffusion sufficient conditions.
A failed sufficient condition returns abstention.}
\label{tab:s-rd-pretruth}
\small
\begin{tabular}{@{}lr@{}}
\toprule
Quantity & Result \\
\midrule
Library--input cases & 24 \\
Candidate pairs & 672 \\
Pair inequalities satisfied & 672/672 \\
Strict top-1 conditions satisfied & 24/24 \\
Tolerant top-1 conditions satisfied & 24/24 \\
Largest bound/proxy-margin ratio & 0.09494 \\
Median bound/proxy-margin ratio & $3.45\times10^{-4}$ \\
Smallest decision slack & $1.41\times10^{-8}$ \\
\bottomrule
\end{tabular}
\end{table}

For comparison, we computed a high-accuracy numerical approximation to the zero
of the same $128\times128$ interior-grid operator for each input. Its residual ranged from
$6.27\times10^{-14}$ to
$2.26\times10^{-12}$.  This solver uncertainty was recorded but not propagated
through the following float64 comparisons.

\begin{table}[H]
\centering
\caption{Comparison with a high-accuracy numerical approximation to the
same-grid operator zero.}
\label{tab:s-rd-evaluation}
\small
\begin{tabular}{@{}lr@{}}
\toprule
Comparison & Result \\
\midrule
State bounds exceeding approximate-zero difference & 24/24 \\
State bound/observed-error ratio & 1.000018--1.000211 \\
Pair bounds exceeding approximate-zero drift & 672/672 \\
Disagreements among selected pair signs & 0/672 \\
Disagreements among strict top-1 selections & 0/24 \\
Disagreements among tolerant top-1 selections & 0/24 \\
\bottomrule
\end{tabular}
\end{table}

The theorem targets the exact discrete zero, whereas the comparison above uses a
numerical approximation whose uncertainty is not propagated. The projected
fine-grid reference is also not an exact zero of the 128-grid operator:
its operator defect was $6.28\times10^{-4}$--$2.11\times10^{-3}$, while its
discrepancy from the numerical same-grid zero was
$4.13\times10^{-6}$--$9.97\times10^{-6}$ in discrete $L^2$. Transfer from the
discrete operator zero to a dataset or another discretization therefore requires
the discrepancy bounds derived above.
% --- End inlined file: supplement_sections/s7_rd_certificate.tex ---
% --- Begin inlined file: supplement_sections/s8_multimetric_mechanism.tex ---
\section{Multi-objective transfer and proxy mechanism}
\label{sec:s-multimetric}

This section examines multi-objective rescoring and the partial-correction path
using PDEBench inputs 0901--0999, the three released U-Net checkpoints and all 91
predicted future frames. The same candidate predictions, common states and
corrections are reused throughout, so no additional linearized solve is required.

\subsection{One corrected trajectory, multiple deployment objectives}

For each objective $d_k$, metric-matched rescoring uses
\begin{equation}
  S_{i,k}^{\rm matched}=d_k\!\left(v_i,s\right),
  \qquad s=w+\delta_w,
\end{equation}
whereas fixed-score transfer retains the original terminal squared-$L^2$ score.
The same $s$ is used throughout, so changing $k$ requires candidate
rescoring but no additional physics solve.  The two squared-$L^2$ objectives are
exact squared-Hilbert tasks.  Roots, maxima, normalization and the boundary or
spectral reductions fall outside that exact identity and are empirical transfer
tests.

For reproducibility, let $N_t=91$ and
$e_{t,y,x,c}=v_{i,t,y,x,c}-z_{t,y,x,c}$ for the future frames, two channels and
an $N_y\times N_x$ uniform grid, where $z$ is
either the released reference or the proxy used for scoring.  With cell area
$h_xh_y$, the terminal and trajectory squared-$L^2$ metrics are
\begin{align}
 d_{\rm term}&=h_xh_y\sum_{y,x,c}e_{T,y,x,c}^2,\\
 d_{\rm traj}&=\frac{h_xh_y}{N_t}\sum_{t,y,x,c}e_{t,y,x,c}^2.
\end{align}
Define
\begin{equation}
 R_{t,c}=\left(\frac{1}{N_xN_y}\sum_{x,y}e_{t,y,x,c}^2\right)^{1/2},
 \qquad
 Z_{t,c}=\left(\frac{1}{N_xN_y}\sum_{x,y}z_{t,y,x,c}^2\right)^{1/2}.
\end{equation}
The reported RMSE and normalized plug-in are
$\operatorname{mean}_{t,c}R_{t,c}$ and
$\operatorname{mean}_{t,c}(R_{t,c}/Z_{t,c})$.  Thus $Z$ is formed from truth
for hidden loss and from $s$ for proxy scoring.  Conservation and
maximum errors are
\begin{equation}
 \operatorname{mean}_{t,c}\left|\operatorname{mean}_{x,y}e_{t,y,x,c}\right|,
 \qquad
 \operatorname{mean}_{t,c}\max_{x,y}|e_{t,y,x,c}|.
\end{equation}
Boundary RMSE first averages $e^2$ over the top, bottom, left and right grid
rows, with corners counted twice and denominator $2N_x+2N_y$, then takes a root
and averages over $(t,c)$.

For the spectral metrics, an unnormalized two-dimensional discrete Fourier
transform is applied over $(y,x)$.  The executed implementation retains indices
$0\leq k_y<N_y/2$ and $0\leq k_x<N_x/2$, assigns each coefficient to
$r=\lfloor(k_y^2+k_x^2)^{1/2}\rfloor$, and keeps
$r<K=\min(N_y/2,N_x/2)$.  For each shell it computes
\begin{equation}
 R^{\rm F}_{t,r,c}=\frac{L_xL_y}{N_xN_y}
 \left(\sum_{\lfloor|k|\rfloor=r}|\widehat e_{t,k,c}|^2\right)^{1/2},
\end{equation}
then averages over $(t,c,r)$ within the bands $[0,4)$, $[4,12)$ and
$[12,K)$ for the low-, mid- and high-band restricted-quadrant spectral scores.
These are the exact executed reductions. They omit the mixed-sign quadrants and
are therefore not a full radial Fourier norm; no value-level parity with every
version of the official PDEBench metric implementation is claimed.

% --- Begin inlined file: tables/table_pdebench_metric_transfer.tex ---
\begin{table}[H]
\centering
\caption{Secondary task transfer on the fixed public PDEBench library. Pair and top-1 counts use three candidates and 99 inputs. ``Fixed'' reuses the original terminal squared-$L^2$ score; ``matched'' reuses the same corrected trajectory but changes the candidate-to-proxy distance. The ten metrics are correlated objectives, not independent replications.}
\label{tab:s-pdebench-multimetric}
\footnotesize
\begin{tabularx}{\textwidth}{@{}p{0.21\textwidth}Xcc@{}}
\toprule
Metric & Theory scope & Fixed pair / top-1 & Matched pair / top-1 / order \\
\midrule
Terminal squared $L^2$ & Exact squared-Hilbert & 297/297; 99/99 & 297/297; 99/99; 99/99 \\
Trajectory squared $L^2$ & Exact squared-Hilbert & 297/297; 99/99 & 297/297; 99/99; 99/99 \\
RMSE & Nonlinear $L^2$ reduction & 297/297; 99/99 & 297/297; 99/99; 99/99 \\
Normalized RMSE plug-in & Truth-dependent plug-in & 296/297; 99/99 & 297/297; 99/99; 99/99 \\
Conservation error & Linear summary; nonlinear reduction & 297/297; 99/99 & 297/297; 99/99; 99/99 \\
Maximum error & $L^\infty$; outside theorem & 297/297; 99/99 & 293/297; 99/99; 95/99 \\
Boundary RMSE & Linear restriction; nonlinear reduction & 200/297; 99/99 & 296/297; 99/99; 98/99 \\
Low-band restricted spectral score & Fixed restricted-quadrant projection; nonlinear reduction & 297/297; 99/99 & 297/297; 99/99; 99/99 \\
Mid-band restricted spectral score & Fixed restricted-quadrant projection; nonlinear reduction & 289/297; 99/99 & 287/297; 99/99; 89/99 \\
High-band restricted spectral score & Fixed restricted-quadrant projection; nonlinear reduction & 234/297; 69/99 & 280/297; 88/99; 83/99 \\
\midrule
Total & Correlated ten-metric panel & 2,801/2,970; 960/990 & 2,938/2,970; 979/990; 959/990 \\
\bottomrule
\end{tabularx}
\end{table}
% --- End inlined file: tables/table_pdebench_metric_transfer.tex ---

The high-band restricted-quadrant objective supplies the clearest
input-adaptive stress. Its
truth winners were autoregressive on 69 inputs, push-forward-20 on 23 and
one-step on seven, spanning five complete candidate orders.  Metric-matched
rescoring selected the true winner on $88/99$ inputs, compared with $69/99$ for
the fixed terminal score, $23/99$ for objective-matched common-state rescoring
and $7/99$ for raw defect energy.  Across all ten correlated objectives, matched
rescoring gave $2{,}938/2{,}970$ correct pairs, $979/990$ top-1 decisions and
$959/990$ complete orders.  The corrected trajectory itself had no larger
truth error than the best public candidate in all $990$ metric--input cells.
These counts describe one fixed public library; they do not establish ten
independent replications or population-wide dominance of the correction.

The normalized-RMSE row is an explicitly named plug-in stress.  Candidate truth
errors use the truth RMS denominator, while candidate-to-proxy scores use the
proxy RMS denominator.  It is therefore not one fixed truth-independent metric
on both sides, so the Hilbert identity does not apply.

\subsection{Partial correction separates proxy and comparison mechanisms}

The partial-correction analysis evaluates
\begin{equation}
  s_\alpha=w+\alpha\delta_w,
  \qquad
  \alpha\in\{0,0.25,0.5,0.75,1,1.25\},
\end{equation}
using the same $\delta_w$ at every $\alpha$.  Because the executed task maps are
the identity or a fixed terminal trace,
\begin{equation}
 q_{{\rm lin},\alpha}=Q(w)+\alpha DQ(w)\delta_w
 =q_{{\rm eval},\alpha}=Q(s_\alpha)
\end{equation}
algebraically.  This equality is not an independent numerical validation.
Figure~\ref{fig:s-alpha-path} separates the full state, task and
candidate-difference errors from ranking accuracy.

\begin{figure}[H]
  \centering
  \includegraphics[width=0.82\textwidth]{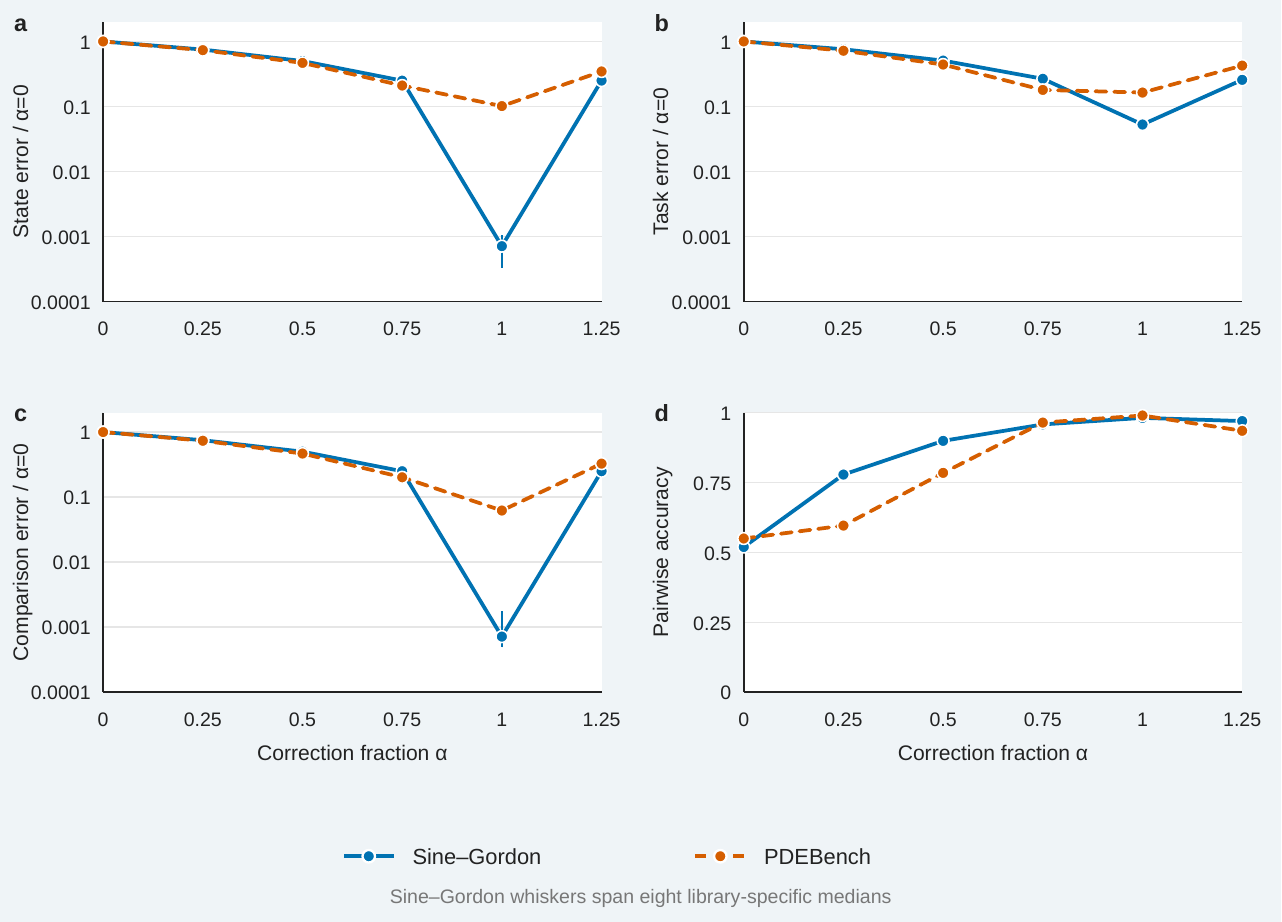}
  \caption{Partial-correction path. Panels a--c normalize each
  library's median error by its value at $\alpha=0$; Sine--Gordon points are the
  medians of eight library-specific medians and whiskers span their range, while
  PDEBench contains one public library.  Comparison error is the orthogonal
  projection of truth--proxy error onto the trajectory candidate-difference
  subspace.  Panel d aggregates exact stored pair counts: one terminal
  squared-$L^2$ task across eight Sine--Gordon libraries (53,760 pairs) and ten
  correlated objectives in PDEBench (2,970 pairs). The path reuses one
  correction.}
  \label{fig:s-alpha-path}
\end{figure}

\begin{figure}[H]
  \centering
  \includegraphics[width=0.92\textwidth]{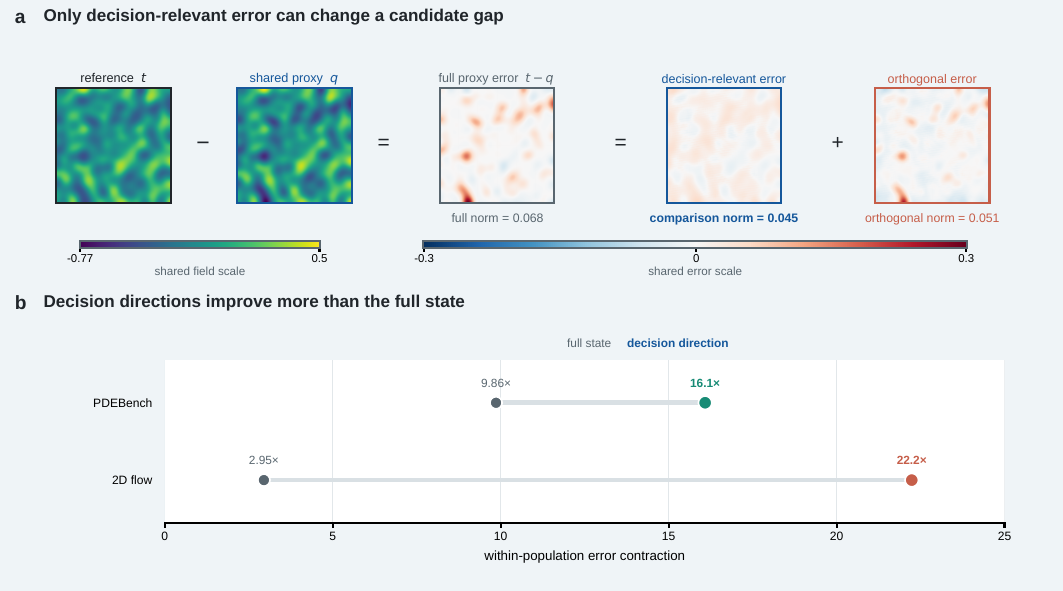}
  \caption{Comparison and reconstruction regimes. \textbf{a},
  Deterministic PDEBench terminal sample 0950 shows the reference, shared proxy
  and exact decomposition of their error. Only
  $e_D=P_{\Dcal}^{M}(t-q)$ changes fixed squared-loss gaps along candidate
  directions $d_{ij}=y_i-y_j$; maps show field 0, while the projection and norms
  use both task fields. \textbf{b}, Paired within-population contraction factors
  show preferential improvement in the decision subspace for PDEBench
  (9.86-fold full-state versus 16.1-fold decision-direction contraction) and for
  the two-model flow study (2.95-fold versus 22.2-fold). In the
  operator-mismatched flow population, comparison was correct in 10/10 cases
  whereas direct use improved on the best candidate in 4/10. PDEBench uses one
  public library and 99 inputs; flow factors are medians of ten casewise ratios.
  Absolute error scales are not compared across PDEs.}
  \label{fig:s-comparison-geometry}
\end{figure}

For Sine--Gordon, the full-trajectory quantities in panels a and c use a
48-point, 48-step trajectory reference on the candidate discretization. The
terminal task and panel-d decisions instead lift candidate and proxy terminal
states to the 192-point terminal reference and use
ordinary strict comparisons.  At $\alpha=1$ this produces
$52{,}728/53{,}760$ correct pairs and $1{,}833/1{,}920$ exact top-1 decisions.
Against that regenerated full-trajectory truth, the corrected proxy had strictly
lower trajectory error than the oracle-best candidate in all $1{,}920$
input--library cases.
These differ from the conservative grid-refinement result of
$52{,}942/53{,}760$ and $1{,}835/1{,}920$ because the path uses the regenerated
48-point trajectory truth and ordinary lifted terminal comparisons, whereas the
grid-refinement result uses a 96-to-192 rule that also marks
numerically indeterminate outcomes as errors.  Neither numerator is expected to
bound the other, and neither set substitutes for the other.

In pooled case-level summaries at $\alpha=1$, median full-state error contracted
by factors of 1,835.7 in Sine--Gordon and 9.86 in PDEBench relative to the
common state.  Panels a--c instead use a library-balanced median of
within-library ratios, so the Sine--Gordon plotted ordinate is not the reciprocal
of the pooled 1,835.7 factor.  Aggregate
pairwise accuracy rose from $51.9\%$ to $98.1\%$ and from $54.9\%$ to $98.9\%$,
respectively.  Overshoot to $\alpha=1.25$ reduced those accuracies to $96.9\%$
and $93.5\%$.  The common candidate library and hidden truth margins are fixed
along this path, so margin size alone cannot explain the improvement.

The two systems nevertheless support different mechanism balances.  In
PDEBench, trajectory error along the candidate-difference subspace contracted by
16.1, compared with 9.86 for the full trajectory, providing additional
comparison-conditioned accuracy.  In Sine--Gordon, the comparison-to-full error
fraction did not decrease for the full trajectory, so its success is primarily a
full-proxy effect.  Exact pair-gap drift identities closed to
$2.11\times10^{-14}$ in the squared-Hilbert tasks.  At $\alpha=1$, each of the
eight Sine--Gordon libraries had a lower median truth margin among flipped pairs
than among correct comparisons, linking the remaining errors to small decision
margins.

The PDEBench contractions use the released trajectory data. The nonlinear
reductions above are metric-specific and fall outside the exact squared-Hilbert
identity.
% --- End inlined file: supplement_sections/s8_multimetric_mechanism.tex ---
% --- Begin inlined file: supplement_sections/s9_variational_residual_baseline.tex ---
\section{Candidate-wise variational residual baselines}
\label{sec:s-rd-variational}

These controls ask whether the weakness of raw residual ranking is primarily a
norm-selection artifact. They use 240 inputs, four
eight-candidate libraries, and two state scenarios (transferred fixed candidates
and checkpoint re-inference).  The resulting 1,920 cases contain 15,360
candidate rows and 53,760 unordered pairs; 42,761 pairs are hidden-reference
non-ties under the stated absolute tolerance. No case or candidate was filtered.

For a candidate state $v_i$, let $A_h(v_i)$ be the discrete residual and define
$H_h=K_h+\lambda I$.  The raw score is
\begin{equation}
  R_{\mathrm{raw},i}^2=\norm{A_h(v_i)}_h^2.
\end{equation}
The common-energy variational score instead uses the stability-compatible dual
norm
\begin{equation}
  R_{H,i}^2=\norm{A_h(v_i)}_{H_h^{-1}}^2,
  \qquad
  \norm{v_i-u_h}_{H_h}\le R_{H,i}.
  \label{eq:s-common-energy-estimator}
\end{equation}
Because every candidate in a case is bounded in the same energy norm, the
resulting upper bounds are directly comparable.  As a sensitivity analysis, a
candidate-adapted construction sets
\begin{equation}
  M_i=H_h+\frac{3\gamma}{4}\operatorname{diag}(v_i^{\odot2}),
  \qquad
  U_i^2=\frac{\norm{A_h(v_i)}_{M_i^{-1}}^2}{\mu_i},
  \label{eq:s-adapted-l2-estimator}
\end{equation}
where the analytic coercivity lower bound $\mu_i$ converts the
candidate-dependent energy enclosure to the common discrete $L^2$ norm.  The
shared comparator remains
\begin{equation}
  S_i=\norm{v_i-(w+\delta_w)}_h^2,
  \qquad DA_h(w)\delta_w=-A_h(w).
\end{equation}

% --- Begin inlined file: tables/table_rd_variational_baseline.tex ---
\begin{table}[H]
\centering
\caption{Candidate-wise residual estimators on the reaction--diffusion
population. Pairwise accuracy conditions on 42,761
hidden-reference non-ties; top-1 denominators contain all 1,920 cases.  Inverse
actions count the linear inverse actions used by the diagnostic after candidate
outputs are available.  Times are median post-inference components, not end-to-end
latencies.}
\label{tab:s-rd-variational}
\footnotesize
\setlength{\tabcolsep}{4pt}
\begin{tabular}{@{}lrrr@{}}
\toprule
Method & Non-tie pairwise & Exact top-1 & Tolerant top-1 \\
\midrule
Raw product residual
  & 25,553/42,761 (59.758\%) & 451/1,920 (23.490\%)
  & 780/1,920 (40.625\%) \\
Common-energy variational
  & 35,371/42,761 (82.718\%) & 1,111/1,920 (57.865\%)
  & 1,430/1,920 (74.479\%) \\
Candidate-adapted $L^2$ majorant
  & 35,369/42,761 (82.713\%) & 1,111/1,920 (57.865\%)
  & 1,430/1,920 (74.479\%) \\
Shared corrected proxy
  & 42,761/42,761 (100\%) & 1,917/1,920 (99.844\%)
  & 1,920/1,920 (100\%) \\
\bottomrule
\end{tabular}

\vspace{0.65em}

\begin{tabular}{@{}lrrr@{}}
\toprule
Method & Inverse actions & Mean regret & Median component time (s) \\
\midrule
Raw product residual & 0 & $4.699\times10^{-4}$ & 0.00339 \\
Common-energy variational & 8 & $6.274\times10^{-6}$ & 0.09246 \\
Candidate-adapted $L^2$ majorant & 8 & $6.274\times10^{-6}$ & 0.17310 \\
Shared corrected proxy & 1 & $3.397\times10^{-12}$ & 0.04057 \\
\bottomrule
\end{tabular}
\end{table}
% --- End inlined file: tables/table_rd_variational_baseline.tex ---

Using a stability-compatible residual norm materially improved selection:
relative to raw residual, the common-energy estimator gained 22.96 percentage
points pairwise and 34.38 points in exact top-1 accuracy, while reducing mean
regret by a factor of 74.9.  The shared corrected proxy nevertheless recovered
all 42,761 non-tie pair decisions and all 1,920 tolerance-optimal choices.  Its
three exact top-1 misses had a maximum regret of $4.19\times10^{-9}$.

The common-energy and candidate-adapted estimators selected the same top-1
candidate in all 1,920 cases and differed on only two of 53,760 pair
preferences.  The adapted construction therefore added matrix setup and almost
doubled its median component time without a decision improvement in this
population.  Each variational comparator requires eight residual inverse
actions per case; the shared comparator uses one library-level linearized
correction.

The variational scores are individual error upper bounds rather than pairwise
certificates.  Without competing lower bounds, selecting the smallest majorant
does not prove that its candidate has the smallest true error.  The comparison
concerns the stationary identity-task reaction--diffusion population. The
reported timings measure post-inference diagnostic components, beginning after
candidate outputs are available.
% --- End inlined file: supplement_sections/s9_variational_residual_baseline.tex ---
% --- Begin inlined file: supplement_sections/s10_ns2d_probe.tex ---
\section{Two-dimensional compressible-flow boundary probe}
\label{sec:s-ns2d}

We examined ten held-out trajectories from one public PDEBench periodic two-dimensional
compressible-flow file at Mach number $0.1$ and viscosities
$\eta=\zeta=0.01$.  The candidate library contains one public ten-frame FNO and one
public ten-frame push-forward-20 U-Net.  Their outputs were evaluated at
$64\times64$ resolution over eleven autoregressive future frames after a shared
ten-frame observation window.

The primary common state was the arithmetic mean after conversion from primitive
variables to density, two momenta and total energy.  The comparison operator used
periodic Rusanov fluxes, density-weighted viscosity and SSPRK2 integration.  Its
finite-difference Jacobian action used relative step $10^{-4}$.  Development
repeats at $10^{-3}$, $10^{-4}$ and $10^{-5}$ gave median full-error improvement
factors 1.970364, 1.970351 and 1.970351, respectively.  The public trajectory is
not a zero of this operator: its median defect relative to the common-state
defect was 0.412. The comparison therefore probes behaviour under operator--data
mismatch.

For each evaluation input, let $e_{\rm full}=\lVert t-q\rVert_M$ and let
$e_{\Dcal}=\lVert P_{\Dcal}^{M}(t-q)\rVert_M$.  Because the library contains two
candidates, $\Dcal$ has rank one. Table~\ref{tab:s-ns2d} reports all ten cases.
Correction improved both errors in every case, but contraction
was much stronger in the candidate-difference direction.  The perpendicular
component improved in only four cases.  The squared-gap identity closed to a
maximum absolute discrepancy of $4.05\times10^{-11}$.

\begin{table}[H]
  \centering
  \caption{Two-dimensional flow comparison and reconstruction. Improvement
  factors are common-state error divided by corrected-proxy error at
  $\alpha=1$.  Ranking contains one binary pair per input.}
  \label{tab:s-ns2d}
  \begin{tabular}{lrr}
    \toprule
    Quantity & Improved/correct & Median factor \\
    \midrule
    Full trajectory error & $10/10$ & 2.951 \\
    Candidate-difference error & $10/10$ & 22.238 \\
    Perpendicular error & $4/10$ & 0.894 \\
    Operator defect & $10/10$ & 32.767 \\
    Common-state top-1 & $6/10$ & --- \\
    Corrected-proxy top-1 & $10/10$ & --- \\
    Corrected proxy better than fixed oracle FNO & $4/10$ & --- \\
    \bottomrule
  \end{tabular}
\end{table}

The admissibility and defect rule selected $\alpha=1$ in every case.
The full-state error was minimized by $\alpha=1$ in eight cases and
$\alpha=0.75$ in two; the exploratory overshoot $\alpha=1.25$ worsened full
error relative to $\alpha=1$ in all ten.  Thus the correction direction and
scale were informative, although direct return of the corrected trajectory was
not uniformly preferable to returning the FNO.

The geometric-medoid sensitivity is also informative. With two
candidates, both medoid objectives are exactly tied, and the tie rule
selects the first candidate, the FNO.  Correction from this center improved full
error in $4/10$ cases (median factor 0.808) and comparison error in $5/10$
(median factor 1.667). The positive primary result is therefore conditional on
the arithmetic-mean common state. Because the FNO is also the hidden-reference
winner on all ten inputs, the result reflects comparison within this two-model
library rather than input-adaptive routing.
% --- End inlined file: supplement_sections/s10_ns2d_probe.tex ---
% --- Begin inlined file: supplement_sections/s11_ns1d_probe.tex ---
\section{One-dimensional compressible-flow validity probe}
\label{sec:s-ns1d}

This study used 35 samples from the official PDEBench held-out split, three
public checkpoints and one public one-dimensional compressible-flow file. The
shared correction used a fixed-substep Rusanov--SSPRK2 comparison operator rather
than the adaptive HLLC data generator, so the experiment also probes
operator--data mismatch.

The arithmetic-mean common state in conservative variables and full correction
$\delta_w$ were computed once per input.
Admissibility required positive density and pressure throughout the rollout.  A
truth-free backtracking rule considered the fixed multipliers
$\{0,0.25,0.5,0.75,1\}$ and selected the largest admissible value whose operator
defect did not exceed the common-state defect.  The selector received only the
grid, common state, correction, positivity limits and operator defect.
Every inadmissible full step remains an error in the strict decision counts.

\begin{table}[H]
  \centering
  \caption{One-dimensional shock admissibility. Error factors are
  common-state full error divided by selected-proxy full error.  The
  $\alpha=1$ row counts every inadmissible unit step as an incorrect decision.}
  \label{tab:s-ns1d}
  \begin{tabular}{lrrrr}
    \toprule
    Readout & Valid & Pairwise & Top-1 & Median error factor \\
    \midrule
    Common state ($\alpha=0$) & $35/35$ & $74.3\%$ & $40.0\%$ & 1.00 \\
    Unit correction ($\alpha=1$) & $22/35$ & $61.9\%$ & $62.9\%$ & 2.47 \\
    Truth-free backtracking & $35/35$ & $94.3\%$ & $91.4\%$ & 2.04 \\
    \bottomrule
  \end{tabular}
\end{table}

The selected multipliers were zero for one case, 0.25 for five, 0.5 for three,
0.75 for four and one for 22.  Backtracking improved full-state error in $33/35$
cases and comparison-subspace error in $34/35$, with median factors 2.04 and
5.20.  Comparison contraction exceeded full-state contraction in $31/35$ cases.
The exact finite-library gap identity closed to maximum relative discrepancy
$2.14\times10^{-15}$, and every observed ranking flip crossed its proxy margin.

One checkpoint was the winner on $33/35$ inputs, and the best fixed selector
reached $94.3\%$ top-1,
slightly above backtracking's $91.4\%$.  The median released-truth defect was
$0.102$ times the common-state operator defect, quantifying rather than removing
the operator mismatch.  Strong-shock conclusions therefore require an
operator-aligned generator study beyond this paper.
% --- End inlined file: supplement_sections/s11_ns1d_probe.tex ---